\DeclareRobustCommand{\sbseries}{\fontseries{sb}\selectfont}
\DeclareTextFontCommand{\textsb}{\sbseries}
\title{Two Sides of Meta-Learning Evaluation: \\In vs. Out of Distribution}
\author{%
Amrith Setlur$^{1}$\thanks{Authors contributed equally to this paper.} \qquad Oscar Li$^{2*}$ \qquad Virginia Smith$^{2}$ \\
  \footnotesize \texttt{asetlur@cs.cmu.edu} \quad \texttt{oscarli@cmu.edu}     \quad \texttt{smithv@cmu.edu}\quad \quad\\
    \small ${^1}$Language Technologies Institute \qquad $^2$Machine Learning Department\\ 
    \small School of Computer Science, Carnegie Mellon University \\
}
\begin{document}

\maketitle
\begin{abstract}
We categorize meta-learning evaluation into two settings: \textit{in-distribution} [ID], in which the train and test tasks are sampled \iid from the same underlying task distribution, and \textit{out-of-distribution} [OOD], in which they are not. While most meta-learning theory and some FSL applications follow the ID setting, we identify that most existing few-shot classification benchmarks instead  reflect OOD evaluation, as they use disjoint sets of train (base) and test (novel) classes for task generation. This discrepancy is problematic because---as we show on numerous benchmarks---meta-learning methods that perform better on existing OOD datasets may perform significantly worse in the ID setting. In addition, in the OOD setting, even though current FSL benchmarks seem befitting, our study highlights concerns in 1) reliably performing model selection for a given meta-learning method, and 2) consistently comparing the performance of different methods. To address these concerns, we provide suggestions on how to construct FSL benchmarks to allow for ID evaluation as well as more reliable OOD evaluation. Our work\footnote{Code available at \href{https://github.com/ars22/meta-learning-eval-id-vs-ood}{https://github.com/ars22/meta-learning-eval-id-vs-ood}.} aims to inform the meta-learning community about the importance and distinction of ID vs. OOD evaluation, as well as the subtleties of OOD evaluation with current  benchmarks.
\end{abstract}

\vspace{-6pt}
\vspace{-4pt}
\section{Introduction}
\label{sec:introduction}
\vspace{-6pt}

Meta-learning considers learning algorithms that can perform well over a distribution of tasks~\cite{hospedales2020meta,schmidhuber1987evolutionary}. To do so, a meta-learning method first learns from a set of tasks sampled from a training task distribution (\textit{meta-training}), and then evaluates the quality of the learned algorithm using tasks from a test task distribution (\textit{meta-testing}). The test task distribution can be the same as the training task distribution (a scenario we term \textit{in-distribution} generalization evaluation or ID evaluation) or a different task distribution (\textit{out-of-distribution} generalization evaluation or OOD evaluation).

In this work, we argue that there is a need to carefully consider current meta-learning practices in light of this ID vs. OOD categorization. In particular, meta-learning is commonly evaluated on few-shot learning (FSL) benchmarks, which aim to evaluate meta-learning methods’ ability to learn sample-efficient algorithms. Current benchmarks primarily focus on image classification and provide training tasks constructed from a set of train (base) classes that are completely disjoint and sometimes extremely different from the test (novel) classes used for test tasks. As we discuss in Section~\ref{sec:background}, this design choice imposes a natural shift in the train and test task distribution that makes current benchmarks reflective of OOD generalization. 
However, there are a number of reasons to also consider the distinct setting of ID evaluation. First, whether in terms of methodology or theory, many works motivate and analyze meta-learning under the assumption that train and test tasks are sampled \textit{iid} from the same distribution (see Section~\ref{sec:relwork}). Second, we identify a growing number of applications, such as federated learning, where there is in fact a need for sample-efficient algorithms that can perform ID generalization. Crucially, we show across numerous benchmarks that methods that perform well OOD may perform significantly worse in ID settings. Our results highlight that it is critical to clearly define which setting a researcher is targeting when developing new meta-learning methods, and we provide tools for modifying existing benchmarks to reflect both scenarios.

Beyond this, we also re-examine current OOD FSL benchmarks and analyze how the shift in the train and test task distributions may impact the reliability of OOD evaluations. We point out two  concerns which we believe are not widely considered in the meta-learning community. 
First, unlike areas such as domain generalization where model selection challenges are more widely discussed~\cite{gulrajani2020search,koh2020wilds}, we conduct to the best of our knowledge the first rigorous study demonstrating the difficulty of model selection due to the shift in the validation and test task distributions in FSL benchmarks.
Second, because the OOD scenario in meta-learning does not assume a specific test task distribution, there is room for different test distributions to be used for evaluation. We show that comparing which meta-learning method performs better can be unreliable not only over different OOD FSL benchmarks, but also within a single benchmark depending on the number of novel classes.

Our main contributions are:
\textbf{i)} We clearly outline both ID and OOD FSL evaluation scenarios and explain why most popular FSL benchmarks target OOD evaluation (Section~\ref{sec:background}).
\textbf{ii)} We provide realistic examples of the ID scenario and show that the performance of popular meta-learning methods can drastically differ in ID vs. OOD scenarios (Section~\ref{sec:ID}).
\textbf{iii)} For existing OOD FSL benchmarks, we highlight concerns with a) current model selection strategies for meta-learning methods, and b) the reliability of meta-learning method comparisons (Section~\ref{sec:OOD}).
\textbf{iv)} To remedy these concerns, we suggest suitable modifications to the current FSL benchmarks to allow for ID evaluation, and explain how to construct FSL benchmarks to provide more reliable OOD evaluation. Our hope in highlighting these evaluation concerns is for future researchers to consider them when evaluating newly proposed meta-learning methods or designing new FSL benchmarks.
\vspace{-4pt}
\section{Related Work}
\label{sec:relwork}
\vspace{-6pt}
\textbf{Current FSL benchmarks.} 
A plethora of few-shot image classification benchmarks (\eg \textit{mini}-ImageNet (\mini in short) \cite{vinyals2016matching}, \cifpaper \cite{bertinetto2018meta}) have been developed for FSL evaluation.
These benchmarks typically provide three disjoint sets of classes (base, validation, novel) taken from  standard classification datasets, \eg ImageNet or CIFAR-100~\cite{santoro2016meta,ravi2016optimization,vinyals2016matching}. 
Training, val, and test tasks are then constructed from these classes respectively, which, as we discuss in Section~\ref{sec:background}, \textit{can induce a shift in their corresponding task distributions}. 
Distribution mismatch can be particularly large with non-random splits created at the super class level, \eg \FCpaper \cite{oreshkin2018tadam}, or  dataset level, \eg Meta-Dataset \cite{triantafillou2019meta}. Recently, \citet{arnold2021embedding} propose an automated approach to construct different class splits from the same dataset to allow for varying degrees of task distribution shifts; \citet{triantafillou2021learning} separate the distribution shifts on Meta-Dataset into weak vs. strong generalization depending on whether the novel classes are taken from the used training datasets or not. Both works find that the meta-learning methods that perform better in one distribution shift scenario might perform worse in another, providing further evidence to our OOD performance comparison inconsistency argument in Section~\ref{subsec:inconsistencies-in-OOD}.
Beyond these canonical ways of task construction through a set of classes, \citet{ren2020flexible}  propose  new benchmarks in a new flexible few-shot learning (FFSL) setting, where the aim is to classify examples into a context instead of an object class. During testing, they perform OOD evaluation on unseen contexts. Inspired by their approach, we also provide an FSL benchmark where tasks are specified by contexts (Section~\ref{sec:ID}), though we differ by exploring ID evaluation. 


\textbf{Mismatch between meta-learning theory/methodology and evaluation.} 
Despite theoretical works which analyze meta-learning OOD generalization~\cite{du2020few, fallah2021generalization}, there are many theoretical meta-learning works \cite[e.g.,][]{al2021data,bai2020important,chen2020closer,khodak2019adaptive} that first assume the train and test tasks are \iid sampled from the same distribution despite validating their analyses on OOD FSL benchmarks.
Additionally, several popular meta-learning methods \cite{lee2019meta, finn2018probabilistic, rajeswaran2019meta} that are motivated in the ID scenario are largely evaluated on OOD benchmarks (see Appendix~\ref{sec:quote-meta-learning-methods}). 
Prior work of \citet{lee2019learning} explores ID vs. OOD; however they treat the FSL setup as ID when the disjoint base and novel classes are from the same dataset and OOD 
only when they are not (\eg base from ImageNet, novel from CUB). 
We emphasize that even the disjointedness of base and novel from the same dataset can create a task distribution shift and hence unlike \cite{lee2019learning} we do not consider current FSL benchmarks (like \mini) to be ID (Section~\ref{sec:background}). 

\textbf{FSL ID evaluation.} 
We are unaware of any FSL classification benchmarks that are explicitly advertised for ID evaluation. As we discuss in Section~\ref{sec:ID}, a growing number of works~\cite{chen2018federated,fallah2020personalized,jiang2019improving,khodak2019adaptive,lin2020collab} use meta-learning for personalized federated learning, but do not clearly discuss the in-distribution nature of these benchmarks nor how they differ from standard FSL benchmarks. In recent work, \citet{chen2020new}
extend current FSL benchmarks to evaluate their proposed method both ID and OOD, but only to further improve OOD performance on the original FSL benchmark's novel classes. Our work uses a similar setup for constructing ID evaluations with current OOD FSL benchmarks, but focuses on showing that certain meta-learning methods/design choices can improve OOD performance at the cost of ID performance.
Prior work \cite{gidaris2018dynamic, ren2019incremental} on incremental few-shot/low-shot learning also explores performance on both base and novel classes simultaneously. However, they differ in their methodology, as they use supervised learning (not meta-learning) to classify over the base classes. 


\textbf{OOD evaluation in other fields.} Train and test distribution shifts are 
also found in domain generalization \cite{blanchard2011generalizing,muandet2013domain} where the goal is to find a model that works well for a different test environment. Due to this shift, 
\citet{gulrajani2020search} specifically discuss difficulties in performing model selection in domain generalization benchmarks. 
They argue that it is the responsibility of the  method designer (not benchmark designer) to determine model selection strategies for their method, and propose several model selection methods, mainly targeting hyperparameter selection. Unlike domain generalization, FSL benchmarks often have a pre-determined disjoint set of validation classes to construct validation tasks, so the need for developing model selection methods may be less obvious. In Section~\ref{sec:OOD}, motivated by \cite{gulrajani2020search}, we explore strategies for model selection for meta-learning.
However, in contrast to~\cite{gulrajani2020search}, we focus on algorithm snapshot selection for meta-learning, which is required by hyperparameter selection as a subroutine (exact definitions see Section~\ref{sec:OOD}).

\vspace{-6pt}
\section{FSL benchmarks: Background \& Focus on OOD evaluation}
\label{sec:background}
\vspace{-4pt}
\textbf{{Background and notation.}} In this work, we employ a general definition of a meta-learning FSL task: a task $\calT$ is a distribution over the space of support and query dataset pairs $(S, Q)$, where $S$, $Q$ are two sets of examples from an example space $\calX \times \calY$. The support set $S$ is used by an algorithm to produce an adapted model which is then evaluated by the corresponding query set $Q$. Each time we interact with a task $\calT$, an $(S, Q)$ pair is sampled \iid from $\calT$, the mechanism for which depends on the specific application; we provide multiple examples below.  As discussed in Section~\ref{sec:introduction}, meta-learning aims to learn an algorithm over a distribution of tasks $\Prob(\calT)$. 
During meta-training, we assume access to $N$ pairs of $\{(S_i, Q_i)\}_{i \in [N]}$ sampled from a \textit{training task distribution\footnote{We note that ``training task distribution'' here refers to the true underlying distribution of training tasks, not the empirical distribution supported over the finite set of sampled training tasks.}} $\Prob_\tr(\calT)$ in the following way: first, $N$ tasks are sampled \iid from the training task $\calT_i \sim \Prob_\tr(\calT)$; then, for each task $\calT_i$, a support query pair $(S_i, Q_i) \sim \calT_i$ is sampled \iid.
During meta-testing, we assume there is a \textit{test task distribution} $\Prob_\te(\calT)$ where fresh $(S, Q)$ samples are similarly sampled based on $P_\te(\calT)$. We define a learning scenario to be in-distribution (ID) if $\Prob_\tr$ $=$ $\Prob_\te$ and out-of-distribution (OOD) if $\Prob_\tr$ $\neq$ $\Prob_\te$. Whenever $\Prob_\tr =\Prob_\te$, the induced train and test marginal distributions of $(S, Q)$ are also identical. 

\textbf{{Construction of $(S,Q)$ pairs in FSL.}} Most popular FSL benchmarks share a similar structure: they provide three disjoint sets of classes: base classes $\calC_B$, validation classes $\calC_V$, and novel classes $\calC_N$, where any class $c$ in these sets specifies a distribution $\Prob_c$ over the example space $\calX$. An $n$-way $k$-shot $q$-query task $\calT_{\bc}$ in these benchmarks is specified by a length $n$ non-repeated class tuple $\bc = (c_1, \ldots, c_n)$ where $\bc \in [\calC^n] \coloneqq$ $\{(d_1, \ldots, d_n)$ $\in$ $\calC^n: d_i \neq d_j, \forall i \neq j\}$. $\calT_{\bc}$ generates random $(S, Q)$ pairs in the following way: For every class $c_i$, $k$ support examples $S_i \sim (\Prob_{c_i})^k$ and $q$ query examples $Q_i \sim (\Prob_{c_i})^q$ are sampled. The support and query set is formed by the union of such labelled examples from each class: $S = \cup_{i=1}^n \{(x, i), \forall x \in S_i\}$, $Q = \cup_{i=1}^n \{(x, i), \forall x \in Q_i\}$.
By specifying the base and novel classes, the FSL benchmark has provided a collection of tasks for training $\{\calT_{\bc}: \bc \in [\calC_B^n]\}$ and test $\{\calT_{\bc}: \bc \in [\calC_N^n]\}$. These sets can be extremely large. For example, in \mini, which has $64$ base classes, the total number of $5$-way training tasks is {\small $\frac{64!}{(64-5)!} \approx 9.1  \times 10^8$}. However, it is not explicitly specified what underlying task distribution $\Prob(\calT)$  generates these sets of tasks. We believe this may have led prior work to discuss FSL benchmarks in the context of ID evaluation~\cite[e.g.,][]{al2021data,bai2020important,chen2020closer}, contrary to what we outline below.

\textbf{{Current FSL benchmarks target OOD evaluation.}} We now informally discuss our reasoning for arguing that current FSL benchmarks reflect OOD evaluation (we provide a more formal proof by contradiction in Appendix~\ref{app:formal-analysis}). 
In particular, if the training and test tasks in FSL benchmarks are indeed \iid sampled from the same underlying task distribution, then this underlying distribution must be induced by a distribution over class tuples of an even larger class set $\calC_L$ ($\calC_L \supseteq (\calC_B \cup \calC_V \cup \calC_N$)).

We consider the following dichotomy: 
\vspace{-0.5em}
\begin{enumerate}[label=\roman*),topsep=0pt,itemsep=-1ex,partopsep=1ex,parsep=1ex, leftmargin=15pt]
    \item when $|\calC_L| = \calO(nN)$: In this case, the total number of classes $nN$ covered by the sampled tasks (counting repetition) is greater than the number of underlying classes. Then with high probability, both the training and test tasks would each cover a significant portion of all the classes in $\calC_L$, making it extremely unlikely to have an empty intersection as in the current FSL benchmarks. 
    \item when $|\calC_L| = \Omega(nN)$: In this alternative case, the total number of classes (even counting repetitions) used by sampled tasks is still smaller than the number of underlying classes $|\calC_L|$. Thus the sampled tasks cannot cover all the underlying classes. Under this regime, the number of classes covered by the training tasks alone would scale linearly with ${N}$, as repeating an already-seen class in a new task sample is relatively rare. Since FSL benchmarks typically use a large number of training tasks during meta-training ($N>10^3$), it is improbable that all the training tasks would together only cover a very low number of classes ($64$ in the case of \mini).
\end{enumerate}

\textbf{Randomized class partitions do not imply randomized task partitions.} Another issue that may cause some to view the current FSL benchmarks as performing ID evaluation is that in some of these benchmarks, the base, val, novel classes are random partitions of \iid drawn classes from a class level distribution (specifically \minipaper, \cifpaper; but not \FCpaper, \tieredpaper as the classes are not partitioned randomly). The logic here is that in standard machine learning practice, randomly partitioning \iid sampled data points into train and test guarantees that the train and test samples are drawn \iid from the same underlying distribution. 
However, it is important to notice that \textit{the first class citizen in common FSL benchmarks is not a class, but a task} (represented by a class tuple). So, only a randomized partition of \iid sampled class tuples would guarantee in-distribution sampling. 

\textbf{How can we view $\Prob_{\tr}, \Prob_{\te}$ in common FSL benchmarks?} Based on the discussion above, we need to view train and test tasks in current FSL benchmarks as coming from different distributions, \ie $\Prob_{\tr} \neq \Prob_{\te}$.
In order to ensure that both sets of tasks are still sampled \iid from their respective distributions, it is convenient to view the train/test tasks as being \iid sampled from a uniform distribution over all possible train/test class tuples induced by $\calC_B/\calC_N$ \ie $\Prob_{\tr} = \Prob_{\calC_B} \coloneqq \Unif(\{\calT_{\bc}: \bc \in [\calC_B^n]\})$ and test $\Prob_{\te} = \Prob_{\calC_N} \coloneqq \Unif(\{\calT_{\bc}: \bc \in [\calC_N^n]\})$ --- a view which we will adopt in the rest of the paper.

\vspace{-12pt}
\section{Evaluating In-Distribution Performance}
\vspace{-12pt}
\label{sec:ID}

Although (as discussed in Section~\ref{sec:background}) current FSL benchmarks target OOD evaluation, we now explore example applications where ID generalization is instead required, and provide easily constructible benchmarks mirroring these scenarios. As we will show, this distinction is important because meta-learning methods may perform markedly different in ID vs OOD scenarios.

\newcommand{\id}{\texttt{id}}
\textbf{\textit{Example 1} (Federated Learning):}
Multiple works \cite{chen2018federated,fallah2020personalized,jiang2019improving,khodak2019adaptive,lin2020collab} have considered applying meta-learning methods in federated learning, in which the goal is to learn across a distributed network of devices~\cite{mcmahan2021advances,li2020federated}. Meta-learning can produce personalized models for unseen devices/users, improving over a single globally-learned model's performance. 
In this setting, a popular benchmark is the FEMNIST \cite{caldas2018leaf} handwriting recognition dataset.
For FEMNIST, we assume there exists a distribution of writers $\Prob(\id)$ in a federated network where each writer (with a unique \id) is associated with a few-shot classification problem to recognize over the different character classes the writer has written for.
We associate each writer $\id$ with a task $\calT_{\id}$ which randomly generates a support set with one random example per class and a query set with varying number of random examples per class.

\textbf{ID evaluation on FEMNIST.} We are given a total of $\sim3500$ writers sampled \iid from $\Prob(\id)$ and we randomly partition them into a $2509/538/538$ split for training, validation, and test tasks, following similar practices used in prior FL work~\cite{jiang2019improving, chen2018federated}. Note that this random split is performed at the task/\id ~ level. As such, we can treat the training and test tasks as being sampled \iid from the same task distribution, unlike current FSL benchmarks.




\textbf{\textit{Example 2} (Online Recommendation):}
\citet{ren2020flexible} propose the use of \textit{Zappos} \cite{yu2014fine} dataset as a meta-learning benchmark where each task is a binary classification of shoe images into an attribute context. This mimics an online shopping recommendation problem, where each user has different shoe preferences based on specific shoe attributes (hence a single global predictive model would not perform well), and the recommendation system 
must quickly learn a user's likes/dislikes through a few interactions.
In this simplified setup, we fix a universal set of shoe attributes $\calA$, and each user's preference is represented by a specific pair of unique attributes $\ba = (a_1, a_2),\, a_1 \neq a_2$. A task $\calT_{\ba}$ representing a user with attribute preference $\ba$ generates $2$-way $k$-shot $q$-query $(S, Q)$ pair by \iid sampling $k+q$ examples both from the set of images that carry both attributes in $\ba$ (positive examples) and from the set that does not (negative examples).
Our task distribution is a uniform distribution over tasks of all attribute pairs $\Prob_{\calA}(\calT) = \Unif(\{\calT_{\ba}: \ba \in [\calA^2])\}$. 

\textbf{ID evaluation on Zappos.} Unlike in \cite{ren2020flexible}, where Zappos is used to measure OOD performance by having disjoint train and test attributes $\calA_\tr \cap \calA_\te = \phi$, in this work we use Zappos for ID evaluations by \iid sampling both meta-train and meta-test tasks from the same distribution $\Prob_{\calA}(\calT)$.
Through this modification of \citeauthor{ren2020flexible}'s setup, we sample $1000$ / $50$ attribute pairs from an attribute set $|\calA|=36$ to construct $1000$ / $50$ training tasks (each with a randomly sampled $(S, Q)$) and evaluate ID performance on another $25000$ test tasks sampled in the same way. 
Our evaluation setup captures a realistic setting where the goal is to learn an algorithm that can generalize to the entire online shopper population despite being trained only on a randomly chosen subset of shoppers.




\textbf{Remark.} Even with ID evaluation it is possible to encounter unseen classes/attributes in meta-test tasks, specifically when the number of meta-train tasks is smaller than the number of underlying classes/attributes (Section~\ref{sec:background}). 
However, a suitable number of \iid sampled meta-train tasks is needed to ensure good performance, which would naturally encompass a larger set of meta-train classes than those considered by OOD FSL benchmarks.
For example, there are still $16$ attribute pairs from the test tasks that are unseen in the $1000$ training tasks on Zappos-ID, but the dataset is still in-distribution since the sampling distributions of train and test attribute pairs (and thus of tasks) are identical.

    
        
        

\begin{table}[t]
    \scriptsize
    \centering
    \caption{Ranking in () of meta-algorithms' test performance on \textbf{i)} ID benchmarks FEMNIST, Zappos-ID (with either 1000 or 50 training tasks); and \textbf{ii)} OOD FSL benchmark \minipaper.} 
    \vspace{0.25em}
    \setlength{\tabcolsep}{4.7pt}
    \label{tab:results-id}
    \begin{tabular}{|c|c|c|c|c||c|}\hline
    {Dataset /} & {FEMNIST} & \multicolumn{3}{c||}{ Zappos-ID} & \minipaper \\ \cline{2-5}
    {Method}  & *w1s & 2w10s & 2w5s (1000 train tasks) & 2w5s (50 train tasks) & 5w5s \\ 
    \Xhline{2\arrayrulewidth}
    {PN} 
        &  $^\mathbf{(1)}94.72 \pm 0.41 \%$  
        &  $^\mathbf{(1)}88.40 \pm 0.13 \%$             
        &  $^\mathbf{(1)}86.58 \pm 0.15 \%$   
        &  $^\mathbf{(1)}77.67 \pm 0.17 \%$   
        &  $^\mathbf{(3)}76.22 \pm 0.14 \%$  \\
    
    {Ridge} 
        &  $^\mathbf{(1)}94.71 \pm 0.42 \%$  
        &  $^\mathbf{(2)}88.01 \pm 0.14 \%$             
        &  $^\mathbf{(2)}85.56 \pm 0.16 \%$     
        &  $^\mathbf{(2)}74.75 \pm 0.16 \%$     
        &  $^\mathbf{(2)}77.20 \pm 0.15 \%$  \\
        
    {SVM} 
        &  $^\mathbf{(3)}94.22  \pm 0.45 \%$  
        &  $^\mathbf{(3)}87.75  \pm 0.14 \%$             
        &  $^\mathbf{(3)}85.12  \pm 0.16 \%$   
        &  $^\mathbf{(3)}74.06  \pm 0.17 \%$   
        &  $^\mathbf{(1)}77.72  \pm 0.15 \%$  \\
        
    FO-MAML 
        & N/A 
        &  $^\mathbf{(4)}81.90  \pm 0.14 \%$             
        &  $^\mathbf{(4)}80.14  \pm 0.15 \%$   
        &  $^\mathbf{(4)}69.85  \pm 0.18 \%$   
        &  $^\mathbf{(4)}75.96  \pm 0.17 \%$  \\ \hline
    \end{tabular}
    \vspace{-3em}
\end{table}

\textbf{ID benchmark results.} We evaluate the ID performance of four popular meta-learning methods: Prototypical Networks (\pn) \cite{snell2017prototypical}, MetaOptNet-SVM (\svm) \cite{lee2019meta}, MetaOptNet-Ridge Regression (\ridge) \cite{lee2019meta, bertinetto2018meta} and FOMAML \cite{finn2017model} on our identified ID FSL benchmarks (Table~\ref{tab:results-id}). 
Since FEMNIST's tasks have varying number of ways, FOMAML cannot be directly used due to the logit layer shape mismatch.
We note that the performance order of the four methods are consistent on all three ID benchmarks yet surprisingly \textbf{almost completely opposite} to the performance order observed on the OOD benchmark \mini (except for FOMAML). In terms of the actual performance differences, we notice that the ID performance advantage of \pn over \svm becomes particularly large ($>3\%$) when we reduce of the number of training tasks for Zappos-ID to 50; in contrast, on the OOD benchmark \mini, \svm instead outperforms \pn by 1.5\% (a significant number as many newly proposed meta-learning methods often only report improvements over previous methods by $1.5\%$ or less). These performance differences make it clear that the performance ranking flips between ID and OOD indeed exist, and as a result, these common OOD FSL benchmarks (like \mini) cannot be used to compare ID performance without modifications, giving further evidence to the danger of such practices (see Section~\ref{sec:relwork}). To further understand this phenomenon, we propose a way to also enable ID performance evaluation over these common OOD FSL benchmarks and see if there still exists a difference in ID, OOD performance orders.




\textbf{Modifying OOD FSL benchmarks for ID evaluation.} From our previous discussion, we have shown that we can think of FSL training tasks as being sampled \iid from the task distribution $\Prob_{\calC_B} \coloneqq \Unif(\{\calT_{\bc}: \bc \in [\calC_B^n]\})$. To conduct an in-distribution evaluation, 
we need to sample fresh test tasks \iid from $\Prob_{\calC_B}$. 
For a freshly sampled $\calT_{\bc} \sim \Prob_{\calC_B}$, we also need to  \iid  sample a fresh support query pair $(S, Q) \sim \calT_{\bc}$. To ensure independent sampling from the already seen meta-training  $(S, Q)$ pairs, we need to introduce new examples from $\Prob_c$ for each class $c \in \calC_B$.
Thus we construct slightly modified versions of four common FSL benchmarks \textbf{i)} {\minipapermod (\minimod)} \cite{vinyals2016matching}, in which we find ($\approx700$) unused examples (from ImageNet) for each base class and use them to evaluate the performance over $\Prob_{\calC_B}$. Here the original 600 examples of each base class are still only used for meta-training.
\textbf{ii)} {\cifpapermod (\cifmod)} \cite{oreshkin2018tadam}, {\FCpapermod (\FCmod)}  \cite{bertinetto2018meta}, and  {\tieredpapermod (\tieredmod)} \cite{ren2018meta}: As we don't have additional samples for base classes, we randomly partition each base class's current examples into an approximate $80/20$ split where the training tasks are constructed using the former and the latter is reserved for ID evaluation.
\begin{figure*}[t]
  \hfill
  \begin{minipage}{.22\textwidth}
    \includegraphics[width=\linewidth]{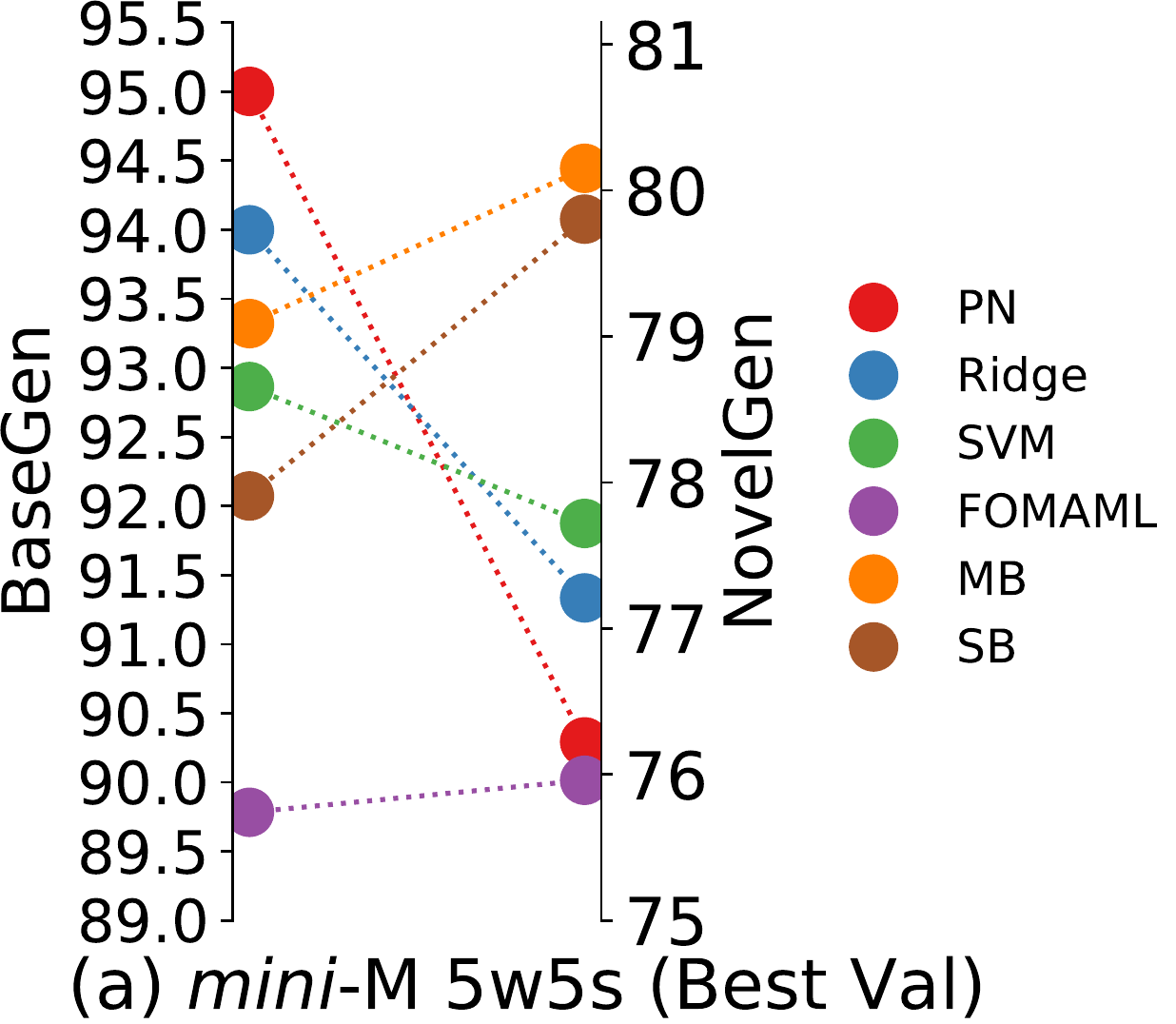}
  \end{minipage}
  \begin{minipage}{.225\textwidth}
    \includegraphics[width=\linewidth]{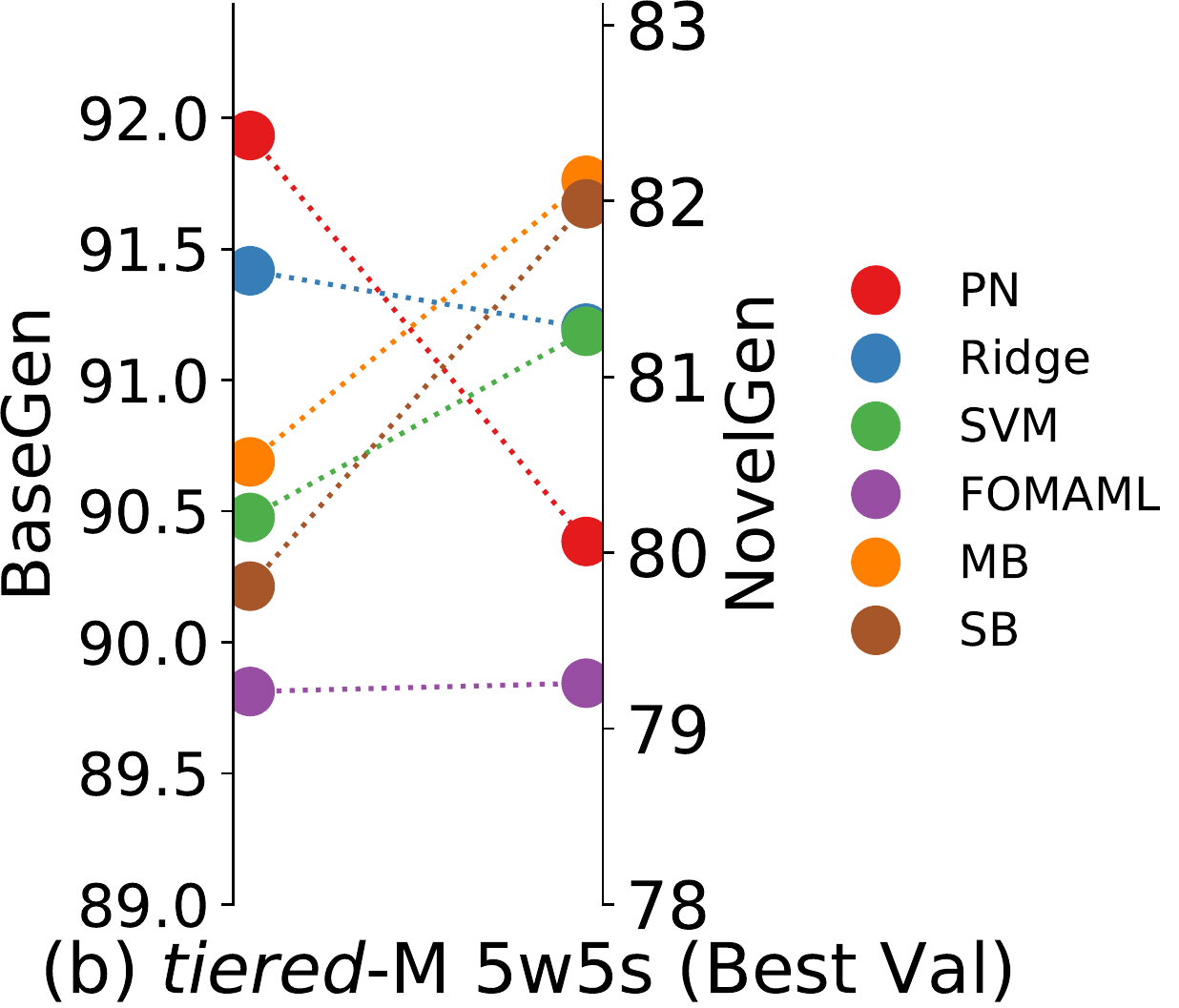}
  \end{minipage}
      \begin{minipage}{.265\textwidth}
    \includegraphics[width=\linewidth]{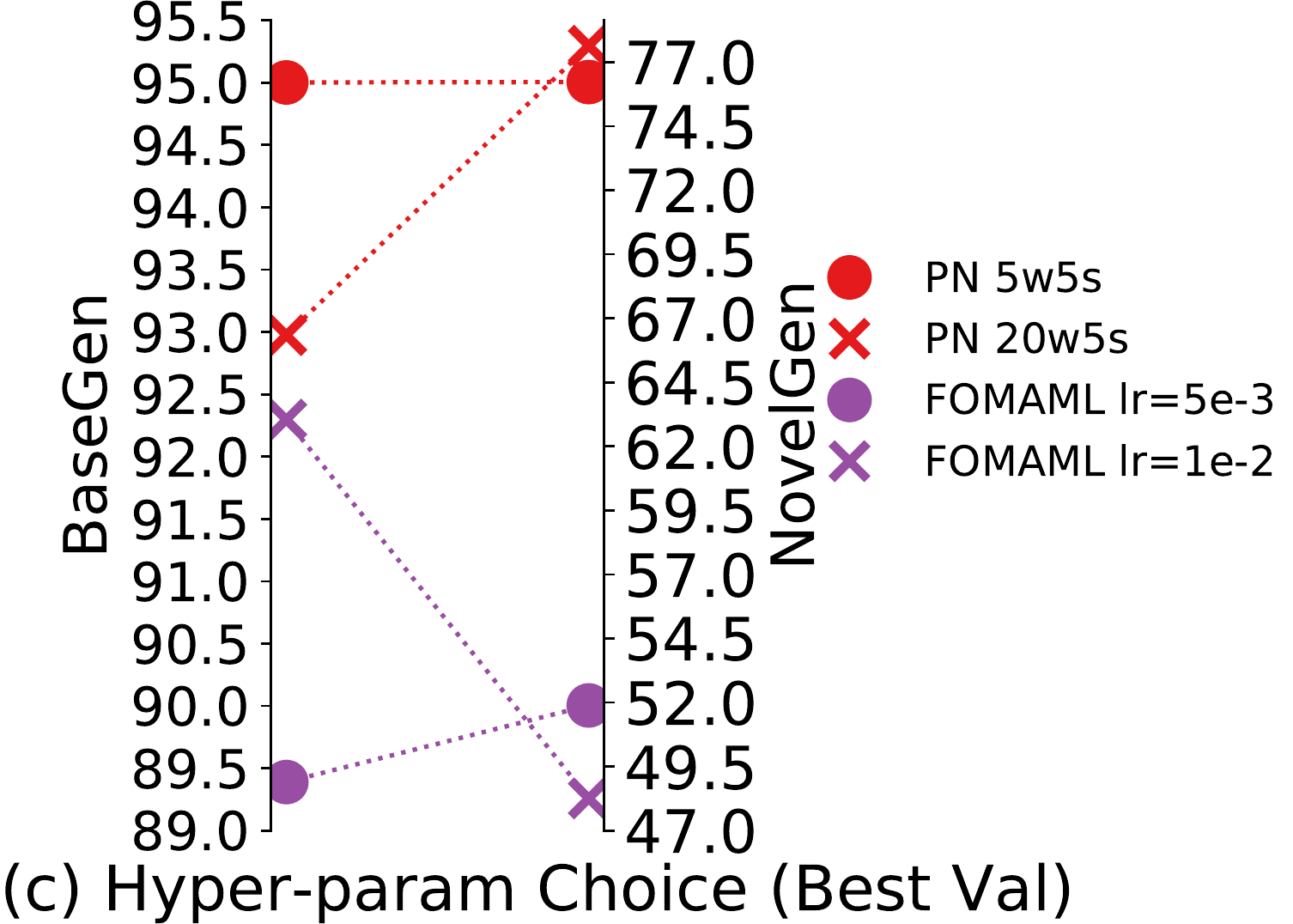}
  \end{minipage}
  \begin{minipage}{.24\textwidth}
    \includegraphics[width=\linewidth]{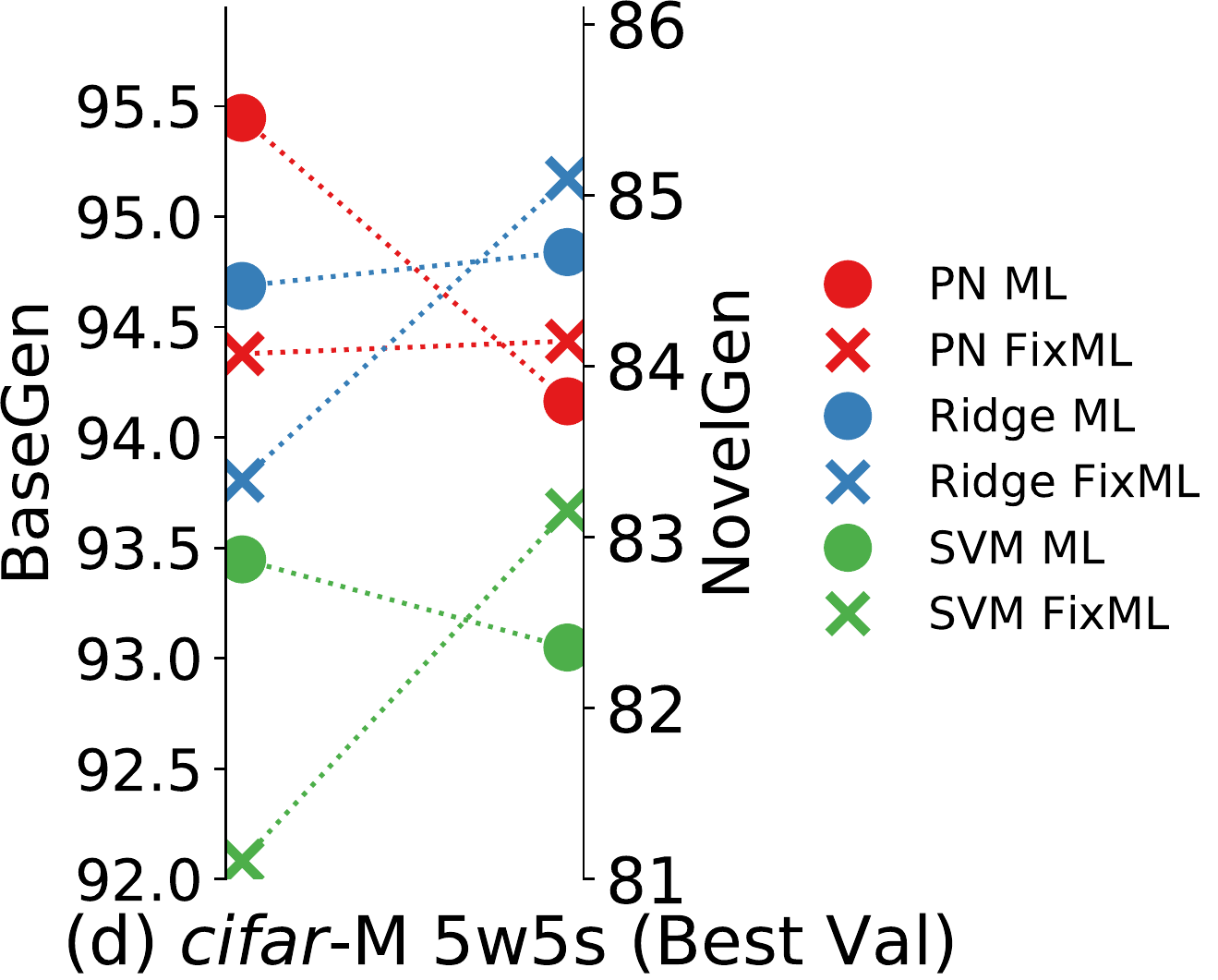} 
  \end{minipage} 
 \caption{We show the BaseGen and NovelGen performance tradeoff (for best validation snapshots): over the choice of a set of four meta-learning and two supervised pre-training methods on \minimod (a) and \tieredmod (b); over the number of ways to train \pn on \minimod and different learning
  rates to train \fomaml on \FCmod (c); over the use of \fixml $(S, Q)$ 
  generation strategy or not (\ml) with \svm, \ridge and \pn on \cifmod in (d). }
 \label{fig:id-ood-tradeoffs}
\end{figure*}

\textbf{ID vs. OOD conflicts still exist.} In addition to evaluating the four aforementioned meta-learning methods, we also consider two supervised pretraining methods: the supervised learning baseline (SB) \cite{chen2019closer} and Meta-Baseline (MB) \cite{chen2020new}.
Both of these methods have been shown to outperform many meta-learning methods
on the current OOD FSL benchmarks. 
For succintness, we call the generalization performance over the training task distribution $\Prob_{\calC_B}$ \textit{BaseGen} and performance over $\Prob_{\calC_N}$ \textit{NovelGen} (we will refer to performance on validation tasks from $\Prob_{\calC_V}$ \textit{ValGen} in a later section). 
We plot the BaseGen and NovelGen performances of the aforementioned methods on \minipapermod and \tieredpapermod in Figure~\ref{fig:id-ood-tradeoffs}(a)(b) (other datasets see Appendix~\ref{app:additional-id} Figure~\ref{fig:additional-id-plots}(a)(b)), with a dotted line connecting BaseGen and NovelGen value of same learned algorithm snapshot of a meta-learning method. 
We see that the \textbf{ID/OOD performance order flips still exist within current FSL benchmarks themselves} (the dotted lines of different methods cross each other frequently), showing that \textbf{the issue of improving OOD at the expense of ID is a common realistic phenomenon for multiple benchmarks}.
In addition, despite outperforming all meta-learning methods on NovelGen, the non-meta-learning methods SB and MB cannot beat the best meta-learning methods on BaseGen, which demonstrates their restrictive advantage only in the OOD setting.
More broadly speaking, these OOD FSL datasets are constructed with a belief that there exists a commonality between the training and test task distributions so that an algorithm capturing this commonality using the training distribution alone would also generalize to the test tasks. Thus the phenomenon of improving OOD while sacrificing ID means the algorithm has in some sense failed to learn the essence/commonality from the given training tasks.
Additionally, we see that the \textit{BaseGen ranking of the four meta-learning methods over these common FSL bechmarks are exactly the same as the ranking over our two newly proposed ID benchmarks in Table~\ref{tab:results-id}}. We suggest that researchers who want to also test on their proposed methods’ ID generalization performance can perform BaseGen evaluation method in our proposed way as it is a simple addition to their existing NovelGen evaluation setup.

\textbf{Certain OOD training choices might harm ID generalization.} In addition to checking the discrepancy of ID vs OOD generalization comparison of different meta-learning methods, we now ask, for a given meta-learning method, whether the meta-training choices found to be most effective on NovelGen would still be optimal for BaseGen. 
\begin{enumerate}[label=\roman*),topsep=-0.5ex,itemsep=-1ex,partopsep=0ex,parsep=1ex,leftmargin=15pt]
    \item \textbf{Meta-training hyperparameters:} In Figure~\ref{fig:id-ood-tradeoffs}(c) we see that a higher learning rate when training \fomaml finds algorithms that have higher BaseGen, whereas lower learning rates are better for NovelGen. Additionally, we see that the proposed technique of training with more ways for \pn in \cite{snell2017prototypical} can lead to better NovelGen performance but worse BaseGen performance than the $5$-way trained \pn whose training and test task configurations match.
    \item \textbf{Meta-training $(S, Q)$ generation alternatives:} It was found in \cite{setlur2020support} that always using the same support examples for every base class when constructing $S$ (\fixml) can improve NovelGen performance for several meta-learning methods over multiple OOD FSL benchmarks. However, restricting the training $(S, Q)$'s diversity sampled from $\Prob_{\calC_B}$ seems counter-intuitive and we wouldn't expect to improve the in-distribution generalization BaseGen. In Figure~\ref{fig:id-ood-tradeoffs}(d), we indeed see that \fixml only improves NovelGen performance at the expense of BaseGen by training over a much less diverse set of tasks.
\end{enumerate}
These two observations above caution us that \textbf{some training techniques to boost test accuracy on FSL benchmarks might only work for the OOD scenario but not the ID scenario.}

\vspace{-4pt}
\section{Challenges With Out-of-Distribution Evaluation}
\vspace{-6pt}
\label{sec:OOD}

After identifying some example benchmarks for ID evaluation, we come back to the current OOD FSL benchmarks to further examine some subtle challenges in OOD evaluation. Here, instead of focusing on the distinction between ID vs. OOD, we now look at some reliability and inconsistency problems \textit{within the OOD evaluation itself}. In particular, we highlight two concerns: \textbf{1)} Despite providing a validation set of classes for task construction during meta-validation, it is not clear whether this is a reliable way to perform model selection. \textbf{2)} As there is no requirement on how similar the training and test task distributions need to be in the OOD setting, there can be inconsistencies in method comparisons when the evaluation setup is slightly modified.

\vspace{-0.5em}
\subsection{Model Selection}
\vspace{-0.5em}
\label{subsec:model-selection}
To compare a set of meta-learning methods on the OOD test task distribution, one needs to \textit{select} a representative algorithm learned by each method to be evaluated on the test task distribution.
To do so, one should first decide what  set of hyperparameter configurations to choose from for a given meta-learning method (we define the entire set of all hyperparameters for a training run as a \textit{hyperparameter config}). For each such considered config, after its training is completed, we need to choose one of the algorithm snapshots saved during training to represent it (which we call \textit{snapshot selection}). Then the set of hyperparameter configs are compared based on their respectively selected snapshots, and a single config is then chosen among them (which we term \textit{hyperparameter selection}). This config's selected snapshot will represent the given meta-learning method to be finally evaluated on the test tasks. We refer to the combined problem of hyperparameter and snapshot selection as model selection. (See Appendix~\ref{app:subsec:snapshot-hyperparameter-selection-example} for a simplified example of snapshot and hyperparameter selection; see \ref{app:subsec:early-stopping} for the distinction between the commonly used technique \textit{early-stopping} and snapshot selection.)

\textbf{Snapshot vs Hyperparameter selection.} If the snapshot selection strategy cannot reliably identify a snapshot with good test performance, it is possible that some hyperparameter configs will be unfairly represented by a mischosen snapshot, leading to erroneous hyperparameter selection.
Thus we believe snapshot selection is a more fundamental problem and we focus our analyses on it (more on hyperparameter selection in Appendix~\ref{app:subsec:additional-ood-hp}).
In OOD FSL benchmarks, a reserved set of validation classes (disjoint from the set of novel classes) is provided, which as we have argued, provides tasks which are not sampled \iid from the test task distribution. As a result, we ask: \textit{is performing validation on the given validation tasks reliable, and, if not, are there other options?}
In contrast, there is typically no need for such concerns in standard ID supervised learning, where the validation set is sampled \iid from the test distribution.

\begin{figure*}[t]
 \hfill
  \begin{minipage}{.215\textwidth}
  \setlength{\abovecaptionskip}{2pt} 
    \includegraphics[width=\linewidth]{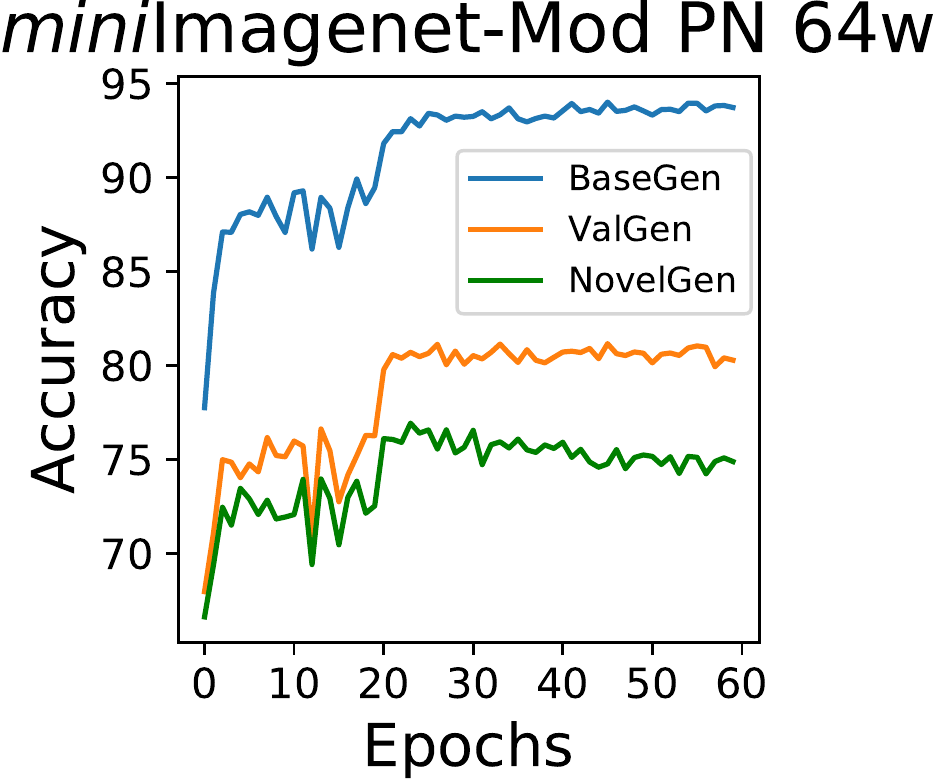}
    \caption*{\scriptsize (a)}
  \end{minipage}
    \begin{minipage}{0.30\textwidth}
    \setlength{\abovecaptionskip}{2pt} 
    \includegraphics[width=\linewidth]{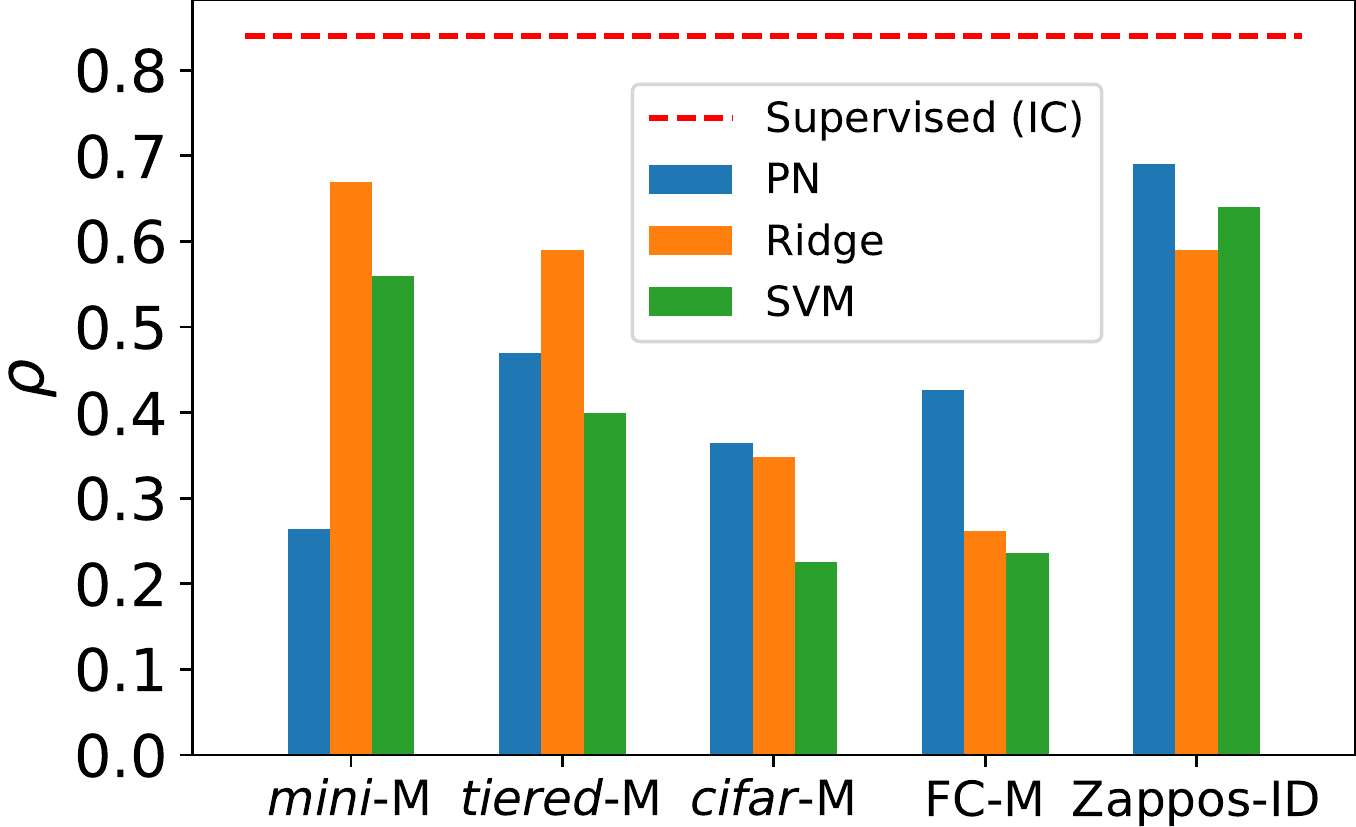}
    \caption*{\scriptsize $\quad\,\,$(b)}
    \end{minipage}
  \begin{minipage}{.24\textwidth}
  \setlength{\abovecaptionskip}{2pt} 
    \includegraphics[width=\linewidth]{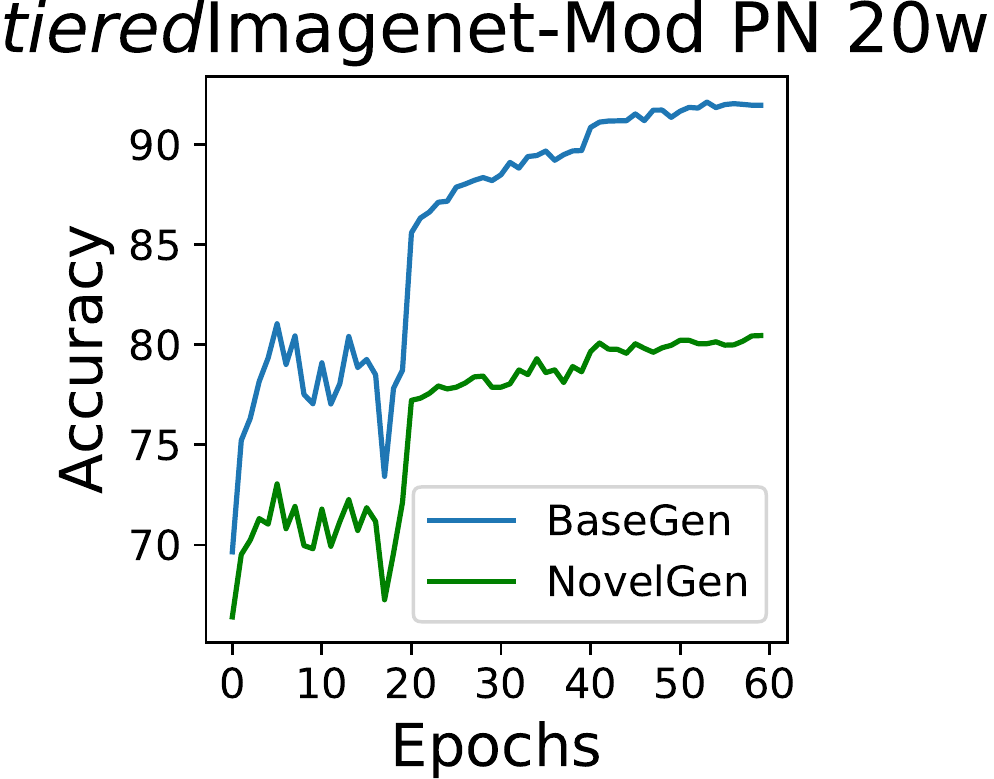}
    \caption*{\scriptsize (c)}
  \end{minipage}
  \begin{minipage}{.19\textwidth}
  \setlength{\abovecaptionskip}{2pt} 
    \includegraphics[width=\linewidth]{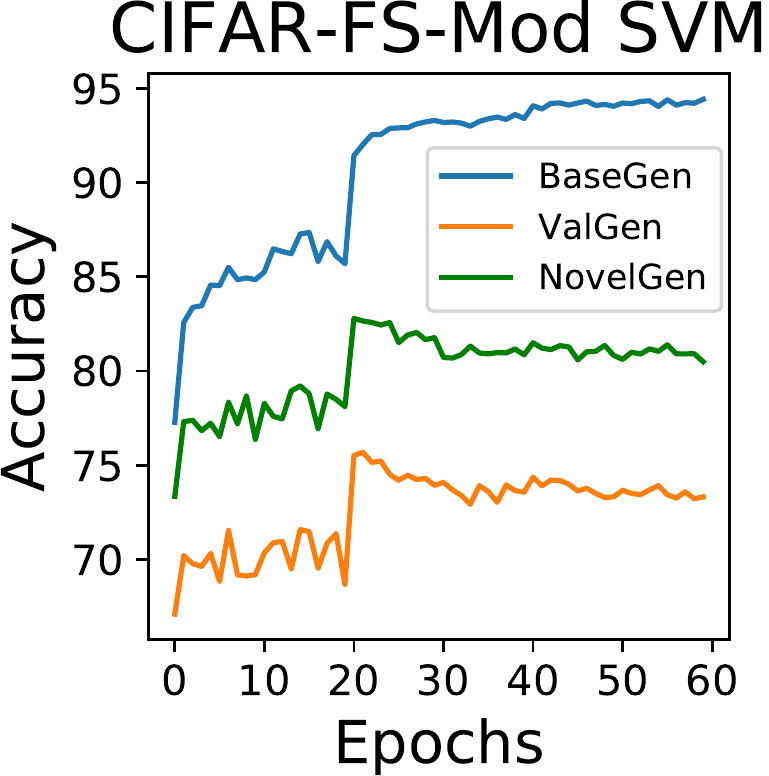}
    \caption*{\scriptsize$\quad\,$(d)}
  \end{minipage} \hfill
  \vspace{0.75em}
 \caption{We plot the (Base, Val, Novel)Gen progression of 64(max)-way trained \pn on \minimod in (a) and of \svm trained on \cifmod in (d). In (b) we compute the Kendall rank correlation coefficient ($\rho$) between the validation and test rankings of model snapshots for IC (trained on \cif) and algorithm snapshots for \pn, \svm, \ridge on OOD datasets \minimod (last 40 epochs), \cifmod (last 40), \FCmod (last 10), \tieredmod (last 20) and ID dataset Zappos-ID (last 30); in (c) we show the BaseGen tracking NovelGen for 20-way trained \pn on \tieredmod.}
 \label{fig:model-selection}
\end{figure*}

\subsubsection{\textit{Option 1}: Snapshot selection tradition using ValGen.} 
\label{subsubsec:snapshot-selection-valgen}

By providing a set of validation classes, it has become the default practice for meta-learning methods to use ValGen performance for snapshot selection. However, because $\Prob_{\calC_V}$ and $\Prob_{\calC_N}$ are different tasks distributions, it is not clear whether a higher ValGen performance is \textit{strongly correlated with} a higher NovelGen performance. In Figure~\ref{fig:model-selection}(a), we plot the progression of ValGen and NovelGen of a 64-way trained \pn on \minipapermod 5w5s tasks. We notice that ValGen is consistently higher than NovelGen, indicating that meta-val performance is not an accurate estimator of NovelGen.
More importantly, we see trendwise that while ValGen is generally non-decreasing, NovelGen starts to decrease after epoch 30. Thus the snapshot selected according to the best ValGen is not the snapshot with the best possible meta-test performance. In fact, this loss of NovelGen performance due to choosing the best ValGen model instead of the actual best NovelGen model can be particularly large, with values being 1.1\% for \svm, 1.2\% for \ridge, 0.7\% for \pn, and 0.9\% for \fomaml on the \FCpaper dataset. These performances losses for each method are especially concerning considering the differences among the best possible NovelGen performance of these different methods are often smaller than $1.5\%$.

\textbf{Ranking similarity analysis.} In light of the above observation, we ask a more general quantitative question: \textit{How \textbf{similar} is the ranking of the training snapshots using meta-val performance (ValGen) \textbf{to} the ranking using the meta-test performance (NovelGen)?}
To answer this, we compute the Kendall rank correlation coefficient\footnote{$\rho \in [-1,1]$, $\rho \approx 0$ means no correlation, while $\rho=1/\rho=-1$ means exactly same/opposite rankings. 
} 
$\rho$ \cite{kendall1948rank} between the ValGen and NovelGen rankings of algorithm snapshots trained on four OOD FSL benchmarks (Figure \ref{fig:model-selection}(b)) and our ID benchmark Zappos-ID whose validation and test tasks come from the same distribution $\Prob_{\calA}$.
More concretely, for each meta-learning method and dataset combination, we save the algorithm snapshots (one from each epoch) throughout meta-training and rank these algorithm snapshots according to their ValGen and NovelGen value respectively. Then $\rho$ is computed between these two rankings for this specific (meta-learning method,dataset) combination.
For snapshot selection through ValGen to work reliably, we need $\rho$ to be close to $1$. For context, we also compute $\rho$ for a standard supervised image classification problem (IC), where train, val and test examples are sampled from the same example-level distribution. 

\textbf{Unreliabity of ValGen snapshot selection.} From Figure \ref{fig:model-selection}(b), we see that when using validation samples generated \iid from the test distribution (Zappos-ID and supervised learning IC),
the value of $\rho$ is consistently higher than the OOD benchmarks, indicating the validation performance can more reliably track the trend of test performance in the ID setting than on the OOD FSL benchmarks.
In particular, for the \cifmod and \FCmod datasets, the ValGen ranking of algorithm snapshots seems to be only weakly correlated with the true meta-test ranking for all the meta-learning methods.
In fact, the meta-val and meta-test rankings of the most useful snapshots can sometimes even be negatively correlated ($\rho$ $\approx$ $-0.12$ $<$ $0$ over all snapshots after epoch 30 in the training scenario shown in Figure~\ref{fig:model-selection}(a)). 
These results show that on the OOD FSL benchmarks, snapshot selection using the pre-assigned validation tasks can sometimes be \textbf{unreliable/unable to identify a snapshot candidate with top-tier meta-test performance among all the snapshots.}

\subsubsection{\textit{Option 2}: Snapshot selection alternative using BaseGen.} 

Beyond using OOD validation tasks for snapshot selection, inspired by the domain generalization community \cite{koh2020wilds, gulrajani2020search}, we can alternatively also consider using the ID performance over the training task distribution for snapshot selection. Perhaps due to the lack of an ID evaluation setup in common FSL benchmarks, this possibility has not been widely considered. 
Enabled by our modifications of current FSL benchmarks, we can now evaluate the ID generalization performance (BaseGen) throughout training in addition to ValGen. 

We plot how BaseGen and NovelGen progress for meta-learning methods trained on two different datasets in \ref{fig:model-selection}(c)(d). Here we see that on \tieredmod, the BaseGen and NovelGen of \pn both increase fairly consistently; thus picking the algorithm snapshot with the highest BaseGen performance (roughly the end-of-training snapshot) would also give close-to-best NovelGen. 
However, on \cifmod, after the learning rate drop at epoch 20, \svm's BaseGen keeps improving while NovelGen starts deteriorating. In this case, selecting snapshots according to the best BaseGen would pick a much worse snapshot than picking according to the best ValGen.
(For concrete numbers of how much snapshot selection through BaseGen vs. ValGen could impact the chosen snapshot's NovelGen performance in each of these two cases, see Appendix~\ref{app:subsec:basegen-vs-valgen}.) This ambiguity of whether ID or OOD Validation snapshot selection is better has also been documented in domain generalization, where
\citet{gulrajani2020search} find ID model selection can perform better in some settings while \citet{koh2020wilds} find OOD validation model selection is better in others. Despite this ambiguity, we believe the commonly neglected \textbf{in-distribution (BaseGen) snapshot selection approach should be considered by users of OOD FSL benchmarks as a viable alternative} to the default ValGen selection approach in proper settings.

\vspace{-0.3em}
\subsection{Inconsistencies in Meta-learning Method Performance Comparisons}
\label{subsec:inconsistencies-in-OOD}
\vspace{-0.3em}

After discussing concerns regarding OOD model selection for \textit{a given meta-learning method}, we now analyze the reliability and consistency of conclusions drawn from comparing \textit{different meta-learning methods}' OOD performance on these benchmarks. In particular, we focus on two cases:

\textbf{\textit{Inconsistency example 1}: Limited number of novel classes in a single benchmark.} Since in OOD FSL we specifically care about the learned algorithms’ ability to quickly learn many unseen concepts, we should not be satisfied with an algorithm performing well only on tasks constructed from a small number of pre-selected novel classes. However, for many widely-used FSL benchmarks (\minipaper, \cifpaper, \FCpaper), only $20$ novel classes are used for meta-testing. Ideally, even if we don’t measure it, we would hope that our learned algorithm would also generalize to tasks made by other sets of classes different from the fixed small set of novel classes.

\textbf{Formal setup.} We suppose the existence of a much larger collection of classes $\calC_L$, where the novel classes $\calC_N$ used for meta-testing is a small random subset with each element class sampled uniformly and non-repeatedly from $\calC_L$ and fixed thereafter for NovelGen evaluation. Ideally, our goal is to evaluate an algorithm snapshot $\phi$ on the task distribution $\Prob_{\calC_L}$ (denote this performance by $A_\phi(\calC_L)$), yet during evaluation we only have access to the novel classes in $\calC_N$ and thus we can only compute the performance $A_\phi(\calC_N)$. It is easy to see that when we randomize over different choices of $\calC_N$, the expected performance over the sampled novel classes would match the true performance: $\E_{\calC_N} [A_\phi(\calC_N)] = A_\phi(\calC_L)$. However, when using a single randomly sampled novel set, the estimator $A_\phi(\calC_N)$ can have high variance (see Appendix~\ref{app:subsec:additional-ood-var}). 
Instead of relying on $A_\phi(\calC_N)$ to directly estimate $A_\phi(\calC_L)$, we ask a more relaxed question: for a pair of algorithms $A_{\phi_1}$ and $A_{\phi_2}$ (given by two meta-learning methods), if the true performance $A_{\phi_1}(\calC_L) - A_{\phi_2}(\calC_L) = \epsilon > 0$, how frequently will we observe an opposite conclusion \ie $\Prob(A_{\phi_1}(\calC_N) < A_{\phi_2}(\calC_N))$ over a randomly sampled $\calC_N$ (we call this event \textit{conclusion flip})?
Additionally, it is also possible that the observed performance difference on $\calC_N$ is greater than the true difference $\epsilon$ by some amount $\delta > 0$. In this case, the NovelGen observation would make the algorithm snapshot $\phi_1$ look better than $\phi_2$ more than it actually is on $\calC_L$. Thus we also ask what is the probability of $\Prob(A_{\phi_1}(\calC_N) - A_{\phi_2}(\calC_N) > \epsilon + \delta)$ and we call such events \textit{improvement exaggerations}. To answer both these questions empirically, we first suggest some larger class sets $\calC_L$ for \mini and \tiered.
For both we select unused classes from ImageNet disjoint from the base and validation classes. For \tiered, we use all the remaining $1000 - 351$ (base) $- 97$ (novel) $= 552$ classes
as $\calC_L$ while for \mini, we randomly choose $120$ or $552$ (to match $|\calC_L|$ in \tiered) unused classes. We fix these $\calC_L$ choices in the following analysis.

\begin{figure}[t]
    \hspace{2.5em}
    \begin{minipage}[b][][b]{0.22\textwidth}
        \setlength{\abovecaptionskip}{2pt} 
        \includegraphics[width=\linewidth]{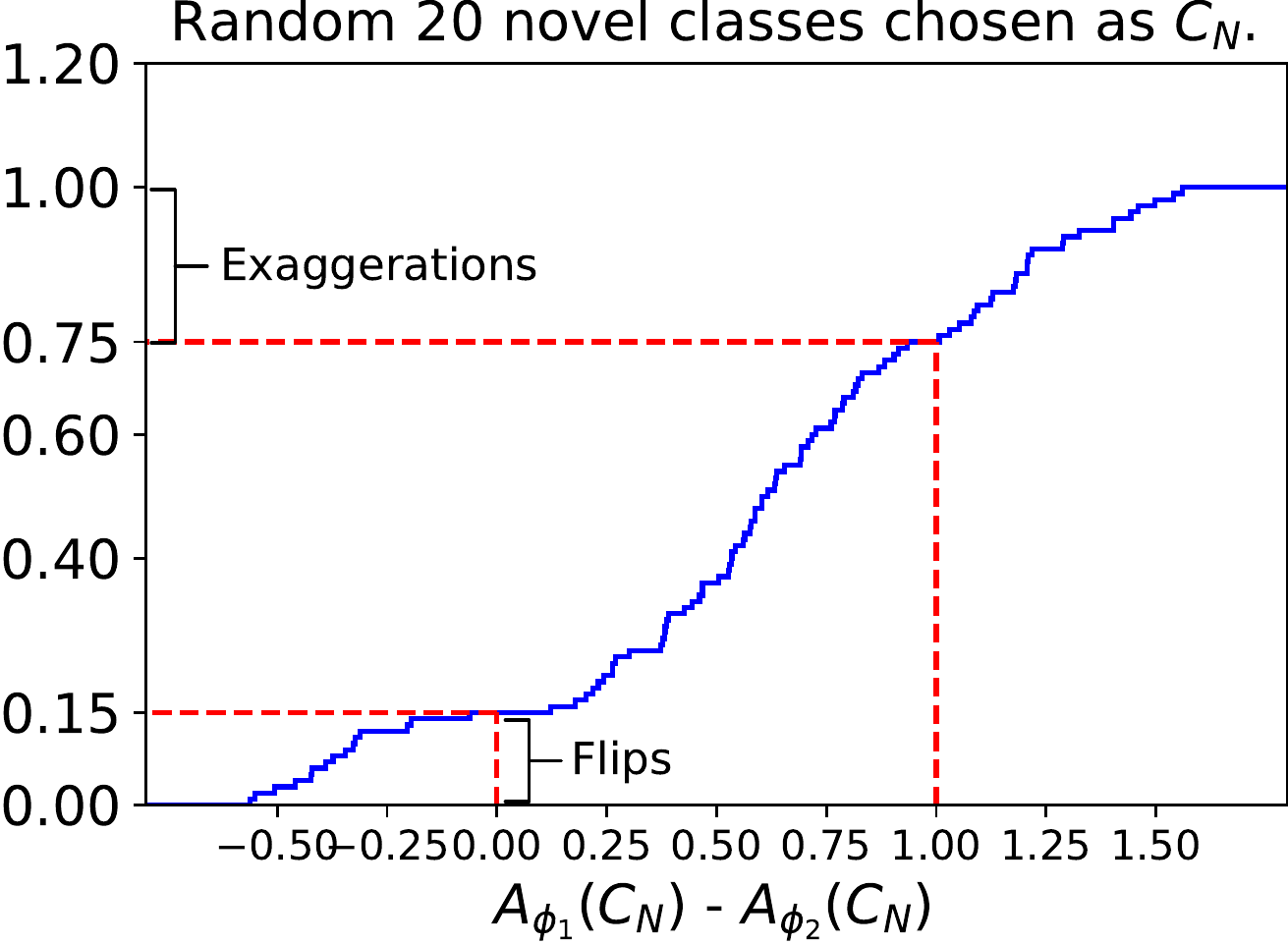}
        \caption*{\;\scriptsize (a)}
    \end{minipage}
    \begin{minipage}[b][][b]{0.36\textwidth}
    \setlength{\abovecaptionskip}{2pt} 
        \scriptsize
        \centering
        \setlength\tabcolsep{1pt}
    \begin{tabular}{c|c|c|ccc|cc}
     \multicolumn{1}{c}{} & \multicolumn{1}{c}{}  & \multicolumn{1}{c}{$\epsilon$} & \multicolumn{1}{c}{$\scriptscriptstyle |\calC_{\scalebox{.7}{$\scriptscriptstyle B$}}|$} & $\scriptscriptstyle |\calC_{\scalebox{.7}{$\scriptscriptstyle N$}}|$ & \multicolumn{1}{c}{$\scriptscriptstyle |\calC_{\scalebox{.7}{$\scriptscriptstyle L$}}|$} & {\tiny CF} & {\tiny IE,$\scriptscriptstyle 0.5\%$} \\ \midrule
    (IC) & \multirow{4}{*}{\rotatebox{90}{\minimod}} & 0.5\% & \multicolumn{3}{c|}{100 imgs/class} & 3\% & 1\% \\
    (i) & & 0.5\% & 64 & 20 & 120 & 15\% & 25\% \\
    (ii) & & 0.5\% & 64 & 20 & 552 & 20\% & 24\% \\
    (iii) & & 1.0\% & 64 & 20 & 552 & 10\% & 12\% \\ \midrule
    (iv) & \multirow{4}{*}{\rotatebox{90}{\tieredmod}} & 0.5\% & 64 & 20 & 552 & 20\% & 23\% \\
    (v) & & 0.5\% & 64 & 160 & 552  & 0\% & 7\% \\
    (vi) & & 0.5\% & 351 & 20 & 552 & 9\% & 15\% \\
    (vii) & & 0.5\% & 351 & 160 & 552 & 0\%  & 1\%  \\\bottomrule
    \end{tabular}
    \caption*{\scriptsize (b)}
    \end{minipage}
    \hspace{-2em}
    \begin{minipage}[b][][b]{0.36\textwidth}
    \setlength{\abovecaptionskip}{2pt} 
        \tiny
        \centering
        \setlength\tabcolsep{2pt}
    \begin{tabular}{c||c|c|c} 
    {\tiny Dataset}  & \multicolumn{2}{c|}{ZAPPOS-OOD} & \mini-OOD \\ \hline
    Methods & 2w10s & 2w5s & 5w5s \\ \Xhline{2\arrayrulewidth}
    {\tiny PN} &
    {\vtop{\hbox{\fontsize{6pt}{6pt} \strut $^{\mathbf{(1)}}80.51$ }\hbox{\strut \fontsize{0.5pt}{0.5pt}$\quad\, \pm0.13 \%$}}} &
    
    {\vtop{\hbox{\fontsize{6pt}{6pt} \strut $^{\mathbf{(1)}}74.67$ }\hbox{\strut \fontsize{0.5pt}{0.5pt}$\quad\, \pm0.14 \%$}}} &
    
    {\vtop{\hbox{\fontsize{6pt}{6pt} \strut $^{\mathbf{(3)}}76.22$ }\hbox{\strut \fontsize{0.5pt}{0.5pt}$\quad\, \pm0.14 \%$}}} \\
    
    {\tiny  Ridge} &
    
    {\vtop{\hbox{\fontsize{6pt}{6pt} \strut $^{\mathbf{(1)}}80.51$ }\hbox{\strut \fontsize{0.5pt}{0.5pt}$\quad\, \pm0.13 \%$}}} &
    
    {\vtop{\hbox{\fontsize{6pt}{6pt} \strut $^{\mathbf{(2)}}73.50$ }\hbox{\strut \fontsize{0.5pt}{0.5pt}$\quad\, \pm0.14 \%$}}} &
    
    {\vtop{\hbox{\fontsize{6pt}{6pt} \strut $^{\mathbf{(2)}}77.20$ }\hbox{\strut \fontsize{0.5pt}{0.5pt}$\quad\, \pm0.15 \%$}}} \\
    
    {\tiny SVM} &
    
    {\vtop{\hbox{\fontsize{6pt}{6pt} \strut $^{\mathbf{(3)}}79.70$ }\hbox{\strut \fontsize{0.5pt}{0.5pt}$\quad\, \pm0.13 \%$}}} &
    
    {\vtop{\hbox{\fontsize{6pt}{6pt} \strut $^{\mathbf{(3)}}72.84$ }\hbox{\strut \fontsize{0.5pt}{0.5pt}$\quad\, \pm0.14 \%$}}} &
    
    {\vtop{\hbox{\fontsize{6pt}{6pt} \strut $^{\mathbf{(1)}}77.72$ }\hbox{\strut \fontsize{0.5pt}{0.5pt}$\quad\, \pm0.15 \%$}}} \\

     \vtop{\hbox{\fontsize{6pt}{6pt} \;\;\;FO-}\hbox{\fontsize{6pt}{6pt} MAML}}&
    {\vtop{\fontsize{6pt}{6pt} \hbox{\strut $^{\mathbf{(4)}}72.89$ }\hbox{\strut \fontsize{0.5pt}{0.5pt}$\quad\, \pm0.14 \%$}}} &
    
    {\vtop{\hbox{\strut $^{\mathbf{(4)}}68.12$ }\hbox{\strut \fontsize{0.5pt}{0.5pt}$\quad\, \pm0.15 \%$}}} &
    
    {\vtop{\hbox{\strut $^{\mathbf{(4)}}75.96$ }\hbox{\strut \fontsize{0.5pt}{0.5pt}$\quad\, \pm0.17 \%$}}} \\\bottomrule
    \end{tabular}
    \caption*{\;\;\scriptsize(c)}
    \end{minipage} \hfill
    \vspace{1em}
    \caption{In (a), we show the CDF plot of $A_{\phi_1}(\calC_N)-A_{\phi_2}(\calC_N)$ over 100 randomly chosen sets of $\calC_N$ with 20 (out of total $|\calC_L|$$=$$120$) novel classes each. In (b), for different values of true performance difference $\epsilon$ and values of underlying class size $|\calC_L|$, training class size ($|\calC_B|$), and evaluation class size ($|\calC_N|$); we show the percentage of conclusion flips (CF) and improvement exaggerations (IE) with $\delta=0.5\%$ computed over 100 evaluations.
    In (c), we demonstrate the inconsistencies in performance rankings for \pn, \svm, \ridge, and \fomaml on two OOD benchmarks: Zappos-OOD and \mini-OOD. 
    } 
    \label{fig:conclusion_inconsistency}
    \vspace{-0.3em}
\end{figure}
\normalsize

\textbf{Checking the frequency of conclusion flips and exaggerations.} 
Figure~\ref{fig:conclusion_inconsistency}(a) shows the empirical CDF of the performance differences $A_{\phi_1}(\calC_N) - A_{\phi_2}(\calC_N)$ computed over $100$ randomly sampled size-20 novel class sets $\calC_N$ for a fixed pair of \pn and \ridge algorithm snapshots whose true performance difference over the larger 120 classes $\calC_L$ is $\epsilon=0.5\%$. In 15\% of the cases the performance order is flipped from the true order, while in 25\% of them improvements are exaggerated by more than $0.5\%$ (total difference greater than $1\%$). Moreover, for some of the performance order flips, the observed performance difference can be quite negative $<$ $-0.5\%$ thus significantly opposite to the true performance order.
(Here for each run we evaluate both methods on 20,000 tasks sampled from $\Prob_{\calC_N}$ in order to significantly reduce the randomness in estimating the true $A_{\phi_1}(\calC_N), A_{\phi_2}(\calC_N)$.)

\textbf{Comparison to supervised learning.} We check for the conclusion flip and improvement exaggeration frequency when only a random subset of the full test set (100 randomly drawn test images from each base class in \minimod) is used to compare two supervised learning image classification models 
with the same full test set performance difference of $\epsilon=0.5$ (row (IC) in Table \ref{fig:conclusion_inconsistency}(b)).
Here we see that \textbf{compared to supervised learning, the chances of getting an incorrect performance comparison {\normalfont (row (i) in Table~\ref{fig:conclusion_inconsistency}(b))} is much higher for the meta-learning OOD FSL benchmarks} when evaluating only on 20 randomly chosen novel classes (as done in several FSL benchmarks).

\textbf{Larger $\abs{\calC_L}$ makes it even less reliable but larger $\epsilon$ helps.} If we were to care about an even larger set of underlying classes ($|\calC_L|=552$) despite still using only 20 random novel classes for evaluation comparison, the conclusions are even less reliable (Table~\ref{fig:conclusion_inconsistency}(b) (i) vs (ii)).
On the other hand, we do see that the performance comparison becomes comparatively more consistent if the true performance difference $\epsilon$ is higher ($1\%$ in (iii) compared to $0.5\%$ in (ii)), despite that there still exists a statistically significant chance ($10\%$) of getting an opposite conclusion. 

\textbf{OOD evaluations in current FSL benchmarks.} In practice, because \textbf{1)} we never specify exactly what and how big the underlying set of classes that we care about is, and \textbf{2)} some of the recent meta-learning methods (\svm vs \pn on \cif in Table 2 of \cite{lee2019meta}, \textsc{r2-d2} vs \textsc{gnn} on \mini in Table 1 of \cite{bertinetto2018meta}, \fixml \cite{setlur2020support}) sometimes only improve over the prior works by $< 1\%$, we believe researchers should \textbf{be aware of the possibility of getting a performance conclusion that is inconsistent} over a single randomly chosen and fixed set of 20 novel classes used by some of these benchmarks.

\textbf{Actionable suggestions.} Since the size of the unknown underlying larger class set $\calC_L$ and the true performance difference $\epsilon$ might not be something one can directly control when designing the OOD benchmark, we now discuss two actionable choices that can reduce the chances of conclusion flips: 
\begin{enumerate}[label=\roman*),topsep=0pt,itemsep=-1ex,partopsep=1ex,parsep=1ex, leftmargin=15pt]
    \item \textbf{Use more novel classes in the evaluation:} By comparing (iv) vs (v) and (vi) vs (vii) in Table~\ref{fig:conclusion_inconsistency}, we see that the frequency of conclusion flips and improvement exaggerations are \textbf{much} lower when 160 novel classes are used as opposed to 20 when $|\calC_L|$ is the same.
    \item \textbf{Train on more base classes:} The \tiered dataset has more base classes ($351$ compared to $64$ for \mini) to train on. When comparing \pn and \ridge snapshots trained on a modified version of \tiered with fewer (randomly sampled 64 out of 351 to match \mini) base classes, we see that the CF frequency is twice as high compared to when $351$ base classes are used (Table~\ref{fig:conclusion_inconsistency}(b)(iv) vs (vi)).
\end{enumerate}
Based on these two trends, for more reliable comparisons of meta-learning methods' OOD performance we suggest using datasets like \tieredpaper and MetaDataset (both with much larger set of base and novel classes) 
in addition to the smaller benchmarks like \minipaper, \cifpaper, and \FCpaper, which some recent works \cite[e.g.,][]{oreshkin2018tadam,bertinetto2018meta} still solely rely upon.

\textbf{\textit{Inconsistency example 2}: Inconsistency across multiple OOD FSL benchmarks.}
Unlike the ID scenario where the training and test task distribution are the same, 
the similarity between training and test distributions in the OOD FSL benchmarks can vary significantly. Ideally, we want a meta-learning method to be consistently better on multiple OOD benchmarks with different type/degree of distribution shifts.
Since \citet{ren2020flexible} originally use the Zappos dataset for OOD evaluation, we also perform a similar evaluation on new attribute pairs based on their setup. At test time, we use an attribute set $\calA'$ disjoint from the one used in the Zappos-ID setup $\calA$, and sample attribute pairs from $\calA'$ only. This induces a test task distribution $\Prob_{\calA'}$ different from the training task distribution $\Prob_{\calA}$.
We evaluate different meta-learning methods on these Zappos-OOD tasks to see if the performance order is consistent with other OOD FSL benchmarks (Table~\ref{fig:conclusion_inconsistency}(c)).
Here we see that despite \svm outperforming \ridge and \pn on \mini NovelGen, the performance order of these three methods are completely flipped on Zappos-OOD.
Similar observations can be made from TADAM underperforming \pn in Table~2 of \cite{ren2020flexible} despite TADAM being shown to outperform \pn on the other more commonly-used FSL benchmarks.
\textit{This inconsistency over different types of OOD FSL benchmarks is in stark contrast to the consistency of performance rankings over the $6$ different ID benchmarks} (FEMNIST, Zappos-ID, and the BaseGen results of the $4$ current FSL benchmarks (Section~\ref{sec:ID})). Based on these findings, we caution meta-learning researchers to \textbf{be aware of such conclusion inconsistencies over different OOD FSL scenarios} and \textbf{reason carefully about the generality of their empirical findings} when using only specific types of OOD datasets.

\vspace{-8pt}
\section{Conclusion}
\label{sec:discussion}
\vspace{-10pt}
In this paper, we categorize meta few-shot learning evaluation into two settings: in-distribution (ID) and out-of-distribution (OOD). After explaining why common FSL benchmarks reflect OOD evaluation, we identify realistic needs for ID FSL evaluation and provide new benchmarks as well as suggestions on how to modify existing OOD FSL benchmarks to allow for ID evaluation. Through experiments performed on these ID benchmarks, we demonstrate a surprising phenomenon that many meta-learning methods/training techniques improve OOD performance while sacrificing ID performance.  
Beyond this, 
through quantitative analyses, we show that even in the OOD scenario, current FSL benchmarks may present subtle challenges with both model selection for a given meta-learning method and reliable performance comparisons of different methods. For these concerns, we provide initial suggestions and alternatives with the hope of alleviating these issues. Overall, we aim to raise awareness about the dichotomy of FSL evaluation and to motivate the meta-learning community to collectively reason about ways to improve both ID and OOD methodology and evaluation.

\newpage
\paragraph{Acknowledgements.}

This work was supported in part by the
National Science Foundation Grant IIS1838017, a Google
Faculty Award, a Facebook Faculty Award, and the CONIX
Research Center. Any opinions, findings, and conclusions
or recommendations expressed in this material are those
of the author(s) and do not necessarily reflect the NSF or any other funding agency.

\bibliographystyle{abbrvnat}
\bibliography{main}

\appendix
\onecolumn
\section*{Appendix}
\subsection*{Appendix Outline}
\ref{sec:quote-meta-learning-methods}. Assumption (ID) and Evaluation (OOD) Mismatch Examples

\ref{app:formal-analysis}. Formal Analysis on Why Current FSL Benchmarks Are OOD

\ref{app:sample-notation}. Overview of Notations for $(S, Q)$ Sampling in ID and OOD Benchmarks

\ref{app:zappos-and-hyperparameter}. Dataset Preprocessing and Hyperparameter Details

\ref{app:additional-id}. Additional Results on Evaluating ID Performance

\ref{app:additional-ood}. Additional Discussion and Results on OOD Evaluation

\section{Assumption (ID) and Evaluation (OOD) Mismatch Examples} 
\label{sec:quote-meta-learning-methods}

To illustrate the mismatch between meta-learning theory/methodology and evaluation (Section~\ref{sec:relwork}), below are some examples of works that motivate commonly-used meta-learning methods in the in-distribution setting, but largely evaluate empirical performance on OOD FSL benchmarks. For convenience, we provide exact lines from the original works that refer to the ID scenario. Our aim is not to draw attention to these works specifically, but to highlight the ubiquity of the divide between theory and practice in current meta-learning literature.

\vspace{-0.1in}
\begin{itemize}[leftmargin=20pt]
\setlength\itemsep{0.1em}
\item  \citet{lee2019meta} (Section 3.1): ``
It is often \textbf{assumed that the
training and test set are sampled from the same distribution}
and the domain is mapped to a feature space using an embedding model $f_\phi$ parameterized by $\phi$
''.


\item  \citet{rajeswaran2019meta} (Section 2.1): ``... a collection of \textbf{meta-training} tasks $\{\calT_i\}_{i=1}^M$ \textbf{drawn from} $P(\calT)$ ... At \textbf{meta-test} (deployment) time, when presented with a dataset $\calD^\tr_j$ corresponding to a \textbf{new task} $\calT_j \sim P(\calT)$.''
Notice that the training and test tasks are all sampled from the same task distribution $P(\calT)$.

\item \citet{finn2018probabilistic} (Section 3): ``
To do so, meta-learning algorithms require a set of \textbf{meta-training and meta-testing tasks drawn from some distribution $p(\calT)$}. The key assumption of learning-to-learn is that the tasks in this distribution share common ...
''.

\end{itemize}

\section{Formal Analysis on Why Current FSL Benchmarks Are OOD}

\label{app:formal-analysis}

In this section we provide formal arguments for the informal statements in Section~\ref{sec:background}, which explain why it is  improbable for the train and test tasks in the current FSL benchmarks to be \iid sampled from the same underlying distribution. 

\textbf{Formal Setup.} If we believe that both train and test tasks in current FSL benchmarks are sampled from the same underlying task distribution, then this shared task distribution (where each task is specified by a class tuple) must cover a larger set of underlying classes $\calC_L$ which would contain both the base classes and novel classes as subsets $\calC_L \supseteq (\calC_B \cup \calC_N)$. For convenience, we represent the classes in this set with  $\calC_L \coloneqq \{1, \ldots, L\}$ where the task distribution (from which the train and test tasks are \iid sampled) is induced by a probabilistic distribution over $n-$way non-repeating tuples $\bc \coloneqq (c_1, \ldots, c_n) \in [\calC_L^n]$, denoted by  $\Prob_L(\bc)$. To sample a task from this larger task distribution, we sample $\bc \sim \Prob_L$ and take the corresponding task $\calT_{\bc}$. Notice that this task distribution can be more general than $\Prob_{\calC_N}$ or $\Prob_{\calC_B}$, as $\Prob_L(\bc)$ does not have to be a uniform distribution over all possible class tuples.

\begin{definition}[Probability of observing a class in a single draw]
The indicator event of observing a class $i$ anywhere in a randomly drawn class tuple $\bc \sim \Prob_L$ can be represented by 
$\sum_{j=1}^n \I(c_j=i)$, since it is impossible to observe the same class more than once in the same tuple. We denote the probability of this event by $p_i \coloneqq \Prob\paren{\paren{\sum_{j=1}^n \I(c_j=i)} = 1} = \sum_{j=1}^n \Prob(\I(c_j=i) = 1) = \sum_{j=1}^n \Prob(c_j=i)$.
\label{defn:pi}
\end{definition}

\begin{lemma} The sum of the probability of observing a class $i$ in a single class tuple draw over all the classes $i \in \calC_L$ is equal to $n$, \ie $\sum_{i=1}^L p_i = n$.
\label{lemma:pi-sum}
\end{lemma}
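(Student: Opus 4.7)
The plan is to prove the identity by a direct double-counting argument that swaps the order of summation and then uses the fact that the marginal distribution of each coordinate of the class tuple is a probability distribution on $\calC_L$.

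First, I would start from the definition of $p_i$ given in Definition~\ref{defn:pi}, namely $p_i = \sum_{j=1}^n \Prob(c_j = i)$, and plug it into the sum $\sum_{i=1}^L p_i$. This gives a finite double sum $\sum_{i=1}^L \sum_{j=1}^n \Prob(c_j = i)$, and since all terms are nonnegative and both index sets are finite, I can freely interchange the order of summation to obtain $\sum_{j=1}^n \sum_{i=1}^L \Prob(c_j = i)$.

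Next, for each fixed position $j \in \{1, \ldots, n\}$, the coordinate $c_j$ is a random variable that takes values in $\calC_L = \{1, \ldots, L\}$ (because $\bc \in [\calC_L^n]$ by definition). Hence its marginal distribution satisfies $\sum_{i=1}^L \Prob(c_j = i) = 1$. Substituting this back yields $\sum_{j=1}^n 1 = n$, which is the claim.

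There is essentially no obstacle here; the only subtle point worth stating explicitly is \emph{why} the marginals sum to $1$. This follows because the non-repetition constraint on $\bc$ just restricts the joint distribution to a subset of $\calC_L^n$, but does not move any coordinate outside $\calC_L$, so the marginal of $c_j$ is still a bona fide probability measure on $\calC_L$. I would include one brief sentence noting this, and I would not need to invoke any property specific to how $\Prob_L$ was constructed (in particular, $\Prob_L$ need not be uniform). The argument is thus a two-line calculation plus a one-line justification of the marginal identity.
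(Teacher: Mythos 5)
Your proposal is correct and follows essentially the same route as the paper's proof: substitute the definition of $p_i$, exchange the two finite sums, and use that $\sum_{i=1}^L \Prob(c_j = i) = 1$ for each position $j$. The extra sentence justifying why each coordinate's marginal is a genuine probability measure on $\calC_L$ is a harmless (and slightly more careful) addition, but introduces no new ideas beyond what the paper does.
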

\begin{proof}[Proof of Lemma~\ref{lemma:pi-sum}]
 We know that $\sum_{i=1}^L p_i = \sum_{i=1}^L \sum_{j=1}^n  \Prob(c_j=i)$ by simply plugging in the definition of $p_i$. Since $\sum_{i=1}^L \Prob(c_j=i)=1$, by exchanging the summations we get,  $\sum_{i=1}^L p_i = \sum_{j=1}^n \sum_{i=1}^L \Prob(c_j=i) =  \sum_{j=1}^n [1] = n$. 
\end{proof}

\begin{assumption}[Every class must have nonzero probability to be sampled]
To avoid degeneracy, we assume that each class has a minimum non-zero probability of being sampled in a class tuple: $\forall i \in \{1, \ldots, L\},\, 1\geq p_i \geq \frac{\gamma n}{L}$, where $\gamma \in (0,1]$. Notice $\gamma$ is strictly greater than 0 to avoid the degenerate case where a class would almost surely never be sampled in any class tuple. If there exists such a class, then we can prune the set $\calC_L$ accordingly and use the pruned set (which now has every class with nonzero probability) as our new $\calC_L$.
\label{assm:pi}
\end{assumption}

\textbf{Remark.} Note that the task distribution induced by the probability values $\{p_i\}_{i=1}^L$ in  Assumption~\ref{assm:pi} is a relaxed form of the uniform distribution $\Unif(
\{\calT_{\bc} \coloneqq \bc \in [\calC_L^n]\})$ over all non-repeating class tuples spawned by $\calC_L$. 
This case can be recovered by setting
$p_i = \frac{n}{L},\; \forall i \in [L]$.







Suppose there are $N$ total \iid random draws $\{\bc^{(k)}\}_{k=1}^N$ of class tuples from $\Prob_L$ (every $\bc^{(k)} \in [\calC_L^n]$), then the event of observing a class $i$ in any of these $N$ class tuple draws is exactly the complement of the event that the class does not appear in any of these tuples. 

\begin{definition}[Observing a class at least once in $N$ draws]
We denote the indicator random variable of observing a class $i$ in any of the $N$ draws by 
\begin{align}
 X_{i,N} \coloneqq 1 - \I\paren{\paren{\sum_{k=1}^N \I(i \in \bc^{(k)})} = 0} \in \{0, 1\}   .
 \label{eq:xin-defn}
\end{align}
Then we have $\E[X_{i,N}] = \Prob(X_{i,N} = 1) = 1- (1-p_i)^N$. We denote the random variable representing the total number of unique classes observed in $N$ draws as $Z$, which can be expressed by
\begin{align}
 Z = \sum_{i=1}^L X_{i,N}    .
\label{eq:z-defn}
\end{align}
\label{defn:xin-z}
\end{definition}
\vspace{-.1in}

\textbf{Remark.} We note that the total number of unique classes seen $(Z)$ in $N$ \iid draws \textbf{1)} must have at least $n$ classes (even after a single class tuple is sampled, there would already be $n$ different classes seen) and cannot be greater than the total number of classes possible, \ie $Z \in [n, L]$, and \textbf{2)} cannot be greater than the total number of (possibly overlapping) classes drawn, \ie $Z \le nN$.

\begin{lemma} For notational convenience, let $q_i \coloneqq 1 - p_i$. Then, by Equation~\eqref{eq:xin-defn} and Assumption~\ref{assm:pi} we have:
\begin{enumerate}[label=(\alph*)]
    \item  $\E[Z] = L - \sum_{i=1}^L (1-p_i)^N = L - \sum_{i=1}^L q_i^N, \qquad \qquad \quad \; ~\refstepcounter{equation}(\theequation)\label{eq:ez}$
    \item  For $\{q_i\}_{i=1}^L$, $0 \leq\; q_i \leq\; 1 - \frac{\gamma n}{L}$ and  
    $\sum_{i=1}^L q_i = L - n.  \qquad \; ~\refstepcounter{equation}(\theequation)\label{eq:q-condition}$
\end{enumerate}
\label{lemma:EZ}
\end{lemma}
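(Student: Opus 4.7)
The plan is to dispatch both parts of the lemma by essentially unpacking definitions, together with linearity of expectation and the identity from Lemma~\ref{lemma:pi-sum}. No new machinery is needed: part (a) reduces to a per-class complementary-event computation that exploits the \iid structure of the $N$ class-tuple draws, and part (b) reads off directly from Assumption~\ref{assm:pi} together with Lemma~\ref{lemma:pi-sum}.

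For part (a), I would first invoke Equation~\eqref{eq:z-defn} and linearity of expectation to write $\E[Z] = \sum_{i=1}^L \E[X_{i,N}]$. Then for each fixed class $i \in [L]$, I would compute $\E[X_{i,N}] = \Prob(X_{i,N}=1)$ by examining the complementary event that class $i$ is never observed in any of the $N$ tuples. Since $\bc^{(1)}, \ldots, \bc^{(N)}$ are drawn \iid from $\Prob_L$, the events $\{i \in \bc^{(k)}\}_{k=1}^N$ are mutually independent Bernoulli variables, each with success probability $p_i$ by Definition~\ref{defn:pi}. Hence the probability that class $i$ is absent from all $N$ tuples factors as $(1-p_i)^N = q_i^N$, yielding $\E[X_{i,N}] = 1 - q_i^N$; summing over $i \in [L]$ gives $\E[Z] = L - \sum_{i=1}^L q_i^N$, which is exactly \eqref{eq:ez}.

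For part (b), the two-sided bound on $q_i$ is immediate from Assumption~\ref{assm:pi}: the stated range $\tfrac{\gamma n}{L} \leq p_i \leq 1$ becomes $0 \leq q_i \leq 1 - \tfrac{\gamma n}{L}$ after substituting $q_i = 1 - p_i$. The sum identity then follows from $\sum_{i=1}^L q_i = \sum_{i=1}^L (1 - p_i) = L - \sum_{i=1}^L p_i = L - n$, where the final equality is precisely Lemma~\ref{lemma:pi-sum}. Together these give \eqref{eq:q-condition}.

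Because the argument is this direct, there is no genuine obstacle, but one subtlety is worth flagging in part (a): the $n$ coordinates \emph{within} a single tuple $\bc^{(k)}$ need not be independent (the underlying distribution $\Prob_L$ is arbitrary over non-repeating tuples, and indeed non-repetition forces negative dependence), but the events $\{i \in \bc^{(k)}\}$ \emph{across} different $k$ are independent precisely because the tuples themselves are drawn \iid. Keeping that distinction clean is what permits the clean factorization into $q_i^N$ and is the only step where a careless reader could slip up.
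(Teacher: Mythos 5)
Your proof is correct and follows essentially the same route as the paper, which treats the lemma as an immediate consequence of the identity $\E[X_{i,N}] = 1-(1-p_i)^N$ (already recorded in Definition~\ref{defn:xin-z}), linearity of expectation over $Z=\sum_{i=1}^L X_{i,N}$, Assumption~\ref{assm:pi}, and Lemma~\ref{lemma:pi-sum}. Your remark distinguishing within-tuple dependence from across-tuple independence is a correct and worthwhile clarification of why the factorization into $(1-p_i)^N$ is legitimate.
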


Now that we have set up the problem formulation, we provide Theorems~\ref{thm:lower-bound-ez},~\ref{thm:upper-bound-vz} describing properties of $\E[Z]$ and $\V[Z]$ which we will use to analyze the dichotomy described in the main paper (Section~\ref{sec:background}).

\subsection{Lower Bound on \texorpdfstring{$\E[Z]$}{EZ}}

To achieve a lower bound of $\E[Z]$, we need to analyze the worst case class tuple distribution that makes the value $L - \sum_{i=1}^L q_i^N$ as small as possible. This amounts to maximizing the value of $\sum_{i=1}^L q_i^N$ under the constraints for $\{q_i\}_{i=1}^L$ described in Lemma~\ref{lemma:EZ}. We present an upper bound for this constrained maximization objective below. 

\begin{theorem}[Lower bound on $\E\brck{Z}$]
The optimal value of the following constrained optimization problem in \eqref{eq:q-objective} is upper bounded by $L\paren{1-\frac{\gamma n}{L}}^N$. 
\begin{align}
    & \max_{\{q_i\}_{i=1}^L} \quad \sum_{i=1}^L q_i^N \label{eq:q-objective} \\
    \textrm{subject to}\quad  &    0 \le\; q_i \le \; 1 - \frac{\gamma n}{L}, \; \forall i \in [L] \nonumber \\
    & \quad \sum_{i=1}^L q_i = L - n \nonumber 
\end{align}
As a result, for $Z$ defined in \eqref{eq:z-defn}, directly applying \eqref{eq:ez} we get $\E[Z] \geq L\paren{1-(1-\frac{\gamma n}{L})^N}$. 
\label{thm:lower-bound-ez}
\end{theorem}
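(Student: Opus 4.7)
My plan is to prove the upper bound on the constrained maximum directly, without invoking a vertex/extreme-point characterization (although such an argument would also work since the objective is convex). The key observation is that for $N \ge 1$ the constraint $q_i \le M$, where $M \coloneqq 1 - \gamma n/L$, immediately gives the pointwise inequality $q_i^N = q_i \cdot q_i^{N-1} \le q_i \cdot M^{N-1}$. This reduces the problem of bounding a sum of $N$-th powers to bounding a weighted sum that is linear in the $q_i$'s.

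Summing this pointwise bound and applying the equality constraint $\sum_{i=1}^L q_i = L - n$ yields
\begin{equation*}
\sum_{i=1}^L q_i^N \;\le\; M^{N-1}\sum_{i=1}^L q_i \;=\; (L-n)\,M^{N-1}.
\end{equation*}
To finish, I would compare $(L-n)M^{N-1}$ with the target bound $L M^N = (L-\gamma n)M^{N-1}$. Since Assumption~\ref{assm:pi} forces $\gamma \in (0,1]$, we have $\gamma n \le n$ and hence $L - n \le L - \gamma n$, so $(L-n)M^{N-1} \le (L-\gamma n)M^{N-1} = L M^N = L(1-\gamma n/L)^N$. This establishes the bound on the constrained optimization problem.

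The second claim follows by plugging into Lemma~\ref{lemma:EZ}(a): since $\E[Z] = L - \sum_{i=1}^L q_i^N$ and we have just shown $\sum_{i=1}^L q_i^N \le L(1 - \gamma n/L)^N$ over the entire feasible region for $\{q_i\}$, we conclude
\begin{equation*}
\E[Z] \;\ge\; L - L\paren{1 - \tfrac{\gamma n}{L}}^N \;=\; L\paren{1 - \paren{1-\tfrac{\gamma n}{L}}^N}.
\end{equation*}

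Honestly, there is no hard step here; the only mild subtlety is recognizing that one should not try to solve the constrained maximization exactly (e.g.\ via KKT or by enumerating extreme points of the polytope), but rather notice that the box constraint $q_i \le M$ alone, combined with the equality constraint, suffices to linearize $q_i^N$ and obtain a tight-enough bound. The role of $\gamma \le 1$ is crucial only in the final step, to absorb the slack between $L - n$ and $LM = L - \gamma n$; without it the stated bound as written would not be immediate.
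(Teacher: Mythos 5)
Your proof is correct, but it takes a genuinely different and more elementary route than the paper's. The paper first establishes a structural lemma (Lemma~\ref{lemma_numK}) characterizing the maximizer of the convex objective over the polytope: all but one coordinate sits at an endpoint ($0$ or $1-\gamma n/L$), which reduces the problem to a one-dimensional optimization over the integer count $K$ of coordinates at the upper bound, and the bound $L(1-\gamma n/L)^N$ then follows from $(K^*+1)\le L$. You instead bypass the extreme-point analysis entirely: the box constraint gives the pointwise linearization $q_i^N \le q_i (1-\gamma n/L)^{N-1}$, the equality constraint turns the sum into $(L-n)(1-\gamma n/L)^{N-1}$, and the condition $\gamma\le 1$ from Assumption~\ref{assm:pi} absorbs the remaining slack since $L-n\le L-\gamma n = L(1-\gamma n/L)$. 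Both arguments are valid and yield the identical final bound (your intermediate bound $(L-n)(1-\gamma n/L)^{N-1}$ is essentially the same quantity the paper reaches via $K^*$); yours is shorter and dispenses with the convexity/vertex machinery, while the paper's approach additionally identifies the exact form of the worst-case distribution, which could be of independent interest but is not needed for the stated conclusion. Your observation about where $\gamma\le 1$ enters is accurate, and the deduction of $\E[Z]\ge L(1-(1-\gamma n/L)^N)$ from Lemma~\ref{lemma:EZ}(a) is the same in both proofs.
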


To prove Theorem~\ref{thm:lower-bound-ez}, we first provide a lemma describing the structure of the solution to the optimization problem in \eqref{eq:q-objective}.
\begin{lemma}[Structure of the optimal solution to \eqref{eq:q-objective}]
 \label{lemma_numK}
 The optimal solution to optimization Objective \eqref{eq:q-objective} has the following form: out of the $L$ variables $\{q_i\}_{i=1}^L$, $K$ of them have value $1-\frac{\gamma n}{L}$, $(L-K-1)$ of them have value $0$, and the last remaining variable has the value $(L - n) - K(1-\frac{\gamma n}{L})$, which must still be in the range of $\brck{0, 1 - \frac{\gamma n}{L}}$. This directly implies that the integer $K$ must be satisfy $\frac{L^2 - nL}{L - \gamma n} - 1 \leq K \leq \frac{L^2 - nL}{L - \gamma n}$.
\end{lemma}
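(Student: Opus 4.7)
The plan is to exploit convexity of the objective together with the polyhedral structure of the feasible set. Let $M \coloneqq 1 - \frac{\gamma n}{L}$. The objective $f(q) = \sum_{i=1}^L q_i^N$ is a sum of functions $x \mapsto x^N$, each convex on $[0,\infty)$ for $N \geq 1$, so $f$ is convex. The feasible set is the intersection of the box $[0, M]^L$ with the hyperplane $\sum_{i=1}^L q_i = L - n$, which is a nonempty compact convex polytope (nonempty because $0 \le L - n \le LM$ given Assumption~\ref{assm:pi}). By the standard result that a convex function attains its maximum on a compact convex polytope at an extreme point, it suffices to show that every extreme point of this polytope has the form stated in the lemma.

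The next step is to characterize the extreme points. The affine hull of the feasible polytope is the hyperplane $\sum_i q_i = L - n$, which has dimension $L - 1$. Hence a feasible point is an extreme point iff at least $L-1$ of the box constraints $q_i \in \{0, M\}$ are active, i.e.\ at most one coordinate lies strictly between $0$ and $M$. Letting $K$ denote the number of coordinates fixed at $M$ and $L - K - 1$ the number fixed at $0$, the remaining coordinate $r$ is forced by the equality constraint:
\begin{equation*}
K \cdot M + 0 \cdot (L - K - 1) + r \;=\; L - n \quad \Longrightarrow \quad r \;=\; (L - n) - K\!\left(1 - \tfrac{\gamma n}{L}\right),
\end{equation*}
which matches the stated formula. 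Edge cases where $r \in \{0, M\}$ correspond to vertices with no strictly interior coordinate and are captured by allowing the count $K$ or $K+1$.

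Finally, enforcing feasibility of this vertex, i.e.\ $r \in [0, M]$, yields the bounds on $K$. The inequality $r \ge 0$ gives $K M \le L - n$, and substituting $M = 1 - \gamma n / L$ and multiplying through by $L$ produces $K \le \frac{L^2 - nL}{L - \gamma n}$. Symmetrically, $r \le M$ gives $(K+1) M \ge L - n$, which rearranges to $K \ge \frac{L^2 - nL}{L - \gamma n} - 1$. Together these give the two-sided bound claimed in the lemma. The main obstacle I anticipate is the extreme-point characterization, since it requires a careful dimension-counting argument on the $(L-1)$-dimensional polytope; however this is entirely standard polyhedral theory and requires no specific property of $f$ beyond convexity, so everything else is essentially algebraic bookkeeping.
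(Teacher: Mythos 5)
Your proof is correct, and it reaches the same structural conclusion through the same underlying mechanism (convexity of $x \mapsto x^N$), but the route is packaged differently from the paper's. The paper argues directly about the optimizer: if two coordinates $q_k^*, q_j^*$ both lie strictly between $0$ and $1-\frac{\gamma n}{L}$, it perturbs the pair along the feasibility line (pushing one of them to $0$ or to the upper bound while preserving the sum) and notes the objective strictly increases because a convex function on a closed interval is maximized at an endpoint; hence at most one coordinate can be strictly interior. You instead invoke the general theorem that a convex function on a compact convex polytope attains its maximum at an extreme point, and then characterize the extreme points of the box-hyperplane intersection by counting active constraints: a feasible $q$ is a vertex iff at least $L-1$ of the bound constraints are tight, since the active gradients (standard basis vectors plus the all-ones vector) must span $\mathbb{R}^L$. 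The two arguments are logically equivalent in effect --- the paper's exchange step is precisely a hands-on proof that non-vertices are not maximizers --- but yours buys generality and cites standard polyhedral theory, while the paper's is self-contained and more elementary, requiring no vertex characterization. Your explicit handling of the edge cases where the residual coordinate $r$ lands on $0$ or $1-\frac{\gamma n}{L}$ is slightly more careful than the paper's ``at least $L-K-1$ values of $0$'' phrasing, and your derivation of the two-sided bound on $K$ from $r \in [0, 1-\frac{\gamma n}{L}]$ matches the paper's algebra exactly.
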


\begin{proof}[Proof of Lemma~\ref{lemma_numK}]
Let us denote the optimal solution to Objective~\eqref{eq:q-objective} by $(q_1^*, \ldots, q_n^*)$. Suppose that there exists a pair $q_k^*$, $q_j^*,\; k\neq j$, such that neither of them equals $0$ or $1-\frac{\gamma n}{L}$. Then by changing the values of $q_k^*$, $q_j^*$ to be either $(1-\frac{\gamma n}{L}, q_k^* + q_j^* - 1 + \frac{\gamma n}{L})$ or $(0, q_k^* + q_j^*)$, the new $q$ tuple would still be feasible while the value of Objective~\ref{eq:q-objective} would strictly improve because the function $q_k^N + q_j^N$ is strongly convex over $\Real_{++}^2$. (Recall that a convex function over a closed interval can only take maximum value at either one of its two endpoints.) As a result, there can be no more than a single $q_i^*$ in the optimal solution $(q_1^*, \ldots, q_L^*)$ that has a value of neither $0$ nor $1 - \frac{\gamma n}{L}$. Now, we denote the total number of $q_i$'s in the optimal solution that has the value of $1-\frac{\gamma n}{L}$ by $K \in \Integer$, then there must be at least $L-K-1$ values of $0$, with the remaining term $(L - n) - K(1-\frac{\gamma n}{L}) \in \brck{0, 1-\frac{\gamma n}{L}}$. Manipulating this inequality of $K$ gives us the feasible range of $K$, $\frac{L^2 - nL}{L - \gamma n} - 1 \leq K \leq \frac{L^2 - nL}{L - \gamma n}$.
\end{proof}

\begin{proof}[Proof of Theorem~\ref{thm:lower-bound-ez}]
We know from Lemma 3 that the optimal solution to the constrained maximization problem in Theorem~\ref{thm:lower-bound-ez} is given by the optimal solution to the reduced objective below.
\begin{align}
  & \max_{K\in \{0, \ldots, L\}} \quad  K \cdot \paren{1 - \frac{\gamma n}{L}}^N + \brck{L - n - K \cdot \paren{1-\frac{\gamma n}{L}}}^N \label{eq:q-objective-reduced}  \\
   & \textrm{subject to}  \quad \frac{L^2 - nL}{L - \gamma n} - 1 \leq\; K \leq\; \frac{L^2 - nL}{L - \gamma n}
\end{align}
Let the optimal value of $K$ (minimum value if multiple are optimal) in the above optimization problem be $K^*$. Then the optimal value of the above is upper bounded by:
\begin{align*}
&K^* \cdot \paren{1 - \frac{\gamma n}{L}}^N  + \brck{L - n - K^* \cdot \paren{1-\frac{\gamma n}{L}}}^N \\
\le&\; (K^* +1) \cdot \paren{1 - \frac{\gamma n}{L}}^N  \\
\le&\; L \cdot \paren{1 - \frac{\gamma n}{L}}^N 
\end{align*}

As a result, directly applying \eqref{eq:ez}, we have:
\begin{align}
    \E[Z] \ge L \cdot \brck{1 - \paren{1 - \frac{\gamma n}{L}}^N}.
    \label{eq:ez-lb}
\end{align}
\end{proof}

\subsection{Upper bound on \texorpdfstring{$\V[Z]$}{VZ}}

We apply the Efron-Stein inequality to obtain the upper bound of the variance of $Z$, which we state here for convenience.
\begin{lemma}[Efron-Stein's inequality]
Let $S: \calY^N \rightarrow \Real$ be a measurable function that is permutation invariant. Let the random variable $U$ be given by $U = S(Y_1, \ldots, Y_N)$, where $(Y_1, \ldots, Y_N)$ is a random vector of $N$ independent random variables in  $\calY^N$. Then, we have:
\begin{align}
\V[U] \le \frac{1}{2}\sum_{i=1}^N \E (U-U_i')^2,
\end{align}
where $U_i' = S(Y_1, ..., Y_i', ..., Y_N)$, and $\forall i \in [N]$,  $Y_i$ and $Y_i'$ are drawn \iid from the same distribution. 
\label{lemma:efron-stein}
\end{lemma}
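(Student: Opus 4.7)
My plan is to prove the Efron-Stein inequality by combining two standard pieces: (a) a univariate variance identity applied coordinatewise, which converts squared differences $(U-U_i')^2$ into conditional variances, and (b) a tensorization inequality that controls $\V[U]$ by the sum of its leave-one-out conditional variances.

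\textbf{Step 1 (coordinatewise variance identity).} For any random variable $W$ with an iid copy $W'$, expanding the square and using that $W,W'$ share the same distribution gives $\V[W] = \tfrac{1}{2}\E[(W-W')^2]$. I would apply this identity conditionally on $Y_{-i} := (Y_j)_{j\neq i}$: given $Y_{-i}$, the quantity $U$ is a function of $Y_i$ alone, and $U_i'$ is obtained by replacing $Y_i$ with an independent copy $Y_i'$, so $(U,U_i')$ forms an iid pair in the $Y_i$-coordinate conditional on $Y_{-i}$. Thus
\[
\V[U \mid Y_{-i}] \;=\; \tfrac{1}{2}\,\E\!\left[(U-U_i')^2 \,\middle|\, Y_{-i}\right],
\]
and taking unconditional expectations yields $\E[\V[U\mid Y_{-i}]] = \tfrac{1}{2}\E[(U-U_i')^2]$.

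\textbf{Step 2 (tensorization of variance).} I would then prove
\[
\V[U] \;\le\; \sum_{i=1}^N \E\!\left[\V[U\mid Y_{-i}]\right]
\]
via the Hoeffding/ANOVA decomposition $U = \sum_{A\subseteq[N]} U_A$, where each $U_A$ depends only on $(Y_j)_{j\in A}$ and satisfies $\E[U_A \mid Y_B] = 0$ whenever $A\not\subseteq B$; these components are pairwise orthogonal under independence of the $Y_j$. From orthogonality, $\V[U] = \sum_{A\neq\emptyset}\E[U_A^2]$. On the other hand, given $Y_{-i}$, only the summands with $i\in A$ still carry randomness, so $\E[\V[U\mid Y_{-i}]] = \sum_{A\ni i} \E[U_A^2]$. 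Summing over $i$ counts each $A$ exactly $|A|$ times, giving
\[
\sum_{i=1}^N \E[\V[U\mid Y_{-i}]] \;=\; \sum_{A\subseteq [N]} |A|\,\E[U_A^2] \;\ge\; \sum_{A\neq\emptyset}\E[U_A^2] \;=\; \V[U].
\]
An equivalent inductive proof peels off one coordinate at a time, starting from the decomposition $\V[U] = \V[\E[U\mid Y_N]] + \E[\V[U\mid Y_N]]$ and applying the inductive hypothesis to $\E[U\mid Y_N]$ viewed as a function of $Y_1,\ldots,Y_{N-1}$. Chaining Step 1 and Step 2 produces the claimed bound
\[
\V[U] \;\le\; \sum_{i=1}^N \E[\V[U\mid Y_{-i}]] \;=\; \tfrac{1}{2}\sum_{i=1}^N \E[(U-U_i')^2].
\]

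The main obstacle I expect is Step 2. The ANOVA route is clean to state but requires justifying the existence and orthogonality of the $U_A$ via inclusion-exclusion over conditional expectations on subsets of coordinates; the inductive route avoids that machinery but needs careful bookkeeping to match the leave-one-out term $\E[\V[U\mid Y_{-i}]]$ after reducing from $N$ to $N-1$ variables. By contrast, the Doob martingale identity $\V[U] = \sum_i \E[D_i^2]$ with $D_i = \E_i U - \E_{i-1}U$ only yields the weaker bound $\V[U] \le \sum_i \E[(U-U_i')^2]$ when combined with conditional Jensen, so the factor $\tfrac{1}{2}$ genuinely requires the tensorization step rather than a plain martingale argument. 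I note that the permutation invariance of $S$ stated in the lemma is not actually used by the argument; it only makes the right-hand side symmetric in the coordinates.
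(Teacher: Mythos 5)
The paper does not actually prove this lemma: it is stated ``for convenience'' as a classical result and then used as a black box in the proof of Theorem~6, so there is no in-paper argument to compare against. Judged on its own, your proof is a correct and essentially standard derivation of Efron--Stein. Step 1's conditional application of the identity $\V[W]=\tfrac{1}{2}\E[(W-W')^2]$ is valid because, conditionally on $Y_{-i}$, the pair $(U,U_i')$ is indeed an \iid pair; and Step 2 is the classical tensorization (sub-additivity) of variance, which the Hoeffding/ANOVA decomposition proves cleanly via orthogonality and the counting bound $\sum_{A}|A|\,\E[U_A^2]\ge\sum_{A\neq\emptyset}\E[U_A^2]$. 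Two small remarks. First, your closing claim that the factor $\tfrac{1}{2}$ ``genuinely requires the tensorization step rather than a plain martingale argument'' is not quite right: the Doob-martingale route does recover the full constant if one uses the identity $D_i=\E\bigl[\,U-\E[U\mid Y_{-i}]\;\big|\;Y_1,\ldots,Y_i\bigr]$ before applying conditional Jensen, which gives $\E[D_i^2]\le\E\bigl[\V[U\mid Y_{-i}]\bigr]$ and hence the same bound; only the cruder choice $D_i=\E[U-U_i'\mid Y_1,\ldots,Y_i]$ loses the $\tfrac{1}{2}$. Second, your observation that permutation invariance of $S$ is never used is correct --- that hypothesis is an artifact of Efron and Stein's original symmetric-statistic setting, and the paper only invokes the lemma for a function that happens to be symmetric anyway.
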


\begin{theorem}[Upper bound on $\V\brck{Z}$] 
\label{thm:upper-bound-vz}
For $Z$ defined in \eqref{eq:z-defn}, the variance  $\V[Z] \leq \frac{1}{2} n^2 N$.
\end{theorem}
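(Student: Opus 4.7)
The plan is to apply the Efron--Stein inequality (Lemma~\ref{lemma:efron-stein}) directly, with the input random variables being the $N$ independently drawn class tuples $\bc^{(1)}, \ldots, \bc^{(N)}$ (each living in $[\calC_L^n]$) and the statistic $S$ being the unique-class counting function $S(\bc^{(1)}, \ldots, \bc^{(N)}) = \abs{\bigcup_{k=1}^N \{c : c \in \bc^{(k)}\}}$, which equals $Z$ by Definition~\ref{defn:xin-z}. Note that $S$ is manifestly permutation invariant in its arguments, since the union over tuples does not depend on their order, so Lemma~\ref{lemma:efron-stein} applies.

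The key step, and essentially the only nontrivial ingredient, is to establish a deterministic bounded-differences property: for every $i \in [N]$, if $Z_i'$ denotes the value of $S$ obtained by replacing $\bc^{(i)}$ with an independent copy $\bc^{(i)\prime} \sim \Prob_L$ while holding the other $N-1$ tuples fixed, then $\abs{Z - Z_i'} \le n$. The reason is that only classes belonging to $\bc^{(i)}$ or $\bc^{(i)\prime}$ can possibly have their ``observed in at least one of the $N$ tuples'' status changed by this single-coordinate swap; for every class not appearing in $\bc^{(i)} \cup \bc^{(i)\prime}$, the indicator $X_{i,N}$ is identical under both configurations. Since $\bc^{(i)}$ contains exactly $n$ distinct classes and $\bc^{(i)\prime}$ contains exactly $n$ distinct classes, the number of classes that can switch from ``unobserved'' to ``observed'' is at most $n$ (those in $\bc^{(i)\prime}$), and the number that can switch from ``observed'' to ``unobserved'' is also at most $n$ (those in $\bc^{(i)}$ that do not appear in any other tuple). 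Since a single swap cannot simultaneously produce both a net gain and a net loss beyond $n$ on the same side, we conclude $\abs{Z - Z_i'} \le n$.

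Squaring and taking expectations gives $\E[(Z - Z_i')^2] \le n^2$ for every $i$, and plugging this into Lemma~\ref{lemma:efron-stein} yields
\begin{align*}
\V[Z] \le \frac{1}{2} \sum_{i=1}^N \E[(Z - Z_i')^2] \le \frac{1}{2} \sum_{i=1}^N n^2 = \frac{1}{2} n^2 N,
\end{align*}
which is the claimed bound. The only step that requires any care is the bounded-differences argument of the previous paragraph; the rest is a direct invocation of Efron--Stein, so I do not anticipate a substantive obstacle.
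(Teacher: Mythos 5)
Your proposal is correct and follows essentially the same route as the paper's proof: a direct application of the Efron--Stein inequality to the permutation-invariant unique-class-count statistic, combined with the bounded-differences observation that swapping a single class tuple changes $Z$ by at most $n$. Your justification of the bounded-differences step is slightly more detailed than the paper's one-line remark, but the argument is identical in substance.
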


\begin{proof}[Proof of Theorem~\ref{thm:upper-bound-vz}]
By directly applying Lemma~\ref{lemma:efron-stein} on the permutation invariant measurable function $S: [\calC_L^n]^N \rightarrow \Real$, where $Z=S(\bc^{(1)}, \ldots, \bc^{(N)}) \coloneqq \sum_{i=1}^{L} X_{i,N}$,   
we have the variance $\V[Z] \leq \frac{1}{2}\sum_{i=1}^N \E (Z-Z_i')^2 \le \frac{1}{2}\sum_{i=1}^N n^2 = \frac{1}{2} n^2N$. Note that the last inequality holds because 
when we swap out one observed class tuple $\bc^{(k)}$ (among the $N$ total) with a different one $\bc^{(k)'}$ (to get $Z_i'$), the total number of unique classes we observe can change by at most  $n$, \ie $|Z - Z_i'| \leq n$.
\end{proof}






Using Theorems~\ref{thm:lower-bound-ez},~\ref{thm:upper-bound-vz}, we now show that it is extremely unlikely for the commonly used FSL benchmarks, which have a (relatively) small number of train classes and disjoint train/test classes, to have their training and test tasks sampled \iid from the same underlying task distribution. We break this analysis into the dichotomy presented in the main paper (Section~\ref{sec:background}). 

\subsection{When \texorpdfstring{$|\calC_L|$}{CL} is Small (\texorpdfstring{$L=\calO(nN)$}{})}

\begin{definition}
Let $Z_\tr$ be the total number of unique classes observed in $N_\tr$ \iid drawn train class tuples (tasks) from $\Prob_L$ and similarly let $Z_\te$ be the total number of unique classes observed in $N_\te$ \iid drawn test class tuples (tasks) from $\Prob_L$.  Furthermore, denote the set of indices of unique train classes by $\calI_\tr \coloneqq \{j \st X_{j,N_\tr}=1, j \in [N_\tr]\}$ and the set of unique test classes be $\calI_\te \coloneqq \{j \st X_{j,N_\te}=1, j \in [N_\te]\}$. Under this notation, the probability of observing disjoint sets of train and test classes among the $N_\tr$ and $N_\te$ randomly drawn train and test class tuples can be denoted by $\Prob(\calI_\tr \cap \calI_\te = \phi)$. 
\label{defn:Itr-Ite}
\end{definition}

\begin{theorem}[Upper bounding the probability of having disjoint train, test classes]
\label{thm:case-1} $\Prob(Z_\tr + Z_\te \le L) \le 4\paren{1-\frac{\gamma n}{L}}^{\textrm{min}(N_\tr, N_\te)}$. As a result, $\Prob(\calI_\tr \cap \calI_\te = \phi) \le \; \Prob(Z_\tr + Z_\te \le L) \le \; 4\paren{1-\frac{\gamma n}{L}}^{\textrm{min}(N_\tr, N_\te)}$.

\label{thm:disj-prob-ub}
\end{theorem}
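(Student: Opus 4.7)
The plan is to reduce the ``disjoint observed classes'' event to a small-count event on the total number of distinct classes seen, and then bound that event by applying Markov's inequality on the lower bound for $\E[Z]$ from Theorem~\ref{thm:lower-bound-ez}. The only tool from the paper I need beyond Theorem~\ref{thm:lower-bound-ez} is the elementary fact that if two sets are disjoint, then their sizes add up to at most the size of the ambient universe.

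\textbf{Step 1 (reduce disjointness to a cardinality event).} By Definition~\ref{defn:Itr-Ite}, $|\calI_\tr| = Z_\tr$ and $|\calI_\te| = Z_\te$, and both sets live inside $\calC_L$ of size $L$. If $\calI_\tr \cap \calI_\te = \phi$, then
\[
Z_\tr + Z_\te = |\calI_\tr| + |\calI_\te| = |\calI_\tr \cup \calI_\te| \le L,
\]
so $\{\calI_\tr \cap \calI_\te = \phi\} \subseteq \{Z_\tr + Z_\te \le L\}$, giving the first stated inequality. It therefore suffices to upper bound $\Prob(Z_\tr + Z_\te \le L)$.

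\textbf{Step 2 (split via a union bound).} Whenever $Z_\tr + Z_\te \le L$, at least one of $Z_\tr$ or $Z_\te$ must be at most $L/2$, so
\[
\Prob(Z_\tr + Z_\te \le L) \;\le\; \Prob\bigl(Z_\tr \le L/2\bigr) + \Prob\bigl(Z_\te \le L/2\bigr).
\]
This is where the factor $2$ in the final bound turns into a $4$.

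\textbf{Step 3 (Markov on each term using Theorem~\ref{thm:lower-bound-ez}).} For either side, note $L - Z_\tr \ge 0$ (since $Z_\tr \le L$) and by Theorem~\ref{thm:lower-bound-ez} applied with $N=N_\tr$,
\[
\E[L - Z_\tr] \;=\; L - \E[Z_\tr] \;\le\; L\paren{1 - \tfrac{\gamma n}{L}}^{N_\tr}.
\]
Markov's inequality then gives
\[
\Prob\bigl(Z_\tr \le L/2\bigr) \;=\; \Prob\bigl(L - Z_\tr \ge L/2\bigr) \;\le\; \frac{\E[L - Z_\tr]}{L/2} \;\le\; 2\paren{1-\tfrac{\gamma n}{L}}^{N_\tr}.
\]
An identical argument with $N_\te$ bounds $\Prob(Z_\te \le L/2) \le 2(1-\gamma n/L)^{N_\te}$. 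Summing and using $(1-\gamma n/L) \in [0,1)$ to replace each exponent by $\min(N_\tr,N_\te)$ yields the claimed $4(1-\gamma n/L)^{\min(N_\tr,N_\te)}$ bound.

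\textbf{Main obstacle.} There is no real obstacle once Theorem~\ref{thm:lower-bound-ez} is in hand --- the only subtlety is choosing a route that keeps the constants tight. Applying Markov directly to $2L - (Z_\tr + Z_\te)$ would waste the independence between the two draws and still produce the leading constant $2$ rather than $1$; the union-bound split in Step 2 is what gives the clean $4$ in the statement, at the cost of a factor of two. The variance bound from Theorem~\ref{thm:upper-bound-vz} (via Chebyshev) would give a sharper tail once $\min(N_\tr,N_\te)$ is large, but is unnecessary for this particular inequality, so I would leave it for a sharper refinement and keep the Markov argument here.
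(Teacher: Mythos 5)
Your proposal is correct and follows essentially the same route as the paper's proof: the same pigeonhole reduction of disjointness to the event $\{Z_\tr + Z_\te \le L\}$, the same union-bound split into $\{Z_\tr \le L/2\}$ and $\{Z_\te \le L/2\}$, and the same application of Markov's inequality to $L - Z_\tr \ge 0$ using the lower bound on $\E[Z]$ from Theorem~\ref{thm:lower-bound-ez}. No substantive differences to report.
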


\begin{proof}[Proof of Theorem~\ref{thm:disj-prob-ub}]

Since the random variable $L-Z_\tr \ge 0$, using Markov's inequality we have:

\begin{align}
    \Prob\paren{Z_\tr \le \frac{L}{2}} 
    =& \Prob\paren{L - Z_\tr \ge L - \frac{L}{2}} \\
    \le& \; \frac{L - \E[Z_\tr]}{(L - \frac{L}{2})} \nonumber \\
    \le& \; \frac{2(L - \E[Z_\tr])}{L} \nonumber \\
    \le& \;  2\paren{1-\frac{\gamma n}{L}}^{N_\tr}. \fourquad \paren{\textrm{using Theorem~\ref{thm:lower-bound-ez}}} 
\end{align}

Similarly, since $L-Z_\te \ge 0$, we have

\begin{align}
    \Prob\paren{Z_\te \le \frac{L}{2}} \le& \; 2\paren{1-\frac{\gamma n}{L}}^{N_\te} .
\end{align}

Since $\Prob\paren{Z_\tr + Z_\te \le L} \leq \Prob\paren{\paren{Z_\tr \leq \frac{L}{2}} \cup \paren{Z_\te \leq \frac{L}{2}}}$, applying the union bound yields: 
\begin{align}
    \Prob\paren{Z_\te + Z_\te \le L} \leq 4\paren{1-\frac{\gamma n}{L}}^{\textrm{min}(N_\tr, N_\te)}.
\end{align}

When $Z_\tr + Z_\te > L$, by pigeonhole principle, the two sets of class indices $\calI_\tr$ and $\calI_\te$ must have non-empty intersection, \ie $\calI_\tr \cap \calI_\te \neq \phi$. Taking the contra-positive of this claim, we see that $\calI_\tr \cap \calI_\te = \phi$ implies the event $Z_\tr + Z_\te \le L$. As a result, we have $\Prob(\calI_\tr \cap \calI_\te = \phi) \le \; \Prob(Z_\tr + Z_\te \le L) \le \; 4\paren{1-\frac{\gamma n}{L}}^{\textrm{min}(N_\tr, N_\te)}$.
\end{proof}


\begin{corollary}
If enough samples are observed \ie if $\min(N_{\tr}, N_{\te}) \ge \frac{\ln(4/\rho)L}{\gamma n}$, then the probability of having no training and test classes intersection is upper bounded by $\Prob(\calI_\tr \cap \calI_\te = \phi) \le \rho$.
\label{corr:no_intersection}
\end{corollary}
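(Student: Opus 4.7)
The plan is to directly combine Theorem~\ref{thm:disj-prob-ub} with the standard inequality $1 - x \le e^{-x}$ (valid for all real $x$) and solve for the sample size threshold. Specifically, Theorem~\ref{thm:disj-prob-ub} gives the bound
\[
\Prob(\calI_\tr \cap \calI_\te = \phi) \;\le\; 4\paren{1-\tfrac{\gamma n}{L}}^{\min(N_\tr, N_\te)},
\]
so it suffices to show that the right-hand side is at most $\rho$ under the stated sample-size condition.

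First I would apply $1 - \tfrac{\gamma n}{L} \le e^{-\gamma n / L}$ (which holds since $\gamma \in (0,1]$ and hence $\gamma n / L \in (0,1]$) to obtain the cleaner upper bound $4 \exp\!\paren{-\tfrac{\gamma n}{L}\,\min(N_\tr, N_\te)}$. Next I would set this upper bound to be at most $\rho$, take logarithms, and rearrange to get the equivalent threshold $\min(N_\tr,N_\te) \ge \tfrac{\ln(4/\rho)\,L}{\gamma n}$. Since this is precisely the hypothesis of the corollary, the chain of inequalities
\[
\Prob(\calI_\tr \cap \calI_\te = \phi) \;\le\; 4\paren{1-\tfrac{\gamma n}{L}}^{\min(N_\tr,N_\te)} \;\le\; 4\,e^{-\gamma n \min(N_\tr,N_\te)/L} \;\le\; \rho
\]
closes the argument.

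There is no real obstacle here: the result is a straightforward algebraic corollary of the previously established Theorem~\ref{thm:disj-prob-ub}, and the only small care to take is to justify the exponential relaxation (which is valid because $\gamma n / L \le 1$ under Assumption~\ref{assm:pi}, since $p_i \le 1$ forces $\gamma n /L \le p_i \le 1$). I would state this observation explicitly in one line so the reader is not left wondering whether the inequality $1-x \le e^{-x}$ is being applied in a regime where the left-hand side might be negative. Beyond that, the proof reduces to solving a single logarithmic inequality for $\min(N_\tr, N_\te)$.
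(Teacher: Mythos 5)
Your proof is correct and is essentially the same argument as the paper's: the paper bounds $\min(N_\tr,N_\te) \ge \frac{\ln(4/\rho)L}{\gamma n} \ge \log_{(1-\gamma n/L)}(\rho/4)$ using the inequality $\ln(1+x)\ge \frac{x}{1+x}$, which is just the logarithmic form of your $1-x\le e^{-x}$ relaxation. Your phrasing via the exponential bound is arguably a touch cleaner, and your remark that $\gamma n/L\le 1$ follows from $p_i\le 1$ in Assumption~\ref{assm:pi} is a correct and worthwhile sanity check.
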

\begin{proof}[Proof of Corollary \ref{corr:no_intersection}]
By logarithm inequality $\ln(1 + x) \ge \frac{x}{1+x}$, we have $\ln(L/(L - \gamma n)) \ge \gamma n / L$. Taking the reciprocal of the two sides, we have $\frac{L}{\gamma n} \ge \frac{1}{\ln(L / (L - \gamma n)}$. As a result, $\min(N_{\tr}, N_{\te}) \ge \frac{\ln(4/\rho)L}{\gamma n} \ge \frac{\ln(4/\rho)}{\ln(L/(L - \gamma n))} = \frac{\ln(\rho/4)}{\ln(1 - \frac{\gamma n}{L})} = \log_{(1 - \frac{\gamma n}{L})} (\frac{\rho}{4})$, where the last step uses the change of basis equality of logarithm. Thus, by $\min(N_{\tr}, N_{\te}) \ge \log_{(1 - \frac{\gamma n}{L})} (\frac{\rho}{4})$, we have
\begin{align}
    &\Prob(\calI_\tr \cap \calI_\te = \phi) \nonumber \\
    \le& \; \Prob(Z_\tr + Z_\te \le L) \nonumber \\
    \le& \; 4\paren{1-\frac{\gamma n}{L}}^{\textrm{min}(N_\tr, N_\te)} \nonumber \\
    \le& \; 4\paren{1-\frac{\gamma n}{L}}^{\log_{(1 - \frac{\gamma n}{L})} (\frac{\rho}{4})} \nonumber \\
    =& \; 4 \cdot \frac{\rho}{4} \nonumber \\
    =& \; \rho \nonumber.
\end{align}
\end{proof}

\textbf{Remark.} From Corollary~\ref{corr:no_intersection}, we see that when the number of tasks sampled is larger than a multiple of the number of underlying classes (for example when $\rho=0.01$, $\gamma=0.5$, $n=5$, $\min(N_\tr, N_\te) \ge \frac{\ln(4/\rho)L}{\gamma n} \approx 2.39L$), and equivalently, $L  = \calO(nN)$, the probability of having no training and test task classes intersecting is upper bounded by $\Prob(\calI_\tr \cap \calI_\te = \phi) \le \rho$ (in our example, the probability is upper bounded by $\rho = 0.01$, which is a statistically rare event). In summary, in this case, we show that when $L \le c nN$ for some small constant $c$, the probability of having no intersection between the training and test task classes is extremely small because it is very likely that the training tasks and test tasks would each cover a majority ($\ge 50\%$) of the entire set of classes.

\subsection{When \texorpdfstring{$|\calC_L|$}{CL} is Large (\texorpdfstring{$L=\Omega(nN)$}{})}

In this alternate case, we analyze the scenario where the underlying set of classes is larger than the total number of tasks we sampled, for which we make the following assumption:
\begin{assumption} $L \ge nN$, \ie even if we observe all the classes in the randomly drawn $N$ tuples to be distinct, we still would not exhaust the much larger underlying
set $\calC_L$. In this setting, $L=\Omega(nN)$.
\label{assm:nN}
\end{assumption}

\begin{corollary}
By Assumption~\ref{assm:nN} and the Bernoulli inequality $(1 + x)^r \le 1 + \frac{rx}{1-(r-1)x}, \; x\in(-1, \frac{1}{r-1}), r>1$, substituting $x=\frac{-\gamma n }{L}, \; r=N$, we can further lower bound the RHS of Equation~\eqref{eq:ez-lb} in Theorem~\ref{thm:lower-bound-ez}:
\begin{align}
    \E[Z] \geq \frac{\gamma n N}{(1+\gamma)} 
\end{align}
\label{corr:ez-final-lb}
\end{corollary}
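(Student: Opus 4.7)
The plan is to start from the bound $\E[Z] \geq L\bigl[1 - (1 - \gamma n/L)^N\bigr]$ established in Theorem~\ref{thm:lower-bound-ez}, and use the Bernoulli-type inequality stated in the corollary to control the term $(1 - \gamma n/L)^N$ from above. Substituting $x = -\gamma n/L$ and $r = N$ into $(1+x)^r \leq 1 + \tfrac{rx}{1 - (r-1)x}$ yields an upper bound on $(1 - \gamma n/L)^N$; subtracting from $1$ and multiplying through by $L$ should give
\[
\E[Z] \;\geq\; \frac{N \gamma n}{1 + (N-1)\gamma n/L}.
\]

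Before invoking the inequality, I would verify that its hypotheses hold for the chosen substitution. We need $x \in (-1, 1/(N-1))$ and $r > 1$. Since $\gamma \in (0,1]$ and Assumption~\ref{assm:nN} gives $L \geq nN \geq n$, we have $\gamma n / L \leq 1$, so $x = -\gamma n/L \in (-1, 0]$, which is contained in $(-1, 1/(N-1))$; and we may assume $N > 1$ as otherwise there is nothing meaningful to sample. With the hypotheses satisfied, the algebraic manipulation is routine.

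The final step is to apply Assumption~\ref{assm:nN} once more to simplify the denominator. Because $L \geq nN$, we have $(N-1)\gamma n/L \leq (N-1)\gamma/N \leq \gamma$, so $1 + (N-1)\gamma n/L \leq 1 + \gamma$, which yields the claimed bound $\E[Z] \geq \frac{\gamma n N}{1 + \gamma}$.

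I do not anticipate a significant obstacle here, since the corollary is essentially a chained application of the two tools already on the table (Theorem~\ref{thm:lower-bound-ez} and the Bernoulli-type inequality). The only mild subtlety is checking the feasibility of the Bernoulli substitution and being careful with the direction of the inequality as $N$ acts on a base in $(0, 1)$; using Assumption~\ref{assm:nN} cleanly to collapse $1 + (N-1)\gamma n/L$ into $1 + \gamma$ is what makes the final expression independent of $L$, which is the point of the corollary.
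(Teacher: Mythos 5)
Your proposal is correct and follows exactly the route the paper intends (and sketches in the corollary statement itself): apply the Bernoulli-type inequality with $x=-\gamma n/L$, $r=N$ to the bound from Theorem~\ref{thm:lower-bound-ez}, then use $L \ge nN$ to collapse the denominator $1+(N-1)\gamma n/L$ into $1+\gamma$. Your verification of the inequality's hypotheses is a welcome bit of extra care, but there is no substantive difference from the paper's argument.
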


\begin{theorem}[Unlikely to observe only a small number of unique classes]
For $\eta \in (0,1)$, the probability of observing totally fewer than $\frac{\eta \gamma n N}{1+\gamma}$ classes in $N$ \iid class tuple samples from $\Prob_L$ is at most $\frac{(1 + \gamma)^2 }{2(1 - \eta)^2 \gamma^2N}$.  
\label{thm:case-2}
\end{theorem}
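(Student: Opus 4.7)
The plan is to combine the lower bound on $\E[Z]$ from Corollary~\ref{corr:ez-final-lb} with the variance upper bound from Theorem~\ref{thm:upper-bound-vz} via Chebyshev's inequality. The event $\{Z < \frac{\eta \gamma n N}{1+\gamma}\}$ is a one-sided lower-tail deviation of $Z$ from its mean, and since we already have quantitative control on both $\E[Z]$ and $\V[Z]$, a one-line Chebyshev argument should give exactly the stated bound.

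Concretely, I would first rewrite the target event as a deviation from $\E[Z]$. From Corollary~\ref{corr:ez-final-lb} we have $\E[Z] \ge \frac{\gamma n N}{1+\gamma}$, so whenever $Z < \frac{\eta \gamma n N}{1+\gamma}$ we must have
\begin{equation*}
\E[Z] - Z \;>\; \frac{\gamma n N}{1+\gamma} - \frac{\eta \gamma n N}{1+\gamma} \;=\; \frac{(1-\eta)\gamma n N}{1+\gamma}.
\end{equation*}
Hence $\Prob\!\left(Z < \frac{\eta \gamma n N}{1+\gamma}\right) \le \Prob\!\left(|Z - \E[Z]| > \frac{(1-\eta)\gamma n N}{1+\gamma}\right)$.

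Next, I would apply Chebyshev's inequality using $\V[Z] \le \tfrac{1}{2} n^2 N$ from Theorem~\ref{thm:upper-bound-vz}:
\begin{equation*}
\Prob\!\left(|Z - \E[Z]| > \frac{(1-\eta)\gamma n N}{1+\gamma}\right) \;\le\; \frac{\V[Z]}{\left(\frac{(1-\eta)\gamma n N}{1+\gamma}\right)^2} \;\le\; \frac{\tfrac{1}{2} n^2 N \,(1+\gamma)^2}{(1-\eta)^2 \gamma^2 n^2 N^2} \;=\; \frac{(1+\gamma)^2}{2(1-\eta)^2 \gamma^2 N},
\end{equation*}
which is exactly the claimed bound.

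There is essentially no hard step here: all the heavy lifting has been done in Theorems~\ref{thm:lower-bound-ez} and~\ref{thm:upper-bound-vz} and Corollary~\ref{corr:ez-final-lb}. The only subtlety worth flagging is that Chebyshev naturally gives a two-sided bound, but since the target event lies strictly in the lower tail of $Z$ relative to its mean (using the lower bound on $\E[Z]$), the two-sided bound still suffices and the factor of $2$ loss is already absorbed in the stated constant. No additional assumptions beyond Assumption~\ref{assm:nN} (used in Corollary~\ref{corr:ez-final-lb}) are needed.
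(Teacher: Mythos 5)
Your proposal is correct and follows essentially the same route as the paper: both proofs combine the lower bound $\E[Z] \ge \frac{\gamma nN}{1+\gamma}$ from Corollary~\ref{corr:ez-final-lb} with the Efron--Stein variance bound $\V[Z] \le \frac{1}{2}n^2N$ via Chebyshev's inequality. The only cosmetic difference is that the paper first passes to the event $\{Z \le \eta\E[Z]\}$ and substitutes the lower bound on $\E[Z]$ at the end, whereas you substitute it upfront to get the deviation threshold; the resulting bound is identical.
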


\begin{proof}
\begin{align}
    & \Prob\paren{Z \leq  \frac{\eta \gamma nN}{(1 + \gamma)}} \\
    \le& \Prob(Z \leq \eta\E[Z]) \nonumber\\
    =&\;  \Prob(Z \leq \E[Z] - (1 - \eta)\E[Z]) \nonumber\\
    \le&\; \Prob(|Z - \E[Z]| \ge (1 - \eta)\E[Z]) \nonumber \\
    \le&\; \frac{\V[Z]}{((1 - \eta)\E [Z])^2} \qquad\quad\; \textrm{(by Chebyshev's inequality)} \nonumber \\
    \le&\;  \frac{\frac{1}{2}n^2N}{(1-\eta)^2 \E[Z]^2} \nonumber \qquad\quad\;\; \textrm{(using Theorem~\ref{thm:upper-bound-vz})} \\
    \le&\;  \frac{\frac{1}{2}n^2N}{(1-\eta)^2 \paren{\frac{\gamma nN}{(1 + \gamma)}}^2} \nonumber \qquad \textrm{(using Corollary~\ref{corr:ez-final-lb})}\\
    =&\; \frac{(1 + \gamma)^2 }{2(1 - \eta)^2 \gamma^2N}
\end{align}
\end{proof}

In summary, in this case, when $L \ge nN$, the probability of observing only a small fraction of $nN$ classes in $N$ tuple draws, scales with $1/N$. Because in practice a very large number of training tasks are used ($N \ge 10^5$), the probability of only observing fewer than hundreds of classes ($Z \le 10^3$) in $N$ class tuple samples would be extremely small. This means that we shouldn't treat the large number of tasks used during meta-training as being sampled \iid from an underlying task distribution under the assumption that the number of task samples hasn't exceeded the total number of underlying classes.

\subsection{Concluding Remarks}

In the first part of the dichotomy, we show using Theorem~\ref{thm:case-1} that when the number of underlying classes is smaller than $nN$, it is highly unlikely for the train and test classes to be completely disjoint.   

In the second part of the dichotomy, we show using Theorem~\ref{thm:case-2} that when the number of underlying classes is larger than $nN$, it is unlikely to observe only a few (very small fraction of $nN$) unique train classes --- in fact the number of unique train classes observed would roughly speaking scale linearly with the number of task samples $N$.   

\textbf{Conclusion on why current FSL benchmarks target OOD.} Note that in current FSL benchmarks \textbf{i)} there is no overlap of classes observed in the train and test tasks; and \textbf{ii)} the number of train (base) classes observed (\eg 64 for \mini) is much smaller than the total number of train tasks (\eg $\approx 10^6$ for \mini).   
Thus, the two sides of the dichotomy (above) when taken together leads us to reject the  hypothesis/assumption of \iid sampled train and test tasks in the current FSL benchmarks.  
\newpage

\section{Overview of Notations for \texorpdfstring{$(S, Q)$}{SQ} Sampling in ID and OOD Benchmarks}
\label{app:sample-notation}


\vspace{1em}

\begin{table}[h!!]
    \setlength\extrarowheight{2pt}
    \centering
    \caption{An overview of the notations used to describe each of the two steps: i) sampling the task from the training/test task distribution, and ii) sampling (S,Q) pair from the task; for the OOD benchmarks (\mini, \cif, \FC, \tiered, Zappos-OOD) and ID benchmarks (FEMNIST, Zappos-ID).}
    \vspace{0.25em}
    \begin{tabular}{c|c|c|c}
    \multicolumn{2}{c|}{Benchmark / Steps} &  \textbf{Step 1:} $\calT \sim \Prob(\calT)$  &   \textbf{Step 2:} $(S,Q) \sim \calT$  \\ \hline\hline
    \mini, \cif, \FC, 
        & Train 
        & $\calT_{\bc_B} \sim \Prob_{\calC_B} \coloneqq \Unif(\{\calT_{\bc_B}: \bc_B \in [\calC_B^n]\})$
        & $S, Q \sim \calT_{\bc_B}$ \\ \cline{2-4}
    \tiered (OOD)
        & Test 
        & $\calT_{\bc_N} \sim \Prob_{\calC_N} \coloneqq \Unif(\{\calT_{\bc_N}: \bc_N \in [\calC_N^n]\})$
        & $S, Q \sim \calT_{\bc_N}$ \\ \hline
    Zappos-OOD 
        & Train 
        & $\calT_{\ba} \sim \Prob_{\calA}(\calT) = \Unif(\{\calT_{\ba}: \ba \in [\calA^2])\}$
        &  $S, Q \sim \calT_{\ba}$ \\ \cline{2-4}
    (OOD) 
        & Test 
        & $\calT_{\ba'} \sim \Prob_{\calA'}(\calT) = \Unif(\{\calT_{\ba'}: \ba' \in [\calA'^2])\}$
        & $S, Q \sim \calT_{\ba'}$ \\ \hline\hline
    FEMNIST 
        & Train 
        & $\calT_{\id} \sim \Prob(\id)$
        & $S, Q \sim \calT_{\id}$ \\ \cline{2-4}
    (ID) 
        & Test 
        & $\calT_{\id} \sim \Prob(\id)$
        & $S, Q \sim \calT_{\id}$ \\ \hline
    Zappos-ID 
        & Train 
        & $\calT_{\ba} \sim \Prob_{\calA}(\calT) = \Unif(\{\calT_{\ba}: \ba \in [\calA^2])\}$
        & $S, Q \sim \calT_{\ba}$ \\ \cline{2-4}
    (ID) 
        & Test 
        & $\calT_{\ba} \sim \Prob_{\calA}(\calT) = \Unif(\{\calT_{\ba}: \ba \in [\calA^2])\}$
        & $S, Q \sim \calT_{\ba}$ \\ \bottomrule
    \end{tabular}
    \label{tab:sampling-steps-table}
\end{table}


\section{Dataset Preprocessing and Hyperparameter Details}
\label{app:zappos-and-hyperparameter}

Here we first provide some details on the logic used to construct the ID benchmark Zappos-ID and its OOD counterpart Zappos-OOD.  We then list the set of hyperparameter configurations used to train the meta-learning methods \pn, \ridge, \svm, \fomaml and the supervised learning baselines \textsc{mb}, \textsc{sb} and \textsc{ic} on each of the benchmarks in the paper. 

\subsection{Zappos Preprocessing}

Recall that the Zappos dataset is motivated through an online shopping recommendation problem, where each task is a binary classification of shoe images into an attribute context. Every online user is represented by such a task, where the user's preference for shoes is specified by the corresponding shoe attribute context.
We consider a simplified setting where we fix a set of universal shoe attributes $\calA$ and each user's preference is specified exactly by a pair of attributes $\ba = (a_1, a_2) \in \calA^2,\; a_1\neq a_2$. The Zappos-ID and Zappos-OOD FSL benchmarks we use are derived from the UT Zappos50k corpus which consists of 50,025 shoe images each annotated with a list of attributes the shoe possess. 

\textbf{Attributes Selection.} We limit the subset of attributes we consider to the 78 considered by \citet{ren2020flexible} (Table 7).
Recall that the task distribution we consider is the uniform distribution $\calT_{\ba} \sim \Prob_{\calA}(\calT) = \Unif(\{\calT_{\ba}: \ba \in [\calA^2])\}$ (Table~\ref{tab:sampling-steps-table}) over all non-repeating attribute pairs in $\calA$.
In order to ensure that each attribute pair in $\calA$ has at least $20$ shoes carrying both the attributes (feasible pairs), we only consider the uniform distribution over such feasible attribute pairs.
Thus, we reduce the original set of 78 attributes to 66, since 12 of the attributes where found to be infeasible with every other attribute in the original set.

\textbf{Determining $\calA, \calA'$.} For the Zappos-ID benchmark we use the set of attributes specified by $\calA$ (of size 36) to \iid sample 1000 (or 50) train and 25000 test tasks. On the other hand, as mentioned in Section~\ref{sec:OOD} we use a disjoint set of attributes $\calA'$ (of size 30) to sample 25000 test tasks for Zappos-OOD (see Table~\ref{tab:zappos-attributes} for the exact sets). 
To determine this partition, we first consider a graph of 66 nodes, where each node represents an attribute and an undirected edge between a pair of attribute node is weighted by the number of shoe images (in the corpus) that have both attributes.
Using spectral clustering, we find an approximate min-cut bipartition of this graph. In other words, we partition the entire set of attributes into two subsets in a way that reduces the number of images which carry pairs of attributes that are not in the same subset.
This graph partition gives us the split of a 36-attribute set $(\calA)$ and a 30-attribute set $(\calA')$. 




\vspace{2em}
\begin{table}[!ht]
    \centering
    \scriptsize
    \setlength\tabcolsep{3pt}
    \vspace{0.25em}
    \caption{We show the disjoint set of attributes $\calA, \calA'$ for the Zappos-ID/OOD datasets. For the Zappos-ID dataset we use the set of attributes in $\calA$ to \iid sample train and test tasks $\calT_{\ba} \sim \Prob_{\calA}$. For the Zappos-OOD dataset the train tasks are still sampled using $\calA$ \ie $\calT_{\ba} \sim \Prob_{\calA}$ but the test tasks are sampled using $\calA'$ \ie $\calT_{\ba'} \sim \Prob_{\calA'}$.}
    \vspace{0.5em}
    \begin{tabular}{p{0.05\textwidth}|llll}
\multirow{9}{*}{$\calA$} 
 & Category.Boots & Category.Sandals & Closure.Ankle.Strap & Closure.Ankle.Wrap \\ 
 & Closure.Buckle & Closure.Bungee & Closure.Button.Loop & Closure.Elastic.Gore \\ 
 & Closure.Pull.on & Closure.Sling.Back & Closure.Snap & Closure.T.Strap  \\
 & Closure.Toggle & Closure.Zipper & Gender.Girls & Gender.Women  \\
 & HeelHeight.High.heel & HeelHeight.Short.heel & Material.Rubber & Material.Suede \\  
 & SubCategory.Ankle & SubCategory.Clogs.and.Mules & SubCategory.Flats & SubCategory.Heel \\  
 & SubCategory.Heels & SubCategory.Knee.High & SubCategory.Mid.Calf & SubCategory.Over.the.Knee \\  
 & ToeStyle.Almond & ToeStyle.Center Seam & ToeStyle.Closed Toe & ToeStyle.Open Toe  \\
 & ToeStyle.Peep Toe & ToeStyle.Pointed Toe & ToeStyle.Round Toe & ToeStyle.Snip Toe  \\ \midrule

\multirow{8}{*}{$\calA'$} 
 & Category.Shoes & Category.Slippers & Closure.Hook.and.Loop & Closure.Lace.up  \\
 & Closure.Monk.Strap & Closure.Slip.On & Gender.Boys & Gender.Men  \\
 & Material.Corduroy & Material.Silk & Material.Wool & SubCategory.Boat.Shoes  \\
 & SubCategory.Crib.Shoes & SubCategory.Firstwalker & SubCategory.Loafers & SubCategory.Oxfords  \\
 & SubCategory.Prewalker & SubCategory.Slipper.Flats & SubCategory.Sneakers.and.Athletic.Shoes & ToeStyle.Algonquin \\ 
 & ToeStyle.Apron Toe & ToeStyle.Bicycle Toe & ToeStyle.Bump Toe & ToeStyle.Capped Toe  \\
 & ToeStyle.Medallion & ToeStyle.Moc Toe & ToeStyle.Snub Toe & ToeStyle.Square Toe  \\
 & ToeStyle.Wide Toe Box & ToeStyle.Wingtip \\
    \end{tabular}
    \label{tab:zappos-attributes}
\end{table}

\subsection{Hyperparameter settings}





For all the experiments performed in our paper, we have run grid search to tune both the meta-learning method-specific hyperparameters and the optimization hyperparameters for each meta-learning method. For the existing OOD FSL benchmarks, we found that the best hyperparameters are often the same as what was reported in the original paper. Additionally, the absolute performance and performance orders we report on OOD FSL benchmarks match with other works after the hyperparameter tuning -- indicating that we have been fair in representing each meta-learning method with its best hyperparameter setting. For our newly identified in-distribution benchmarks, we took care in tuning these hyperparameters for each method to ensure fairness of comparison.

In Table~\ref{tab:hyper-parameter-details} we list the optimization and other algorithm-specific hyperparameters for each meta-learning method, dataset pair.
For \textbf{optimization hyperparameters}, we describe
\begin{enumerate}[label=(\alph*),topsep=0pt,itemsep=-1ex,partopsep=1ex,parsep=1ex]
    \item the total number of epochs (each with 1000 iterations of gradient updates except for FEMNIST, Zappos-ID, and Zappos-OOD where each epoch depends on number of training tasks);
    \item step (staircase) learning rate schedule (lr: $e_1(r_1)-e_2(r_2)-\ldots-e_n(r_n)$ where $r_i$ is the value of the learning rate and $e_i$ is the epoch number at which $r_i$ is first set);
    \item the number of tasks in a minibatch to compute one gradient update (task batch size or task BS);
    \item for all experiments we use SGD optimizer (Nesterov Momentum 0.9).
\end{enumerate}
     
\textbf{Other meta-learning method specific hyperparameters}:
The scale-factor refers to a constant factor that is multiplied to the logits for each class, before passing them through softmax. In some cases these are fixed through training, and in others they are learnable. For other method-specific hyperparameters that we borrow directly from the original paper, we provide the references.
\begin{enumerate}[label=\roman*),topsep=0pt,itemsep=-1ex,partopsep=1ex,parsep=1ex]
    \item During meta-training, we perform the same \textbf{data augmentation} used in \citet{chen2019closer} for \minimod and used in \citet{lee2019meta} for \cifmod, \FCmod, \tieredmod. For FEMNIST, Zappos-ID and Zappos-OOD we do not perform any data augmentations.
    \item We use a \textbf{weight decay} of $5e-4$ for all datasets except for FEMNIST for which a weight decay of $1e-2$ is used to prevent overfitting.
    \item We use the Resnet-12 \textbf{backbone} for all our experiments except for FEMNIST which is made up of the relatively easier tasks of digit classification. For FEMNIST, we use a four layer Conv-64 model backbone.
\end{enumerate}

\textbf{Computational Resources.} For all experiments we use (at most) four NVIDIA GEFORCE GTX 1080Ti GPU cards. A single run of \pn, \ridge, \svm, and \fomaml on the \minimod dataset takes $\approx$ 12 hrs, 48 hrs, 48 hrs, and 72 hrs respectively. For experiments on Zappos and FEMNIST, except for \fomaml which takes about 24 hrs, all the other experiments take no more than 5 hrs to complete training.


 
 

\begin{table}
    \centering
    \small
    \caption{Hyperparameter details for different algorithms and datasets in Sections~\ref{sec:ID},~\ref{sec:OOD}.}
    \vspace{0.75em}
    \begin{tabular}{p{8em}|p{13em}|p{19em}}
        Alg / Dataset &  Optimization hyperparameters & Other hyperparameters\\\toprule
        
        \pn / \newline (\minimod, \cifmod, \tieredmod) &   60 Epochs \newline lr: 0(0.1)-20(6e-3)-40(1.2e-3)-50(2.4e-4) \newline task BS: 4(5-way), 1(>5-way) & scale-factor 10 \newline euclidean metric \cite{snell2017prototypical} \\ \hline
        
        \pn/ \newline \FCmod &  15 Epochs \newline lr: 0(0.1)-5(6e-3) \newline task BS: 4(5-way), 1(>5-way) & scale-factor 10 \newline euclidean metric \cite{snell2017prototypical} \\ \hline
        
        (\ridge, \svm)/ \newline (\minimod, \cifmod, \tieredmod) & 60 Epochs \newline lr: 0(0.1)-20(6e-3)-40(1.2e-3)-50(2.4e-4) \newline task BS: 8 (always 5 way) & learnable scale \newline other as in \citet{lee2019meta}
        \\ \hline
        
        (\ridge, \svm)/ \newline \FCmod &  30 Epochs \newline lr: 0(0.1)-20(6e-3) \newline task BS: 8 (always 5 way) & scale-factor 7 \newline other as in \citet{lee2019meta} \\ \hline
    
        \fomaml/ \newline (\cifmod, \FCmod) &  60 Epochs \newline lr: 0(0.01)-20(6e-3)-40(1.2e-3) \newline task BS: 4 (always 5 way) & scale-factor 1 \newline inner loop step size: 0.01 \newline inner loop steps: 5 (train), 20 (test) \newline other as in \citet{finn2017meta} \\ \hline
    
        \fomaml / \newline (\minimod, \tieredmod) &   70 Epochs \newline lr: 0(1e-2)-35(1e-3)-65(1e-4) \newline task BS: 4 (always 5 way) & scale-factor 1 \newline inner loop step size: 0.01 \newline inner loop steps: 5 (train), 20 (test) \newline other as in \citet{finn2017meta} \\ \hline
        
        (\pn, \ridge, \svm, \fomaml)/ \newline (Zappos-ID, Zappos-OOD)&  60 Epochs \newline lr: 0(0.1)-30(6e-3) \newline task BS: 4 & \pn: scale-factor 10, euclidean metric \cite{snell2017prototypical}  
        \newline \ridge: learnable scale
        \newline \svm: learnable scale
        \newline \fomaml: inner loop steps: 5 (train), 20 (test), other as in \citet{finn2017meta}
        \\ \hline
        
        (\pn, \ridge, \svm, \fomaml)/ \newline FEMNIST &  100 Epochs \newline lr: 0(1e-3)-50(1e-4) \newline task BS: 5 & \pn: scale-factor 10, euclidean metric \cite{snell2017prototypical}  
        \newline \ridge: learnable scale
        \newline \svm: learnable scale
        \newline \fomaml: inner loop steps: 5 (train), 20 (test), other as in \citet{finn2017meta}
        \\ \hline
        
        (\textsc{sb}, \textsc{mb})/\newline (\minimod, \cifmod, \tieredmod, \FCmod) &  100 Epochs \newline lr: 0(0.1)-90(1e-2) \newline batch size: 128 & \textsc{sb}, \textsc{mb}: We project features in to unit norm during meta-train and use euclidean metric with scale-factor 10 in meta-test  \cite{snell2017prototypical} \newline
        \textsc{mb} only: we further finetune (lr=1e-3) on the meta-learning objective for additional 30 epochs.
        \\ \hline 
    
        \textsc{ic}/ \newline (\cifmod, \minimod) &  100 Epochs \newline lr: 0(0.1)-90(1e-2) \newline batch size: 128 & Figure~\ref{fig:model-selection}(b) (\cifmod): For supervised learning image classification (IC) baseline we collect all the images belonging to the base classes in \cifmod and randomly split them into 80\%(train)-10\%(val)-10\%(test). The val and test rankings used for computing the Kendall coefficient ($\rho$) are obtained using the val and test splits.\newline Figure~\ref{fig:conclusion_inconsistency}(b) (\minimod): The supervised learning IC baseline is trained on 600 train images from each base class in \minimod. After training is completed, we identify two IC models that differ by $\epsilon=0.5\%$ in terms of their generalization performance over a test set made up of all the unused examples ($\sim$ $700$) from each base class. To test the frequency of conclusion flips, for each of the 100 comparison runs, a random test subset is sampled from this test set with 100 examples sampled from each class. The chosen IC model pair is then evaluated over this test subset and their performance difference is recorded for this comparison run. \\ \hline

    
    \end{tabular}
    \label{tab:hyper-parameter-details}
\end{table}

\clearpage

\section{Additional Results on Evaluating ID Performance}
\label{app:additional-id}


In this section, we first present additional results on the choice of a different meta-learning method or the usage of a different $(S,Q)$ sampling strategy (\fixml) for the same meta-learning method can lead to improvements in OOD performance at the cost of ID performance. Then, we show how reducing the number of training tasks is unlikely to change the performance order of meta-learning methods in the ID benchmark Zappos-ID --- even though by reducing the number of training tasks, the number of unseen attribute pairs at test time increases. Finally, we present additional results how the degree of train/test task distribution mismatch can impact the performance order of meta-learning methods.

\subsection{Additional results on ID/OOD Performance Tradeoffs}

In the main text (Section~\ref{sec:ID}, Figure~\ref{fig:id-ood-tradeoffs}), we have compared the BaseGen and NovelGen performance of the best validation snapshots from different meta-learning methods (\pn, \svm, \ridge, \fomaml) and supervised trained baselines (\textsc{sb}, \textsc{mb}) on the \minimod and \tieredmod datasets.
Here, we show how the performance order of the same set of methods \textit{can also flip} in the ID (BaseGen) and OOD (NovelGen) settings \textbf{on two additional datasets}: \cifmod and \FCmod (Figure~\ref{fig:additional-id-plots}(a),(b)). Additionally, in the main text, we have shown how switching the choice of the $(S, Q)$ sampling strategy to one with fixed support sets (\fixml \citet{setlur2020support}) can also lead to improvements in the OOD performance at the cost of ID performance on the \cifmod dataset. In Figure~\ref{fig:additional-id-plots}(c),(d) we provide further evidence of this performance tradeoff \textbf{on two more datasets} \minimod and \FCmod.

\begin{figure*}[!ht]
  \begin{minipage}{.24\textwidth}
    \setlength{\abovecaptionskip}{1pt} 
    \includegraphics[width=\linewidth]{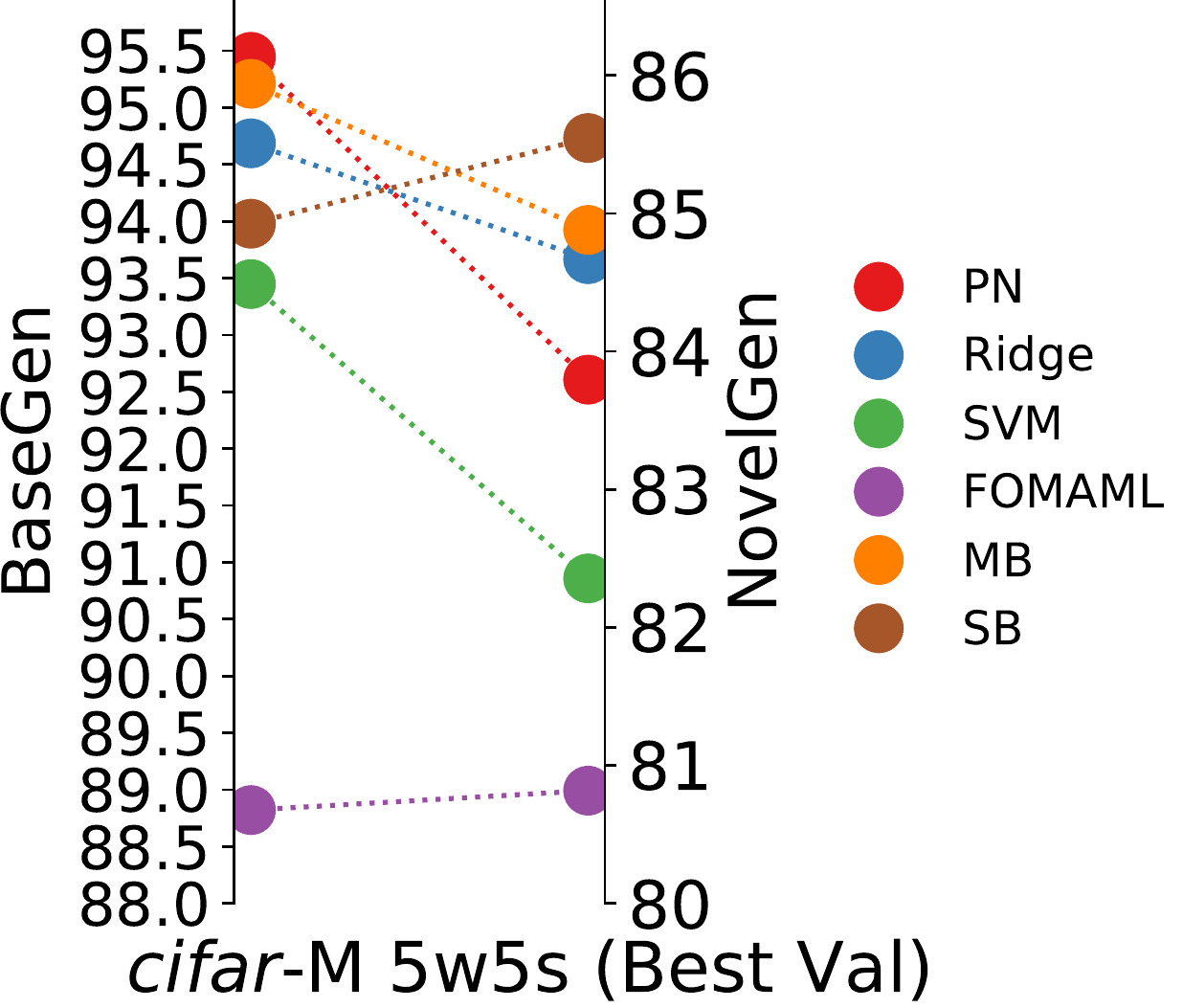}
    \caption*{(a)}
  \end{minipage}\hfill
  \begin{minipage}{.24\textwidth}
    \setlength{\abovecaptionskip}{1pt} 
    \includegraphics[width=\linewidth]{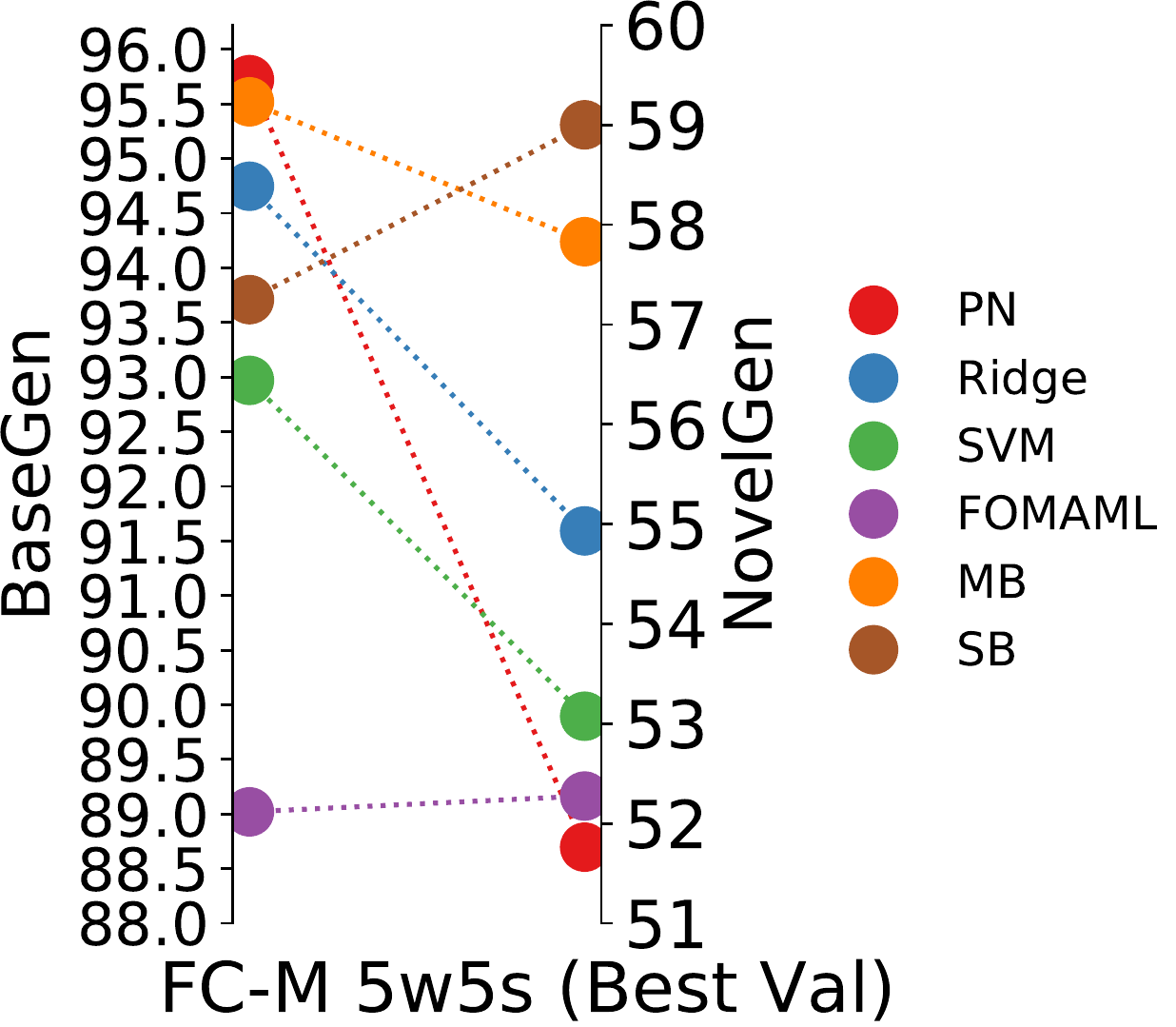}
    \caption*{(b)}
  \end{minipage}\hfill
  \begin{minipage}{.24\textwidth}
    \setlength{\abovecaptionskip}{1pt} 
    \includegraphics[width=\linewidth]{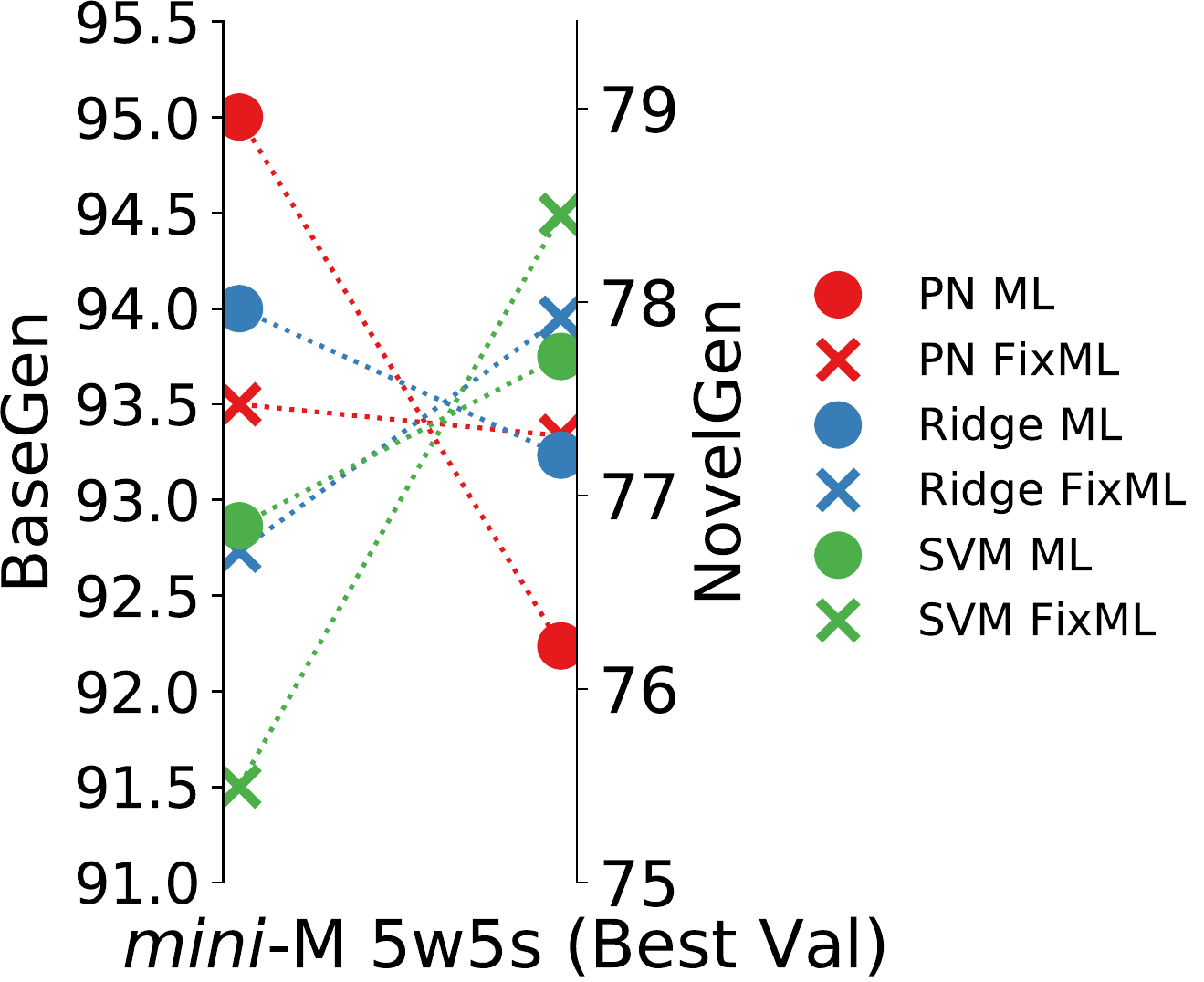}
    \caption*{(c)}
  \end{minipage}\hfill
  \begin{minipage}{.24\textwidth}
    \setlength{\abovecaptionskip}{1pt} 
    \includegraphics[width=\linewidth]{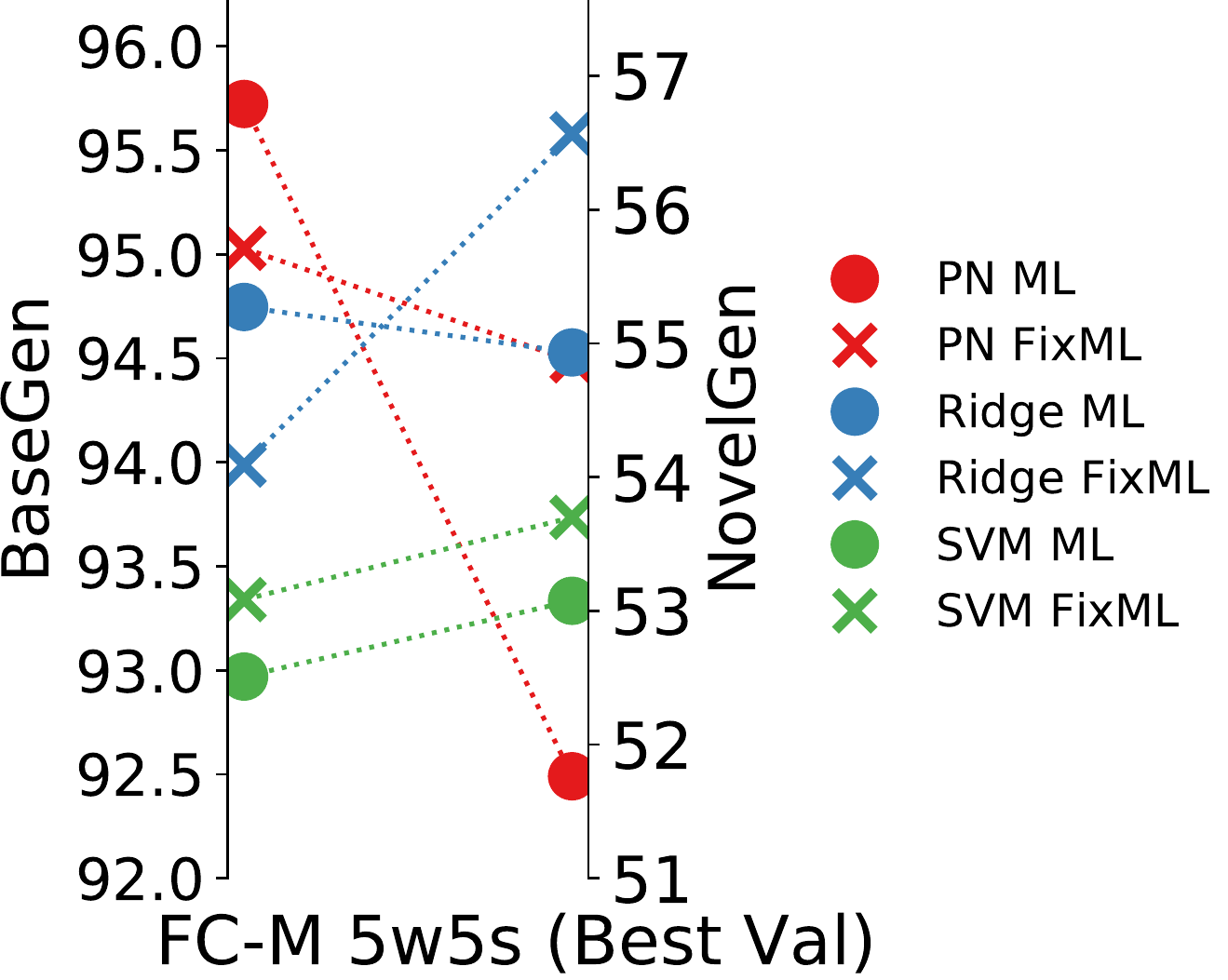}
    \caption*{(d)}
  \end{minipage} 
  \vspace{1em}
 \caption{BaseGen and NovelGen performance tradeoff (for best validation snapshots): over the choice of a set of four meta-learning and two supervised pre-training methods on \cifmod (a) and \FCmod (b); over the use of \fixml $(S, Q)$ generation strategy or not (\ml) with \svm, \ridge and \pn on \minimod in (c) and \FCmod in (d). \vspace{1em}}
 \label{fig:additional-id-plots}
\end{figure*}

\subsection{In-distribution Performance with Reduced Number of Training Tasks}

Now we analyze the impact of the number of training tasks on the performance order of meta-learning methods in ID scenario. For the 2w5s results on the Zappos-ID dataset in Table~\ref{tab:results-id} we used 1000 (or 50) training tasks and 25000 test tasks. Here, we also consider using 250 training tasks (while still using the same number of test tasks). 
As we have discussed in Section~\ref{sec:ID}, it is possible to encounter unseen classes (attribute pairs in the case of Zappos) at test time even while evaluating a meta-learning method in the ID scenario. When we reduce the number of training tasks, we observe more unseen attribute pairs at test time. For 250 train tasks we observe 153 and for 50 train tasks we observe 269 unseen attribute pairs at test time.

\begin{table}[!ht]
    \setlength\extrarowheight{2pt}
    \centering
    \caption{We analyze the 2w5s performance order of \pn, \svm, \ridge and \fomaml on the Zappos-ID dataset with reduced number (50, 250) of train tasks and compare the performance order (ranking in parentheses) observed with the larger set of 1000 train tasks.}
    \vspace{1em}
    \begin{tabular}{c|c|c||c} 
    Methods / \# Train tasks  & 50 & 250 & 1000 \\ \hline 
    PN & $^{\mathbf{(1)}}77.67 \pm 0.17\%$ &$^{\mathbf{(1)}}81.67 \pm 0.16\%$  & $^{\mathbf{(1)}}86.58 \pm 0.15\%$ \\
    Ridge & $^{\mathbf{(2)}}74.75 \pm 0.16\%$ & $^{\mathbf{(2)}}80.84 \pm 0.15\%$ & $^{\mathbf{(2)}}85.56  \pm  0.16\%$ \\
    SVM & $^{\mathbf{(3)}}74.06 \pm 0.17\%$ & $^{\mathbf{(3)}}80.15 \pm 0.17\%$ & $^{\mathbf{(3)}}85.12 \pm 0.16\%$ \\
    FO-MAML & $^{\mathbf{(4)}}69.85 \pm 0.18\%$ & $^{\mathbf{(4)}}73.20 \pm 0.16\%$  & $^{\mathbf{(4)}}80.14 \pm 0.15\%$ \\ \bottomrule
    \end{tabular}
    \label{tab:small-zappos}
\end{table}

In Table~\ref{tab:small-zappos} we can see that even with reduced training tasks and more unseen attribute pairs at test time the performance order of \pn, \ridge, \svm and \fomaml is retained. Note that the same order is observed on the other ID benchmark FEMNIST. This also matches the BaseGen performance order obtained after doing ID evaluations on modified FSL benchmarks. This result further confirms that the performance evaluations done on ID datasets are much more consistent than OOD datasets.

\subsection{Degree of Mismatch between Train \texorpdfstring{$(\Prob_{\calC_B})$}{PCB} and Test \texorpdfstring{$(\Prob_{\calC_N})$}{PCN} Distributions}
\label{app:subsec:degree-of-mismatch}

In Section~\ref{sec:ID} we notice that the performance order of meta-learning methods in the ID scenario (Zappos-ID) is quite different from that of the OOD FSL benchmarks (\eg \mini). We believe this is mainly because of the OOD nature of FSL benchmarks, which we also proof formally in Appendix~\ref{app:formal-analysis}.  
Moreover, since the type/degree of the mismatch between training and test distributions can vary for different FSL benchmarks, the performance order of popular methods is not as consistent as we would like them to be on these OOD benchmarks (see Section~\ref{subsec:inconsistencies-in-OOD}).

\textbf{Range of test task distributions.} To further analyze the impact of the degree of distribution mismatch on the performance order, for each FSL benchmark we construct \textbf{a range of new task distributions that are increasingly dissimilar to the train task distribution} $\Prob_{\calC_B}$ and \textbf{similar to the test task distribution} $\Prob_{\calC_N}$. 
We denote this set of task distributions as $\{\pbnlambda, \lambda \in [0,1]\}$. 
When sampling an $n$-way task from  $\pbnlambda$, instead of performing the first step of sampling outlined in Table~\ref{tab:sampling-steps-table}, we sample the class tuple $\bc = (c_1, \ldots, c_n) \in [(\calC_B \cup \calC_N)^{n}]$ where $c_i$'s are sequentially sampled non-repeatedly from $\calC_N$ uniformly with probability $\lambda$ and from $\calC_B$ with probability $1-\lambda$. The second step of sampling $(S, Q)$ from the task distribution $\calT_{\bc}$ is done in the usual way. It is clear that $\Prob_{\scriptscriptstyle \textrm{BN}}(0)=\Prob_{\calC_B}$ and $\Prob_{\scriptscriptstyle \textrm{BN}}(1)= \Prob_{\calC_N}$. In addition, when $\lambda = |\calC_N|/(|\calC_N| + |\calC_B|)$, $\Prob_{\scriptscriptstyle \textrm{BN}}(\lambda)=\Prob_{\calC_B \cup \calC_N}$ which describes a continual learning setting where the evaluation task is made up of classes uniformly sampled from the union of base and novel classes.
Notice that the task distribution $\pbnlambda$ is not the same as the mixture distribution $\lambda \cdot \Prob_{\calC_N} + (1 - \lambda) \cdot\Prob_{\calC_B}$ because a single task from $\pbnlambda$ can have both classes from $\calC_N$ and $\calC_B$. With this set of new task distributions defined, we evaluate our learned algorithm snapshot for each meta-learning method not only over the training ($\Prob_{\calC_B}$) and the test ($\Prob_{\calC_N}$) task distribution but also over distributions from this interpolation set. We plot the performances in  Figure~\ref{fig:interpolation-plots}.

\begin{figure*}[h]
  \begin{minipage}{.24\textwidth}
    \setlength{\abovecaptionskip}{1pt} 
    \includegraphics[width=\linewidth]{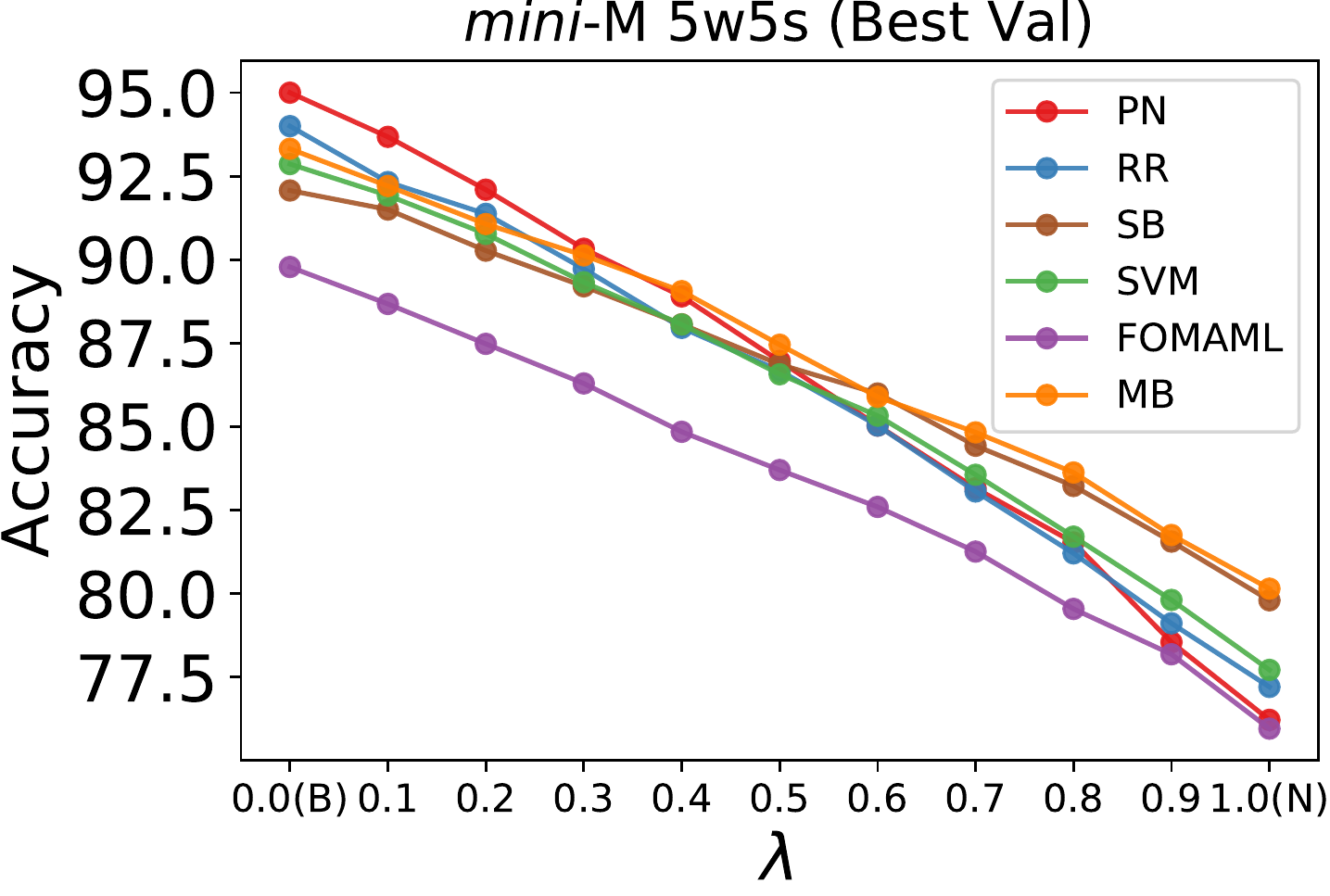}
    \caption*{(a)}
  \end{minipage}\hfill
  \begin{minipage}{.24\textwidth}
    \setlength{\abovecaptionskip}{1pt} 
    \includegraphics[width=\linewidth]{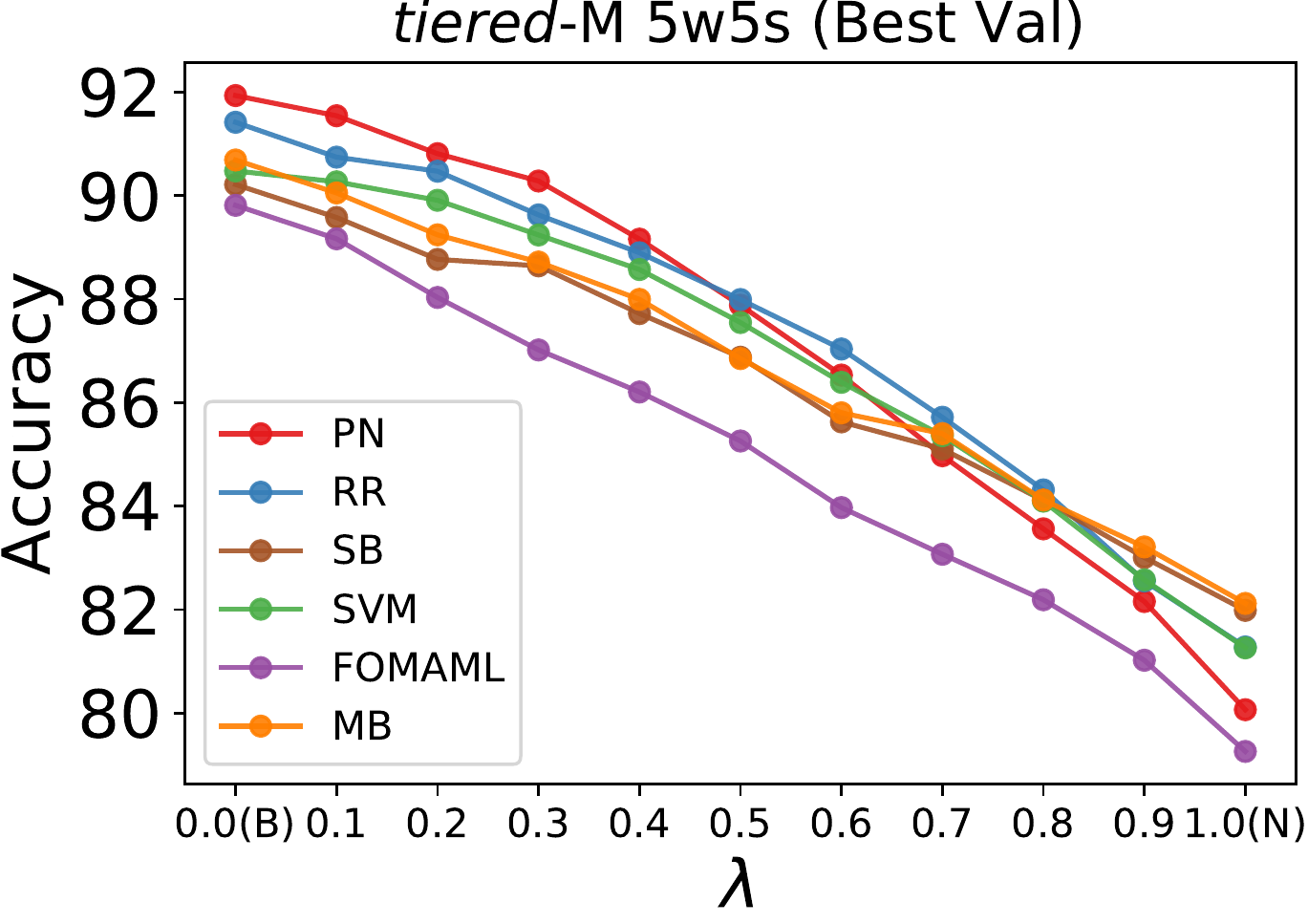}
     \caption*{(b)}
  \end{minipage}\hfill
  \begin{minipage}{.24\textwidth}
    \setlength{\abovecaptionskip}{1pt} 
    \includegraphics[width=\linewidth]{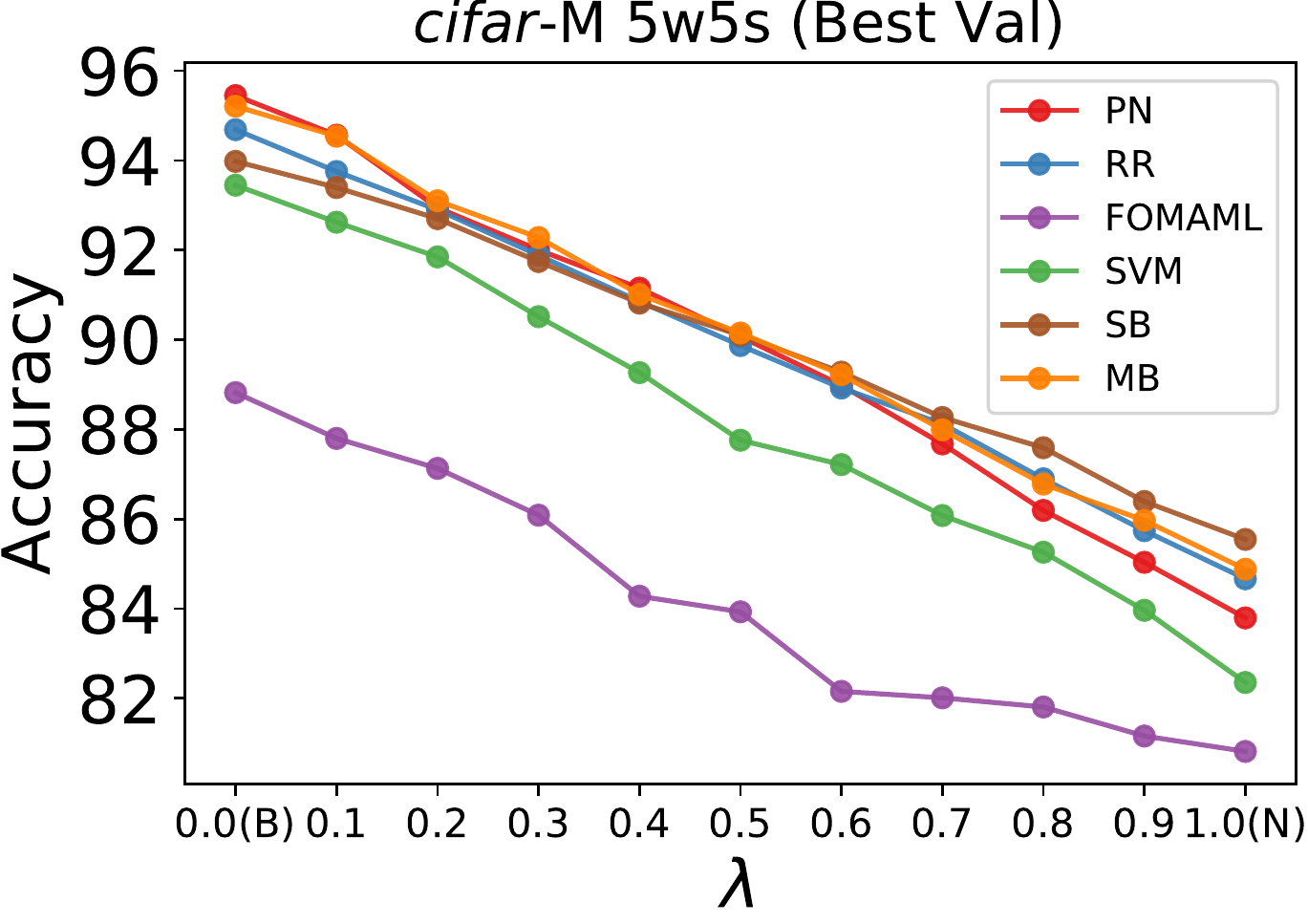}
    \caption*{(c)}
  \end{minipage}\hfill
  \begin{minipage}{.24\textwidth}
    \setlength{\abovecaptionskip}{1pt} 
    \includegraphics[width=\linewidth]{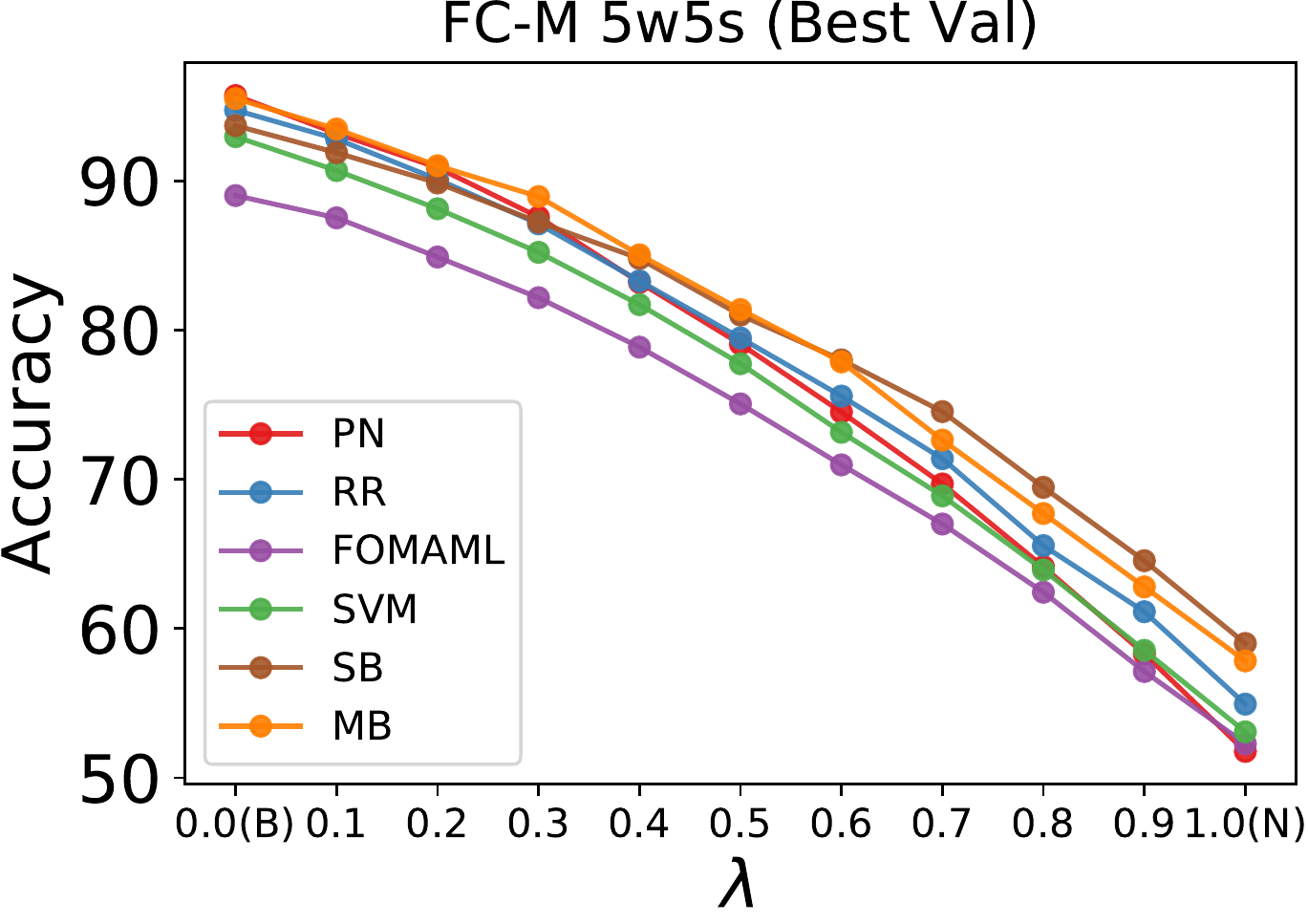}
\caption*{(d)}
  \end{minipage}
  \vspace{1em}
 \caption{Comparison of \pn, \svm, \ridge, \fomaml, \textsc{sb} and \textsc{mb}'s performance (best validation snapshots) on the set of distributions $\{\pbnlambda\}$ for (a) \minimod, (b) \tieredmod, (c) \cifmod, and (d) \FCmod datasets.\vspace{1em}}
 \label{fig:interpolation-plots}
\end{figure*}

\textbf{Performance order depends on degree of mismatch.} In Figure~\ref{fig:interpolation-plots} we first notice that the performance drops in a monotonically non-increasing way as $\lambda$ increases for each dataset/method combination. More importantly, we note that the performance order of the methods depends on the degree of mismatch. If the test task distribution is very similar to the train task distribution, then the performance order is mostly retained as the ID performance order (\eg for $\lambda \le 0.2$ in Figure~\ref{fig:interpolation-plots}(a)). Also in most cases the lines don't cross each other more than once, which indicates that if the degree of mismatch crosses a certain threshold then it is unlikely for the performance order to switch again for a given pair of methods. However, without seeing any test tasks during meta-training, it is difficult to know the degree of training and test task distribution mismatch, which makes it more difficult to predict when a given method will start performing better/worse. For example, the performance order changes at a lower value of $\lambda$ for \mini and \cif, as compared to \tiered.


\section{Additional Discussion and Results on OOD Evaluation}
\label{app:additional-ood}

\subsection{Simplified Example of Snapshot Selection and Hyperparameter Selection}
\label{app:subsec:snapshot-hyperparameter-selection-example}
To further explain our existing definition of hyperparameter and snapshot selection in Section~\ref{subsec:model-selection}, we accompany our original definitions with a concrete simplified scenario: suppose we want to train Prototypical Network (\pn) with two hyperparameter configs: training with learning rate 1e-3 vs. with learning rate 1e-4 (with all other hyperparameters the same).

\textbf{[Snapshot Selection]} Training under each hyperparameter config would generate a sequence of algorithm snapshots (most often with one snapshot saved after each training epoch). The term snapshot selection refers to the procedure of choosing one snapshot from each hyperparameter config: one from the 1e-3 learning-rate optimized \pn algorithm snapshot trajectory, and one from the 1e-4 learning-rate optimized \pn algorithm snapshot trajectory. There can be multiple strategies for snapshot selection, for example, picking \textbf{1)} the last snapshot at the end of training; \textbf{2)} the snapshot that has the lowest training loss; \textbf{3)} the snapshot that has the best BaseGen performance; or \textbf{4)} the snapshot that has the best ValGen performance.

\textbf{[Hyperparameter Selection]} Once an algorithm snapshot is chosen for each considered hyperparameter config, we need to decide which hyperparameter config’s (lr 1e-3 or lr 1e-4) selected snapshot to choose to be evaluated on the test task distribution with its performance recorded as the meta-learning method \textsc{pn}’s performance. The procedure of deciding which hyperparameter config’s snapshot to choose is called the hyperparameter selection problem. Similar to snapshot selection, there can be multiple strategies for hyperparameter selection: e.g., choosing the hyperparameter config whose selected snapshot has the best BaseGen or the best ValGen performance.

\subsection{Distinction between Snapshot Selection and Early Stopping}
\label{app:subsec:early-stopping}
\textit{Early stopping is not the same as what we mean by snapshot selection in our paper}. In standard supervised learning, \textit{early stopping} involves keeping track of the performance over an \iid validation dataset and stopping training after the validation performance starts to consistently deteriorate. It mainly serves as an approach to avoid overfitting and to save unnecessary computations if one believes further training would never improve the test performance. In terms of deciding which snapshot to choose, early stopping is often equivalent to the strategy of selecting the snapshot that has the highest validation accuracy. In contrast, in our paper, the term \textit{snapshot selection} refers to the \textbf{general problem} of deciding which snapshot to select for a given training run, \textbf{as opposed to a specific choice of selection strategy}. As we have explained in Section~\ref{subsubsec:snapshot-selection-valgen}, the strategy of picking the snapshot with the best meta-validation performance (early stopping) might not be the best strategy in the OOD scenario. There exist other snapshot selection strategies (different from early stopping), such as picking the snapshot with the lowest training loss or the best BaseGen performance.

\subsection{Differences in NovelGen Performance When Using Best BaseGen vs. ValGen for Snapshot Selection}
\label{app:subsec:basegen-vs-valgen}

In Section~\ref{subsec:model-selection} we compare different model selection strategies in the out-of-distribution scenario where we show that in some cases BaseGen performance can track NovelGen (Figure~\ref{fig:model-selection}(c)) while in other cases the ValGen performance may be better correlated with NovelGen (Figure~\ref{fig:model-selection}(d)).
In  Table~\ref{tab:basegen-vs-valgen} we present NovelGen (test) performance results on two different datasets when the snapshot is chosen using the best BaseGen snapshot (row 1) vs the best ValGen snapshot (row 2), and compare them against the best possible NovelGen across all snapshots (row 3). Here we notice that there can be a \textbf{significant difference in the selected snapshot’s NovelGen performance if we use one selection strategy instead of another} and there isn't a single snapshot selection strategy that yields the best results for multiple OOD scenarios.

\begin{table}[!h]
    \centering
    \caption{NovelGen performance for i) \pn trained on \tieredmod; and ii) \svm trained on \cifmod evaluated using the snapshot chosen with best BaseGen/ValGen/NovelGen performance.}
    \vspace{0.25em}
    \begin{tabular}{ccc}
        Snapshot selection strategy & PN on \tieredmod (Figure~\ref{fig:model-selection}(c)) & SVM on \cifmod (Figure~\ref{fig:model-selection}(d)) \\ \toprule
        Best BaseGen & $80.45 \%$ & $80.49 \%$ \\
        Best ValGen & $80.05 \%$ & $82.65 \%$ \\
        Best NovelGen & $80.45 \%$ & $82.79 \%$ \\ \bottomrule
    \end{tabular}
    \label{tab:basegen-vs-valgen}
\end{table}

\subsection{What to Consider While Designing Model Selection and Comparison Strategies for Both ID and OOD Performance?}
\label{app:subsec:model-selection-ID-OOD}

In Section~\ref{subsec:model-selection} we discuss in detail the implications of different model selection strategies on the final NovelGen performance of the selected snapshot specific to current OOD FSL benchmarks. In more generic settings, it is likely that one may wish to design model selection methods when one cares about both ID and OOD performance. In such a case we believe that the optimal model selection and comparison strategy would depend on: i) the final comparison metric; ii) the type of task access for model selection.

\begin{enumerate}[label=\roman*),topsep=0pt,itemsep=-1ex,partopsep=1ex,parsep=1ex, leftmargin=15pt]
\item \textbf{Method comparison metric}: We first need to define a metric for the final evaluation of a meta-learning method’s selected algorithm snapshot. One way is to individually evaluate ID and OOD performance and record it as a 2-tuple. In this case, a meta-learning method is said to outperform another only if its selected snapshot is better in each component of the 2-tuple. In this case, it is very possible that there does not exist a meta-learning method that clearly outperforms all others in this metric as we have seen from Figure 1(a)(b). Another way is to evaluate the performance on a mixture task distribution, with the training and OOD test distributions weighted by fixed probability weights (as we do in Appendix~\ref{app:subsec:degree-of-mismatch}). As we have seen in Appendix~\ref{app:subsec:degree-of-mismatch} (Figure~\ref{fig:interpolation-plots}), different probability weighting of the two distributions can result in different conclusions of which meta-learning method works the best.

\item \textbf{Task access during model selection}: We also need to specify what type of task samples are available during model selection. While it is reasonable to assume there are additional fresh \iid samples from the training distribution (e.g. by holdout training set), it depends on the specific application to know whether there exist task samples from the OOD task distribution. Different scenarios may arise based on this: i) In federated learning applications, one might be allowed to evaluate meta-learned algorithm snapshots over a small sample of users (tasks) from the OOD user population before deployment; or ii) In case one does not have such OOD task samples, a proxy distribution (e.g. validation task distribution in FSL benchmarks) may still be available and \iid task samples drawn from it could be used for model selection. However, its utility would depend on how similar it is to the actual OOD distribution and we have seen examples in Figure~\ref{fig:model-selection}(a) that using samples from such proxy task distributions might also be suboptimal.
\end{enumerate}

We hope that our work advocates for more discussion and development of model comparison metrics and selection strategies while taking into these considerations described above.

\subsection{Hyperparameter Selection Strategies}
\label{app:subsec:additional-ood-hp}

In Section~\ref{subsec:model-selection} we discuss the distinction between snapshot selection and hyperparameter selection and how the former is called as a subroutine while determining the best snapshot to represent a specific hyperparameter configuration. As snapshot selection is a ubiquitous problem we focus on analyzing it and exploring alternative strategies (in the context of meta-learning) in the main paper. Motivated by the work of \citet{gulrajani2020search}, we now discuss some ways of performing hyperparameter selection specifically for settings similar to the current FSL benchmarks where the few-shot classification tasks are determined by a class tuple. 

Because the current FSL benchmarks have set aside a specific validation set of classes, it would appear that using tasks generated by these classes is the only option for hyperparameter selection. However, as mentioned in \cite{gulrajani2020search}, there are also other alternatives.

\textbf{Cross validated hyperparameter selection}. Instead of using a single set of validation classes, one could rely on cross validation. Here, for a given hyperparameter configuration, one can train multiple times, where each training run is done on tasks generated by a different subset of the training (base) classes (or the union of base and val classes) and the proxy performance is calculated on tasks generated by the remaining classes not used in training. Finally, the performance is averaged over all runs for the given hyperparameter configuration and the hyperparameter with the highest performance would be chosen. 

However, this approach would still require performing snapshot selection for each run, thus it remains unclear how to best perform snapshot selection in this case. Despite this, cross-validated hyperparameter evaluation could potentially be more reliable than using a single set of validation classes for hyperparameter selection (as done on the current FSL benchmarks) but it would also be more computationally expensive and we leave further investigation of this hyperparameter selection approach as future work.

\textbf{Allowing restricted oracle evaluation over the test task distribution.} Instead of completely forbidding access to the test task distribution, one can allow a limited number of test task distribution performance evaluations for a given meta-learning method. In this case it becomes the responsibility of designer of the meta-learning method to decide how to best distribute this fixed number of evaluations wisely over different hyperparameter and snapshot choices.



\subsection{Variance of performance \texorpdfstring{$A(\calC_N)$}{ACN} over randomly sampled \texorpdfstring{$\calC_N$}{CN}}
\label{app:subsec:additional-ood-var}

In Section~\ref{subsec:inconsistencies-in-OOD} Inconsistency example 1, we have shown how limited number of novel classes in the evaluation can lead to a high chance of conclusion flips. This high degree of unreliability stems from the variance of the performance estimator $A(\calC_N)$ which only uses a subset of the larger underlying class set $\calC_L$. 
In Figure~\ref{fig:acn-plots} we plot the histogram of the random variable $A(\calC_N)$ randomized over the choice of novel classes $\calC_N$. We notice that the performance standard deviation is $2.49\%$ on \mini and $3.1\%$ on \tiered. The standard deviation on \tiered is higher since the number of underlying classes $|\calC_L|$ is larger in \tiered ($=552$). When the variance of the novel accuracy is as high as what we have observed here, it becomes very hard to clearly determine the better meta-learning method. To alleviate this, we provide some actionable suggestions like choosing benchmarks with more base classes during training and more novel classes during evaluation (see Section~\ref{subsec:inconsistencies-in-OOD}). 

\begin{figure*}[h!]
    \hfill
  \begin{minipage}{.35\textwidth}
    \setlength{\abovecaptionskip}{1pt} 
    \includegraphics[width=\linewidth]{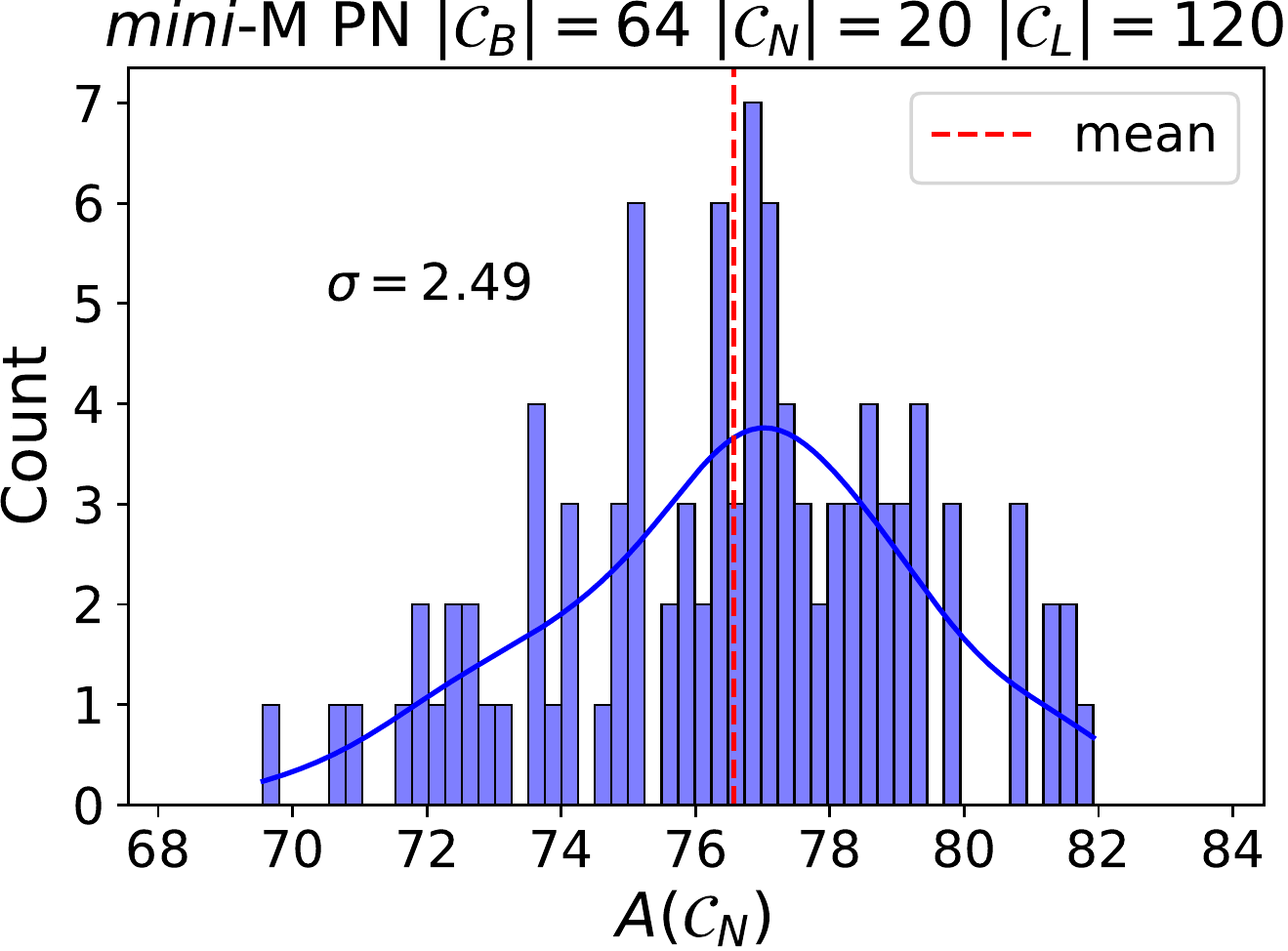}
    \caption*{(a)}
  \end{minipage}
  \begin{minipage}{.35\textwidth}
    \setlength{\abovecaptionskip}{1pt} 
    \includegraphics[width=\linewidth]{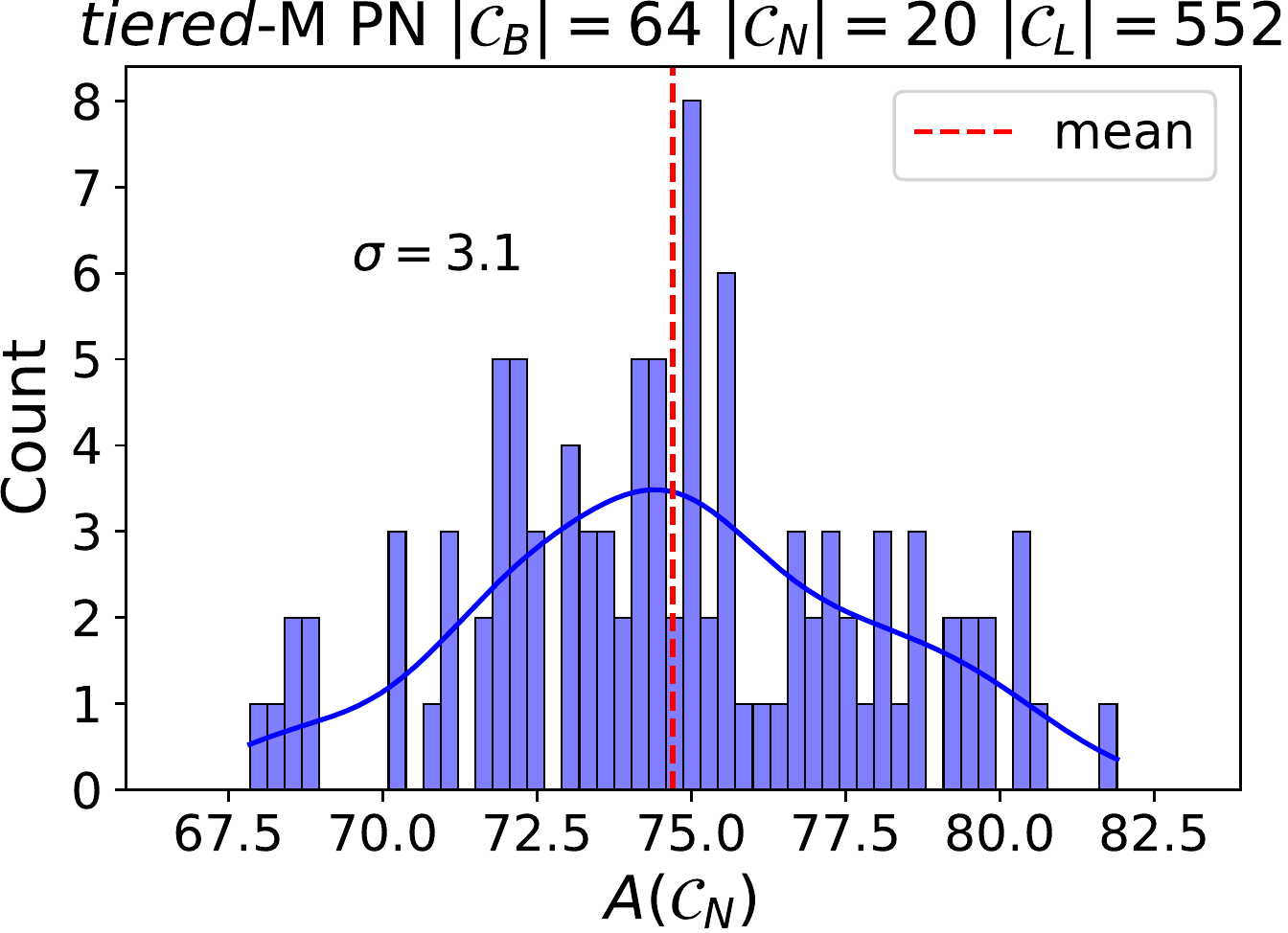}
     \caption*{(b)}
  \end{minipage}\hfill
  \vspace{1em}
 \caption{Histogram plots (over 100 runs) of the OOD accuracy $A(C_N)$ for  (a) a \pn trained on \mini and evaluated on random 20 out of 120 novel classes; and (b) a \pn trained on 64 \tiered base-classes (see Section~\ref{subsec:inconsistencies-in-OOD}) and evaluated on random 20 out of 552 novel classes.}
 \label{fig:acn-plots}
\end{figure*}

\end{document}